\newcommand{\Eta}{\mathrm{H}}
\newcommand{\Zeta}{\mathrm{Z}}
\newcommand{\nonl}{\renewcommand{\nl}{\let\nl\oldnl}}
\theoremstyle{plain}
\newtheorem{theorem}{Theorem}[section]
\newtheorem{proposition}[theorem]{Proposition}
\newtheorem{lemma}[theorem]{Lemma}
\theoremstyle{definition}
\newtheorem{definition}[theorem]{Definition}
\theoremstyle{remark}
\title{Hierarchical Self-Attention: Generalizing Neural Attention Mechanics to Multi-Scale Problems}
\author{%
  Saeed Amizadeh,~Sara Abdali,~Yinheng Li, and Kazuhito Koishida \\
  Microsoft\\
  Redmond, WA 98052 \\
  \texttt{\{saamizad, saraabdali, yinhengli, kazukoi\}@microsoft.com} \\
  % examples of more authors
  % \And
  % Coauthor \\
  % Affiliation \\
  % Address \\
  % \texttt{email} \\
  % \AND
  % Coauthor \\
  % Affiliation \\
  % Address \\
  % \texttt{email} \\
  % \And
  % Coauthor \\
  % Affiliation \\
  % Address \\
  % \texttt{email} \\
  % \And
  % Coauthor \\
  % Affiliation \\
  % Address \\
  % \texttt{email} \\
}
\begin{document}

\maketitle

\begin{abstract}
Transformers and their attention mechanism have been revolutionary in the field of Machine Learning. While originally proposed for the language data, they quickly found their way to the image, video, graph, etc. data modalities with various signal geometries. Despite this versatility, generalizing the attention mechanism to scenarios where data is presented at different scales from potentially different modalities is not straightforward. The attempts to incorporate hierarchy and multi-modality within transformers are largely based on ad hoc heuristics, which are not seamlessly generalizable to similar problems with potentially different structures.  
To address this problem, in this paper, we take a fundamentally different approach: we first propose a mathematical construct to represent multi-modal, multi-scale data. We then mathematically \textit{derive} the neural attention mechanics for the proposed construct from the first principle of \textit{entropy minimization}. We show that the derived formulation is \textit{optimal} in the sense of being the closest to the standard Softmax attention while incorporating the inductive biases originating from the hierarchical/geometric information of the problem. We further propose an efficient algorithm based on dynamic programming to compute our derived attention mechanism. By incorporating it within transformers, we show that the proposed hierarchical attention mechanism not only can be employed to train transformer models in hierarchical/multi-modal settings from scratch, but it can also be used to inject hierarchical information into classical, pre-trained transformer models \textit{post training}, resulting in more efficient models in zero-shot manner.
\end{abstract}

\section{Introduction} \label{sec:introduction}

% During the last decade, successful Deep Learning frameworks have consistently advanced the idea that in order to learn a good representation, not only the content of the input information matters, but the way the information is arranged, aka its \textit{geometry}, also plays a critical role. This is primarily because the geometry of a given problem determines the physical symmetries and invariances which, in turn, make a huge impact on the statistical complexity of the learning task in that domain. 
% To this end, the field of Geometric Deep Learning \citep{bronstein2021geometric} explicitly formalizes the fundamental principles of incorporating data geometry within the representation learning framework. These high-level principles typically materialize through \textit{architectural priors} resulting in effective, specialized neural architectures for different domains (\textit{e.g.} CNNs, RNNs, GNNs, etc.)

% Despite their success, these architectures, however, are not often generalizable to other domains.
% One exception to this general observation is the transformer architecture and its attention mechanism \citep{vaswani2017attention}.
The field of Deep Learning has recently experienced a spectacular breakthrough with the rise of Large Language Models (LLMs). 
It is no secret that this success is largely owed to the Transformer architecture \citep{vaswani2017attention} and its self-attention mechanism.
Although they were originally proposed to work with language \citep{devlin2018bert, liu2019roberta, beltagy2020longformer, brown2020language}, transformers have found their way to deal with images \citep{dosovitskiy2020image, yu2021vector, touvron2021training}, video \citep{arnab2021vivit, neimark2021video, bertasius2021space, li2022uniformer}, audio \citep{gong2021ast, verma2021audio, koutini2021efficient, borsos2023audiolm}, graphs \citep{yun2019graph, min2022transformer, rampavsek2022recipe, rong2020self}, groups \citep{hutchinson2021lietransformer, tai2019equivariant}, manifolds \citep{he2021gauge} and point clouds \citep{guo2021pct, zhao2021point} \textit{without} significantly altering their basic neural attention mechanism. This is mainly due to the fact that, unlike many other architectures, transformers incorporate data geometry not by architectural priors but by explicit, black-box, position embedding functions, which can be easily replaced from one domain to another. 

Despite this versatility, information quite often comes in different modalities and at different scales. In terms of geometry, this means that we deal with problems where each datapoint may occupy multiple, mutually-inconsistent geometries at potentially different scales. This is indeed challenging, even for transformers! To this end, various novel (often heuristic) neural architectures have been proposed to deal with multi-modal ~\citep{lu2019vilbert, kim2021vilt, zhu2021long, truong2021right, deshmukh2023pengi, zhang2020text, huang2020pixel, prakash2021multi} and hierarchical data ~\citep{liu2021swin, zhang2022nested, pappagari2019hierarchical, wu2021hi, cao2021hift, zhao2022hierarchical, zhu2021h}.

Aside from the heuristic-based nature of many of these empirical architectures, they quite often suffer from a more practical dilemma. On one hand, many such frameworks tend to partially discard geometrical or hierarchical information depriving the learning task from valuable domain knowledge which can significantly reduce the model's statistical complexity. On the other hand, by incorporating the full geometrical knowledge of different modalities and their hierarchical structure within these heuristic frameworks, we often end up with highly problem-specific architectures that are hardly generalizable to other similar problems.
To address this challenge in a unified and principled way, in this work, we take a radically different approach:

\begin{itemize}[leftmargin=*]
\item Instead of coming up with yet another heuristic neural architecture right off the bat, we first propose a mathematical construct called \textit{nested signal} to formally represent multi-geometry, hierarchical information. As we show, the proposed formalism enables us to coherently represent different geometrical domains at different scales while maintaining its generality across different problems.
\item In order to define mathematically-sound neural operations on nested signals, we turn to the attention mechanism. First, we show that the standard Softmax self-attention \citep{vaswani2017attention} can be mathematically derived from the principle of \textit{entropy minimization}. Then by generalizing this principle to nested signals, we derive the \textit{hierarchical self-attention (HSA)} neural mechanics which is the generalization of the Softmax attention for nested signals.
\item We further show that the attention weights derived from the HSA are \textit{optimal} in the sense of being the closest to flat Softmax attention weights in terms the total KL-divergence, while at the same time adhering to the hierarchical structure of the data.
\item Next, we propose an efficient algorithm based on \textit{dynamic programming} to calculate the HSA, that is provably faster than its direct evaluation. 
By implementing HSA within the transformer architecture, we empirically show that we are able to train models that can seamlessly incorporate the hierarchical/multi-modal domain knowledge to arrive at better and more efficient transformers.
\item Last but not least, we show that HSA can further replace the standard Softmax self-attention operation in pre-trained transformers and significantly reduce the number of self-attention FLOPs while incurring minimal Accuracy drop, in an entirely \textit{zero-shot} manner.
\end{itemize}
\vspace{-0.1in}
\section{Representing Hierarchical Data}\label{sec:signal}
% The vast majority of previous work in the literature for handling multi-modal as well as hierarchical data directly focus on coming up with (heuristic) neural architectures that in some way fuse various data modalities and data scales into a common representation. In this work, however, we take a fundamentally different approach. Instead of proposing yet another heuristic architecture right off the bat, we first build a versatile mathematical framework to model multi-modal, multi-scale data in a unified manner. Then by viewing the proposed construct as a statistical mechanical system, we \textit{derive} our final neural mechanics on this construct from the first principle of \textit{entropy minimization}.   

% \subsection{Background}

In Geometric Deep Learning, a \textit{signal} $x$ is defined as the mapping $x:\Omega\rightarrow\mathcal{C}$, where the set $\Omega$ is the \textit{domain} of the signal and $\mathcal{C}$ is a vector space, typically $\mathbb{R}^d$ with $d$ being the \textit{channel} dimension. For example, an RGB image is a signal where $\Omega$ is the $2$D grid and $\mathcal{C}=\mathbb{R}^3$, \textit{i.e.} the RGB color space. Similarly, text can be seen as a signal with $\Omega$ being the $1$D grid and $\mathcal{C}$ a word embedding space. More niche applications in Geometric Deep Learning \cite{bronstein2021geometric} extend the notion of signals to the domain of graphs, gauges, manifolds, etc. by defining the appropriate structure for $\Omega$. We refer to the set of all such possible domains as $\mathcal{D}$.
The elements $\Omega\in\mathcal{D}$ are not necessarily vector spaces (\textit{e.g.} 2D grid). In order to numerically handle these spaces, we define a special signal $\varepsilon_{\Omega}:\Omega\rightarrow\mathbb{R}^c$ for each $\Omega\in\mathcal{D}$ which maps the elements of each domain in $\mathcal{D}$ to $\mathbb{R}^c$; we refer to this special signal as the \textit{position embedding}. Given $\varepsilon_{\Omega}$, each signal $x$ defined on $\Omega$ is seen as $x:\varepsilon_{\Omega}(\Omega)\rightarrow\mathcal{C}$. In Appendix \ref{appendix:signal-generalization}, we generalize the notion of signal to encompass traditional tabular features.

In this section, we introduce the notion of \textit{nested signals} which is the key modeling tool to represent multi-modal, hierarchical data. To this end, we first define the set of all \textit{simple} signals $\mathcal{S}$ as the set of all possible signals defined on all possible domains; that is, $\mathcal{S}=\{x:\Omega\rightarrow\mathcal{C} \mid \Omega \in \mathcal{D}\}$. The signals defined on different domains may have different channel dimensions; to make the channel dimension uniform across different domains, we zero-pad the lower dimensional signals to the maximum channel dimensionality $d$ across different domains, such that each element of $\mathcal{S}$ has the same channel dimension $d$ regardless of its domain. 
% Given the definition of simple signals, we can now define nested signals.

\begin{definition}[Nested Signal]
The set of $d$-dimensional nested signals up to depth $\ell$, $\mathcal{N}_\ell$, is recursively defined as $\mathcal{N}_\ell=\big\{x:\Omega\rightarrow\mathcal{U} \mid \Omega \in \mathcal{D}, \mathcal{U}\in\{\mathcal{N}_{\ell-1}, \mathbb{R}^d\}\big\}$, where $\mathcal{N}_0=\mathbb{R}^d$. Also, define $\mathcal{N}=\mathcal{N}_\ell$ as $\ell\rightarrow\infty$; each element $x\in\mathcal{N}$ is then referred to as a \textit{nested signal}. The top-level domain $\Omega \in \mathcal{D}$ of a nested signal $x$, denoted by $r(x)$. 
\vspace{-0.1in}
\end{definition}
For example, a website is a nested signal where at the top level, we have webpages defined on the nodes of a graph domain representing the link structure between the webpages. Each webpage is in turn another nested signal where at its top level we have an unordered set of textboxes and images constituting the page. Going one level further, each textbox or image is a (simple) signal assigning word embeddings or pixel values to the nodes of $1$D or $2$D grid domains, respectively. 
% On the other end, we can even go one level beyond websites and see the entire Worldwide Web as a giant nested signal assigning websites to the nodes of the link structure graph of the Internet. 
Fig. \ref{fig:fig1}(Top) depicts this example.
\begin{wrapfigure}{r}{.4\textwidth}
\vspace{-0.1in}
\centering
\begin{subfigure}{.5\columnwidth}
  % \centering
  % include first image
  \includegraphics[width=0.75\columnwidth]{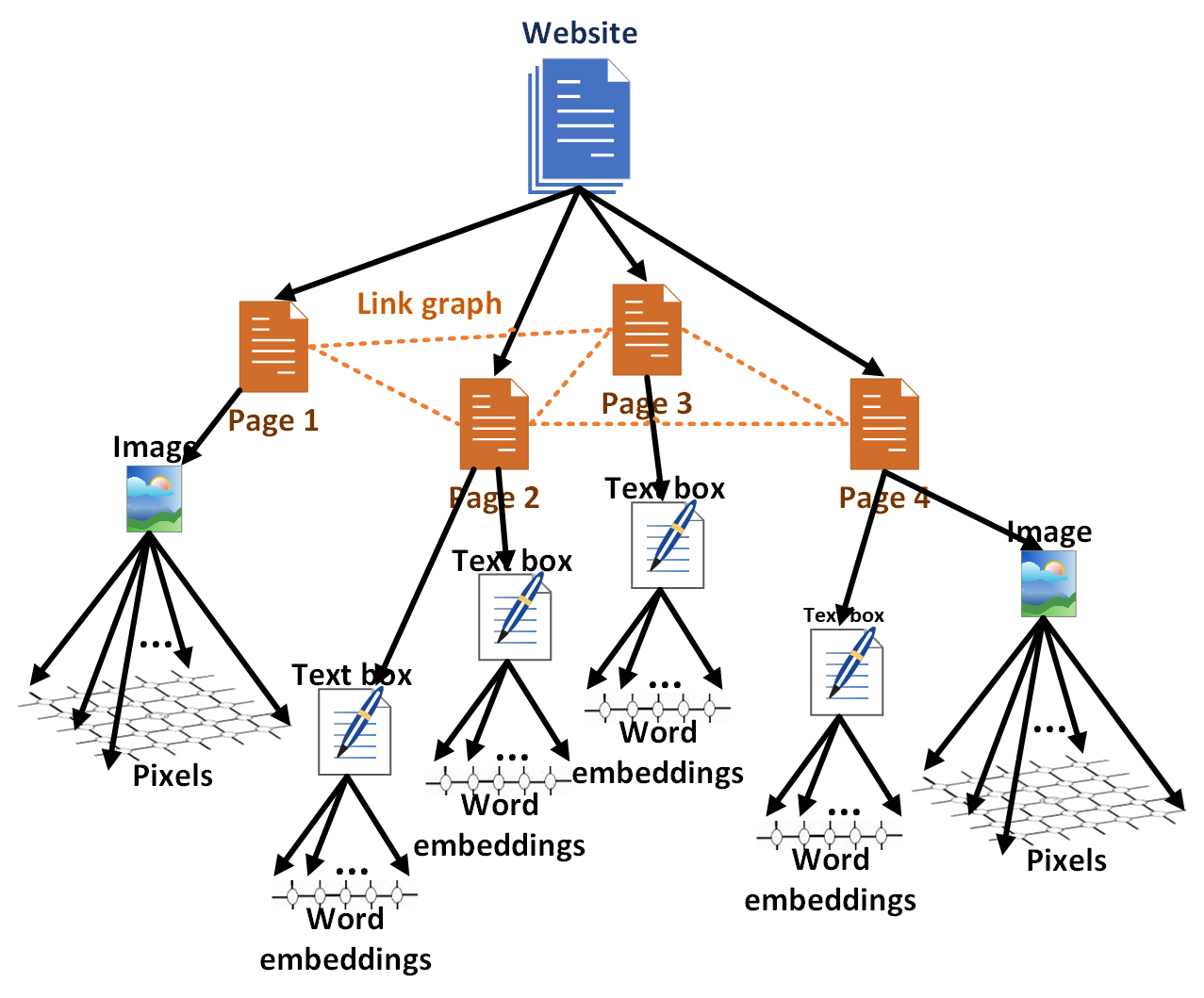}  
  % \caption{Nested signal for a website}
  \label{fig1:sub-first}
\end{subfigure}
\begin{subfigure}{.5\columnwidth}
  % \centering
  % include second image
  \includegraphics[width=0.75\columnwidth]{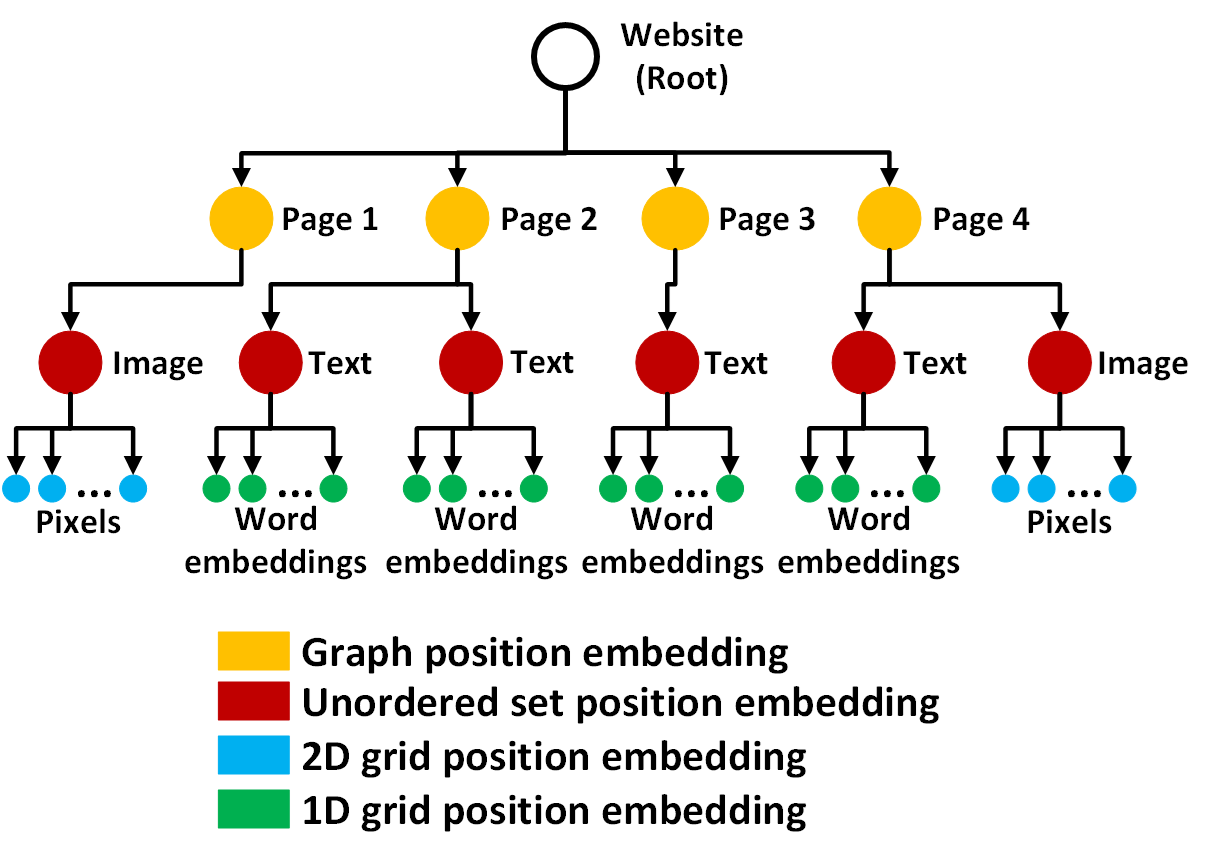}  
  % \caption{Its signal hierarchy representation}
  \label{fig1:sub-second}
\end{subfigure}
\vspace{-0.1in}
\caption{\textbf{(Top)} A nested signal representing a website. \textbf{(Bottom)} Its signal hierarchy representation. Different colors encode different types of position embeddings assigned to each node.}
\label{fig:fig1}
\vspace{-0.1in}
\end{wrapfigure}
While in theory, the domains $\Omega\in\mathcal{D}$ can be infinite, in practice, we mostly deal with nested and simple signals defined on finite $\Omega$'s. In particular, a nested signal $x$ is said to be \textit{finite} if the domains $\Omega$'s at \textit{all} of its nesting levels are finite. Given the set of position embeddings $\boldsymbol{\varepsilon}=\{\varepsilon_\Omega\mid \Omega \in \mathcal{D}\}$, a finite nested signal can be represented by a \textit{signal hierarchy} as follows:

\begin{definition}[Signal Hierarchy]
For a finite nested signal $x$, its signal hierarchy $h_x$ is a tree with the root node $R_x$ associated with $r(x)$, the top level domain of $x$. The children of $R_x$ are defined as $chd(R_x)=\{h_{x(u)} \mid u\in r(x)\}$ where $x(u)$ is the value of signal (possibly another nested signal) at $u$. If $x(u)$ is a vector instead of a signal, then $h_{x(u)}$ is simply $x(u)$. Furthermore, each child $h_{x(u)}\in chd(R_x)$ is annotated by $\varepsilon_{r(x)}(u)$, the position embedding vector dictated by its parent node.
\vspace{-0.1in}
\end{definition}
We denote the nodes (and equivalently their corresponding sub-trees) in the signal hierarchy $h_x$ by upper-case letters.
Any set of sibling nodes in $h_x$ is referred as a \textit{family}. The members of a family are nested signals (or real vectors for the leaf nodes) that reside on the same domain $\Omega$ and therefore share the same position embedding function $\varepsilon_\Omega$. Furthermore, for $A\in h_x$, $chd(A)$, $sib(A)$ and $\ell(A)$ represent the set of $A$'s children, its siblings and the index set of the leaf node descendants of $A$, respectively. Two nodes in $h_x$ are called \textit{unrelated} if neither of them is descendant of the other. For two unrelated nodes $A$ and $B$, their \textit{immediate common ancestor} is denoted by $ica(A, B)$, while their \textit{highest distinct ancestors} are denoted by $A'$ and $B'$, respectively, where we have $A', B'\in chd(ica(A, B))$; \textit{i.e.}, $A'$ and $B'$ are always siblings even if $A$ and $B$ are not. See Appendix \ref{appendix:notations} for the notational details as well as a visual demonstration of the tree-related concepts.  

Since sibling nodes share the same position embedding function, the relative positional distance (or similarity) between them is well-defined. More generally, for any two unrelated nodes $A,B\in h_x$, we can form a well-defined positional distance between them by comparing the position embeddings of $A'$ and $B'$ which is well-defined since $A'$ and $B'$ are always siblings. The implication of this construction is indeed powerful as it would enable the signal hierarchy formalism to define meaningful positional distance between any two unrelated nodes in the hierarchy \textit{regardless of their modalities or signal types}. Fig. \ref{fig:fig1}(Bottom) shows the signal hierarchy representation for our earlier website example.
% The position embedding annotations of the nodes in a signal hierarchy make it possible to uniformly handle the position information as vectors in real spaces instead of non-homogeneously structured domains. Note that this does \textit{not} mean the position embeddings $\varepsilon_\Omega$ for various $\Omega$'s necessarily map into the same semantic space. Fig. \ref{fig:fig1}(Right) shows the signal hierarchy representation for our earlier website nested signal example.
% In other words, while in general, the position embedding annotations of two randomly selected nodes in a signal hierarchy are not semantically comparable, it is the case for the members of the same family. The number of families (\textit{i.e.} the number of inner nodes of the tree) is denoted by $M$.

% \begin{figure*}[ht]
% \vspace{-0.1in}
% \centering
% \begin{subfigure}{.4\textwidth}
%   \centering
%   % include first image
%   \includegraphics[width=0.6\columnwidth]{Sections/Figures/Fig1a.png}  
%   % \caption{Nested signal for a website}
%   \label{fig1:sub-first}
% \end{subfigure}
% \begin{subfigure}{.5\textwidth}
%   \centering
%   % include second image
%   \includegraphics[width=0.6\columnwidth]{Sections/Figures/Fig1b.png}  
%   % \caption{Its signal hierarchy representation}
%   \label{fig1:sub-second}
% \end{subfigure}
% \vspace{-0.1in}
% \caption{\textbf{(Left)} A nested signal example for representing a website. \textbf{(Right)} Its signal hierarchy representation. Different colors encode different types of position embeddings assigned to each node.}
% \label{fig:fig1}
% \end{figure*}

\vspace{-0.1in}
\section{Hierarchical Self-Attention}\label{sec:attention}
The nested signal formalism and its signal hierarchy representation introduced in the previous section provide a systematic way to represent hierarchical data that can potentially span across different modalities and domains. However, the question remains what kind of neural architectures can handle such versatile data structure? To answer this question, we note that for non-hierarchical, simple signals, the transformer architecture first introduced by \cite{vaswani2017attention} allows for a unified representation learning methodology that can accommodate various signal domains (as long as the position embedding is available), not to mention its remarkable success in revolutionizing deep learning. However, extending the attention mechanism to nested signals is not straightforward as the information in such signals can come with different signal domains at different scales. 
% More importantly, a nested signal contains different domain structures at its different nesting levels which makes the calculation of meaningful attention weights non-trivial.

To address this problem, in this section, we first propose a statistical mechanical framework that elegantly derives the classical Softmax attention mechanism from the principle of entropy minimization when a finite (simple) signal is viewed as a physical system with $N$ particles. By generalizing our proposed construction to nested systems, we then derive a novel, theoretically-rigorous mechanism for calculating self-attention within nested signals, which we refer as \textit{Hierarchical Self-Attention (HSA)}. By its direct construction, the proposed HSA mechanism aims at reducing the total entropy of the nested system, or equivalently put, \textit{increasing information within the learned representation of the nested signal}. We further show that our proposed construction to derive HSA is \textit{optimal} in the sense of Kullback-Leibler (KL) divergence from the Softmax attention weights if the hierarchical structure were to be ignored. This result will subsequently open the door for the application of our proposed formulation to \textit{approximate} the inefficient Softmax attention in pre-trained transformers using the more efficient hierarchical calculations if a hierarchy exists and can be imposed in a given problem. Finally, we propose an efficient algorithm based on dynamic programming that calculates HSA for a given signal hierarchy $h_x$ in $O(M\cdot b^2)$, where $M$ is the number of families in $h_x$ and $b$ is its maximum branching factor (\textit{i.e.} family size).

\subsection{Softmax Attention Revisited}
Let $x=\{x_i\in \mathbb{R}^{d'}\mid 1\leq i \leq N\}$ be a finite signal with $N$ elements in $\mathbb{R}^{d'}$ with the corresponding position embeddings $\{e_i\in \mathbb{R}^c\mid 1\leq i \leq N\}$. To calculate self-attention over $x$, one needs to define the set of \textit{query} variables $Q=\{q_i\in \mathbb{R}^d\mid 1\leq i \leq N\}$ and \textit{key} variables $K=\{k_i\in \mathbb{R}^d\mid 1\leq i \leq N\}$, where $q_i$'s and $k_i$'s are (linear) functions of $x_i$. Then the conditional entropy of $Q$ given $K$ is:
\vspace{-0.05in}
\begin{align}
\label{eq:entropy}
    \Eta(Q\mid K)=-\int\wp(Q, K)\log \wp(Q\mid K)dQdK
    =-\mathbb{E}_{Q,K}\big[\log\wp(Q\mid K)\big]
    \vspace{-0.05in}
\end{align}
where $\wp(Q, K)$ and $\wp(Q \mid K)$ are the unknown joint and posterior distributions over $Q$ and $K$. While the joint distribution can be approximated using the Monte Carlo method, the posterior can be approximated by a variational distribution $\xi(Q\mid K)$, which gives rise to the variational upper-bound on the conditional entropy:
\vspace{-0.05in}
\begin{align}\label{eq:variational}
%\begin{equation}\label{eq:variational}
    \Eta_{UB}(Q\mid K)&=-\mathbb{E}_{Q,K}\big[\log\xi(Q\mid K)\big] \geq \Eta(Q\mid K)
    \vspace{-0.05in}
%\end{equation}
\end{align}
We further represent the variational distribution by the Boltzmann distribution, \textit{i.e.} $\xi(Q\mid K)=\frac{1}{\Zeta(K)}\exp[-\phi(Q, K)/\tau]$, where $\phi(Q, K)$, $\Zeta(K)$, and $\tau$ are the energy function\footnote{Note that the energy function needs to satisfy $\int \exp[-\phi(Q, K)/\tau]dQ < \infty$.}, the partition function and the temperature parameter, respectively. The variational upper-bound then can be written as:
% \vspace{-0.05in}
\begin{equation}\label{eq:variational2}
    \Eta_{UB}(Q\mid K)=\mathbb{E}_{Q,K}\big[\phi(Q, K)/\tau\big] + \mathbb{E}_{K}\big[\log\Zeta(K)\big]
    % \vspace{-0.05in}
\end{equation}
The end goal of representation learning is to transform the input signal (\textit{i.e.} the query variables $Q$) into a "better" representation. A principled way to arrive at a better representation is to modify $Q$ such that its information content is maximized, or equivalently its entropy is minimized. Since we cannot directly calculate the entropy, we can work with its variational upper-bound $\Eta_{UB}$ as a proxy. Then, the entropy minimization approach amounts to gradient descent on $\Eta_{UB}$ w.r.t. each $q_i$:
\begin{align}
\label{eq:attention}
    q_i \leftarrow q_i - \lambda \cdot \nabla_{q_i} \Eta_{UB}(Q \mid K) 
    = q_i - \lambda \cdot \mathbb{E}_{Q,K}\left[ \frac{1}{\tau} \nabla_{q_i} \phi(Q, K) \right], \quad 1 \leq i \leq N
\end{align}
where $\lambda > 0$ is the step size.

\begin{proposition}[\textbf{Softmax Attention}]\label{proposition:softmax}
For the energy function $\phi(Q, K)$, defined as:
\vspace{-0.05in}
\begin{align}\label{eq:softmax-energy}
  \phi(Q, K)  = -\frac{1}{N}
  \sum_{i=1}^N \log\bigg(\frac{1}{N-1}\sum_{j=1, j\neq i}^N \exp\big[\frac{-1}{2\sqrt{d}}\|q_i - k_j\|^2  + e_i^Te_j\big]\bigg)
    \vspace{-0.05in} 
\end{align}
if both $Q$ and $K$ variables are normalized using the LayerNorm function \cite{ba2016layer}, then for $\tau=(N\sqrt{d})^{-1}$, $\lambda=1$ and sample size of $1$, the Eq. \eqref{eq:attention} reduces to:
\vspace{-0.2in}

\begin{align}\label{eq:softmax-attention}
    1\leq i \leq N\text{   ,          } q_i \leftarrow q_i + 
    \sum_{j=1,j\neq i}^N\frac{\exp(q_i^T k_j /\sqrt{d}+ e_i^Te_j)}{\sum_{t=1,t\neq i}^N\exp(q_i^T k_t /\sqrt{d}+ e_i^Te_j)}\cdot k_j 
    \vspace{-0.05in}
\end{align}
\textit{essentially, this is Softmax attention via residual connection.}
\end{proposition}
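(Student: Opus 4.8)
The plan is to carry out the single gradient-descent step of Eq.~\eqref{eq:attention} for the specific energy in Eq.~\eqref{eq:softmax-energy} and show it collapses to Eq.~\eqref{eq:softmax-attention}. First I would observe that $q_i$ enters $\phi(Q,K)$ only through the $i$-th summand of the outer sum: in any other summand indexed by $i'\neq i$ the inner terms involve $q_{i'}-k_j$, and the only way $q_i$ could appear there is through some $k_i$, not $q_i$ itself. Hence $\nabla_{q_i}\phi$ reduces to the gradient of a single log-sum-exp term, which is the object I actually need to compute.

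Next I would exploit the LayerNorm hypothesis. Since both $q_i$ and the $k_j$ are LayerNorm-normalized, they have constant squared norm $\|q_i\|^2=\|k_j\|^2=d$. Expanding $\|q_i-k_j\|^2=\|q_i\|^2-2q_i^Tk_j+\|k_j\|^2$, the exponent inside the energy becomes $\tfrac{-1}{2\sqrt d}\|q_i-k_j\|^2+e_i^Te_j=\tfrac{q_i^Tk_j}{\sqrt d}+e_i^Te_j-\sqrt d$. The additive constant $-\sqrt d$ is independent of $j$, so it cancels between numerator and denominator when forming the normalized weights $w_{ij}\propto\exp[\,q_i^Tk_j/\sqrt d+e_i^Te_j\,]$, which already match the attention weights appearing in Eq.~\eqref{eq:softmax-attention}.

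Then I would compute the gradient itself. Writing $S_i=\tfrac{1}{N-1}\sum_{j\neq i}\exp[a_{ij}]$ with $a_{ij}=q_i^Tk_j/\sqrt d+e_i^Te_j-\sqrt d$, the chain rule gives $\nabla_{q_i}\log S_i=\sum_{j\neq i}w_{ij}\,\nabla_{q_i}a_{ij}$. Because LayerNorm fixes $\|q_i\|^2$, the norm term contributes nothing and $\nabla_{q_i}a_{ij}=k_j/\sqrt d$, so that $\nabla_{q_i}\phi=-\tfrac{1}{N\sqrt d}\sum_{j\neq i}w_{ij}k_j$. Substituting into Eq.~\eqref{eq:attention} with $\lambda=1$, $\tau=(N\sqrt d)^{-1}$ (so $1/\tau=N\sqrt d$), and replacing the expectation over $(Q,K)$ by a single Monte-Carlo sample, the prefactor $N\sqrt d$ exactly cancels $\tfrac{1}{N\sqrt d}$, leaving $q_i\leftarrow q_i+\sum_{j\neq i}w_{ij}k_j$, i.e.\ Softmax attention with a residual connection.

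The main obstacle is the careful treatment of LayerNorm inside the gradient. This is precisely what distinguishes the residual update $q_i\leftarrow q_i+(\cdots)$ from the non-residual $q_i\leftarrow(\cdots)$ that one would obtain by naively differentiating $\|q_i-k_j\|^2$ through the $\|q_i\|^2$ term. The normalization constraint makes $\|q_i\|^2$ constant, so the self-term drops from the gradient and only the data-dependent cross term survives; verifying this constant-norm bookkeeping together with the exact cancellation of the $N\sqrt d$ factor is the technical heart of the argument.
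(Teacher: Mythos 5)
Your proposal is correct and follows essentially the same route as the paper's proof: use the LayerNorm constraint $\|q_i\|^2=\|k_j\|^2=d$ to reduce the exponent in \eqref{eq:softmax-energy} to $q_i^Tk_j/\sqrt{d}+e_i^Te_j$ up to an additive constant, differentiate the resulting log-sum-exp, and let $1/\tau=N\sqrt{d}$ cancel the $\tfrac{1}{N\sqrt{d}}$ prefactor of the gradient when substituting into \eqref{eq:attention}. If anything, your bookkeeping is slightly more careful than the paper's own intermediate display, which omits the $-\tfrac{1}{N\sqrt{d}}$ factor in front of $\nabla_{q_i}\phi(Q,K)$ even though that factor is exactly what the final cancellation (and the residual form of \eqref{eq:softmax-attention}) requires.
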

\begin{proof}
\vspace{-0.1in}
See Appendix \ref{proofs:prop1}.
\vspace{-0.1in}
\end{proof}
Note that \eqref{eq:softmax-attention} is similar to the original attention formulation proposed by \cite{vaswani2017attention}, except for a few differences: (1) there is no separate \textit{value} linear projection; the value projection emerges later as we incorporate learnable step-size (see Appendix \ref{appendix:value}), (2) the LayerNorm is applied post-linear projection as opposed to pre-normalization in the original formulation, and (3) the residual addition is applied post-linear projection. In other words, with few minor modifications, the original Softmax attention operation can be interpreted as maximizing the information content in the representation. But the real importance of the formulation in \eqref{eq:attention} is that depending on how we define the energy function, we can arrive at various types of attention mechanisms tailored to different applications. We use this feature in the next section to derive a hierarchical self-attention (HSA) mechanism for nested signals.

\subsection{Generalizing Attention to Nested Signals}
We derive a self-attention mechanism for finite nested signals represented via a signal hierarchy tree. We follow the same recipe as the previous section by defining an appropriate energy function. But first, for any two unrelated nodes $A$ and $B$ in $h_x$, we define the \textit{interaction energy} $\psi_{A\rightarrow B}$:
\begin{align}
\label{eq:interaction-energy}
    \psi_{A\rightarrow B} = -\varepsilon_\Omega(A')^T\varepsilon_\Omega(B') + 
    \frac{1}{2\sqrt{d}\cdot|\ell(A)|\cdot|\ell(B)|}\sum_{i\in \ell(A)}\sum_{j \in \ell(B)}\|q_i - k_j\|^2
    \vspace{-0.2in}
\end{align}
where $|\cdot|$ denotes set cardinality, $\varepsilon_\Omega(\cdot)$ is the position embedding dictated by $ica(A, B)$ (\textit{i.e.} $\Omega=r(ica(A, B))$), and $A'$ and $B'$ are the highest distinct ancestors of $A$ and $B$, as defined in Section \ref{sec:signal}. 
% Note that since $A'$ and $B'$ are siblings, their corresponding nested signals (or sub-trees) are defined on the same domain $\Omega$, and therefore, the dot product between their position embeddings is meaningful even though their descendant leaf nodes might be defined on completely different domain structures. 
Intuitively speaking, the interaction energy $\psi_{A\rightarrow B}$ captures the \textit{dissimilarity} between the nested signals rooted at $A$ and $B$ as a weighted sum of their highest non-common ancestors' position dissimilarity (the first term) and the average Euclidean distance between their leaf nodes (the second term). By calculating energy (dissimilarity) at the subtree level instead of individual leaves, we inherently encode the inductive bias that the leaf nodes of a subtree (\textit{i.e.} a nested signal) can be \textit{pooled} into a single representative (\textit{i.e.} the subtree's root) while roughly maintaining the underlying semantics. This is referred to as \textit{scale separation} in Geometric Deep Learning \cite{bronstein2021geometric}, a fundamental prior in dealing with multi-scale physical systems, benefiting us both statistically (by taming the curse of dimensionality) and computationally (by providing efficient algorithms). Using the interaction energy definition, the energy of the signal hierarchy rooted at non-leaf node $A$ is \textit{recursively} defined:
\vspace{-0.05in}
\begin{align}\label{eq:hierarchical-energy}
    \phi(A)= -\sum_{B\in  chd(A)}\frac{|\ell(B)|}{|\ell(A)|} \log\bigg[\exp\big(-\phi(B)\big)+ \sum_{C\in sib(B)} |\ell(C)|\exp\big(-\psi_{B\rightarrow C}\big)\bigg]
    \vspace{-0.05in}  
\end{align}
For leaf nodes, $\phi(A)$ is set to $\infty$. $\phi(R_x)$ is the energy of the whole signal hierarchy $h_x$. 
Intuitively, \eqref{eq:hierarchical-energy} states that the energy of a system (a signal hierarchy tree) is the weighted sum of the energy contribution of its subsystems (immediate subtrees) where the weights are proportional to the size of each subsystem. The contribution of each subsystem, in turn, is a non-linear combination (via the \textit{weighted} \textbf{log-sum-exp} function, which is the addition operation in the log-space) of the energy of the subsystem itself (the recursion term) and its interactions with its sibling subsystems (the second term).
It is easy to see that for single-level $h_x$ (\textit{i.e.} simple signals), $\phi(R_x)$ reduces to \eqref{eq:softmax-energy}. Having defined the energy function, we can follow the recipe in \eqref{eq:attention} to calculate the HSA for $h_x$ by recursively computing the gradients $\nabla_{q_i}\phi(R_x)\in \mathbb{R}^d$ for each leaf node $q_i$, $i \in \ell(R_x)$ as:
\begin{align}
\label{eq:hierarchical-attention}
\nabla_{q_i}\phi(R_x)=\bigg [\frac{\alpha(B^i)\cdot \nabla_{q_i}\phi(B^i)+\sum_{C\in sib(B^i)}|\ell(C)|\beta(B^i,C)\cdot \nabla_{q_i}\psi_{B^i\rightarrow C}}{\alpha(B^i) + \sum_{C\in sib(B^i)}|\ell(C)|\beta(B^i,C)} \bigg]\cdot\frac{|\ell(B^i)|}{|\ell(R_x)|}
\end{align}
\begin{align}\label{eq:alpha-beta}
    \text{where    }\alpha(B^i) = \exp\big(-\phi(B^i)\big) \text{, }\beta(B^i,C)= \exp\big(-\psi_{B^i\rightarrow C}\big)
\end{align}
and $B^i$ denotes the child of $R_x$ which contains $q_i$ as a leaf. It is not difficult to show that for the quadratic interaction energy function in \eqref{eq:interaction-energy}, if both $Q$ and $K$ variables are normalized beforehand using a LayerNorm layer, then the recurrence in \eqref{eq:hierarchical-attention} can be \textit{unrolled} and written in the matrix form (see \eqref{eq:closed-form} in Appendix \ref{proofs:thm2}) $\boldsymbol{\nabla\Phi}=\boldsymbol{\Theta}\boldsymbol{K}\text{, where }$:
\vspace{-0.05in}
\begin{align}\label{eq:matrix-form}
 \boldsymbol{\nabla\Phi}=[\nabla_{q_1}\phi(R_x), ..., \nabla_{q_{|\ell(R_x)|}}\phi(R_x)]^T, \boldsymbol{K}=[k_1, ..., k_{|\ell(R_x)|}]^T
\vspace{-0.05in}    
\end{align}
and $\boldsymbol{\Theta}=[\theta_{i,j}]_{|\ell(R_x)|\times|\ell(R_x)|}$ is the \textit{attention matrix}; that is, $\theta_{i,j}$ is the coefficient of the key variable $k_j$ for computing the attention update $\nabla_{q_i}\phi(R_x)$ for the query variable $q_i$ in \eqref{eq:hierarchical-attention}. However, $\boldsymbol{\Theta}$ is different from classical attention matrix in the sense that many of its entries share the same values. In particular, for any two sibling nodes $A$ and $B$ in $h_x$, the corresponding entries between the leaves of $A$ and $B$ form a \textit{block} in $\boldsymbol{\Theta}$ with one value; that is, $\theta_{i,j}=\theta_{A,B}$, $\forall i\in \ell(A), j\in\ell(B)$. In other words, the attention weight between any leaf node in $A$ and any leaf node in $B$ is approximated by one value $\theta_{AB}$; we refer to this approximation between the leaves of sibling nodes in $h_x$ as the \textit{block constraint} which makes the attention matrix a \textit{hierarchical matrix} \cite{hackbusch1999sparse, hackbusch2000sparse}. Fig. \ref{fig:fig2}(Left) illustrates the self-attention matrix for a toy example signal hierarchy with the block constraint. 
The block constraint is directly administered by the form of the interaction energy function in \eqref{eq:interaction-energy} as well as the signal hierarchy energy recurrence in \eqref{eq:hierarchical-energy}. 

The block constraint effectively reduces the degrees of freedom for an attention matrix from $O(|\ell(R_x)|^2)=O(M^2\cdot b^2)$ to $O(M\cdot b^2)$, where $|\ell(R_x)|$, $M$ and $b$ are the total number of leaf nodes, the number families (\textit{i.e.} non-leaf nodes) and the maximum branching factor in $h_x$, respectively. Without it, we essentially go back to the standard Softmax attention mechanism where the unormalized attention weights before Softmax are calculated by evaluating the interaction energy function for every pair of leaf nodes. We refer to this process as \textit{flattening} a nested signal. Fig \ref{fig:fig2}(Right) shows the self-attention matrix for the flattened version of our earlier toy example \textit{without} the block constraint. Flattening is not only computationally costly (by being quadratic in $M$ instead of linear), it may also hurt the model statistically. Note that by enforcing coarse-grained attention weights through the block constraint, we effectively administer a form of regularization guided by the scale separation prior which is in turn induced from the prior knowledge of the hierarchical structure in the problem. By flattening a nested signal, we simply discard this prior knowledge which can make the model prone to over-fitting.
% \vspace{-0.02in}
\begin{wrapfigure}{r}{0.55\textwidth}
\begin{subfigure}
{.26\textwidth}
  \centering
  % include first image
  % \includegraphics[width=0.475\columnwidth]{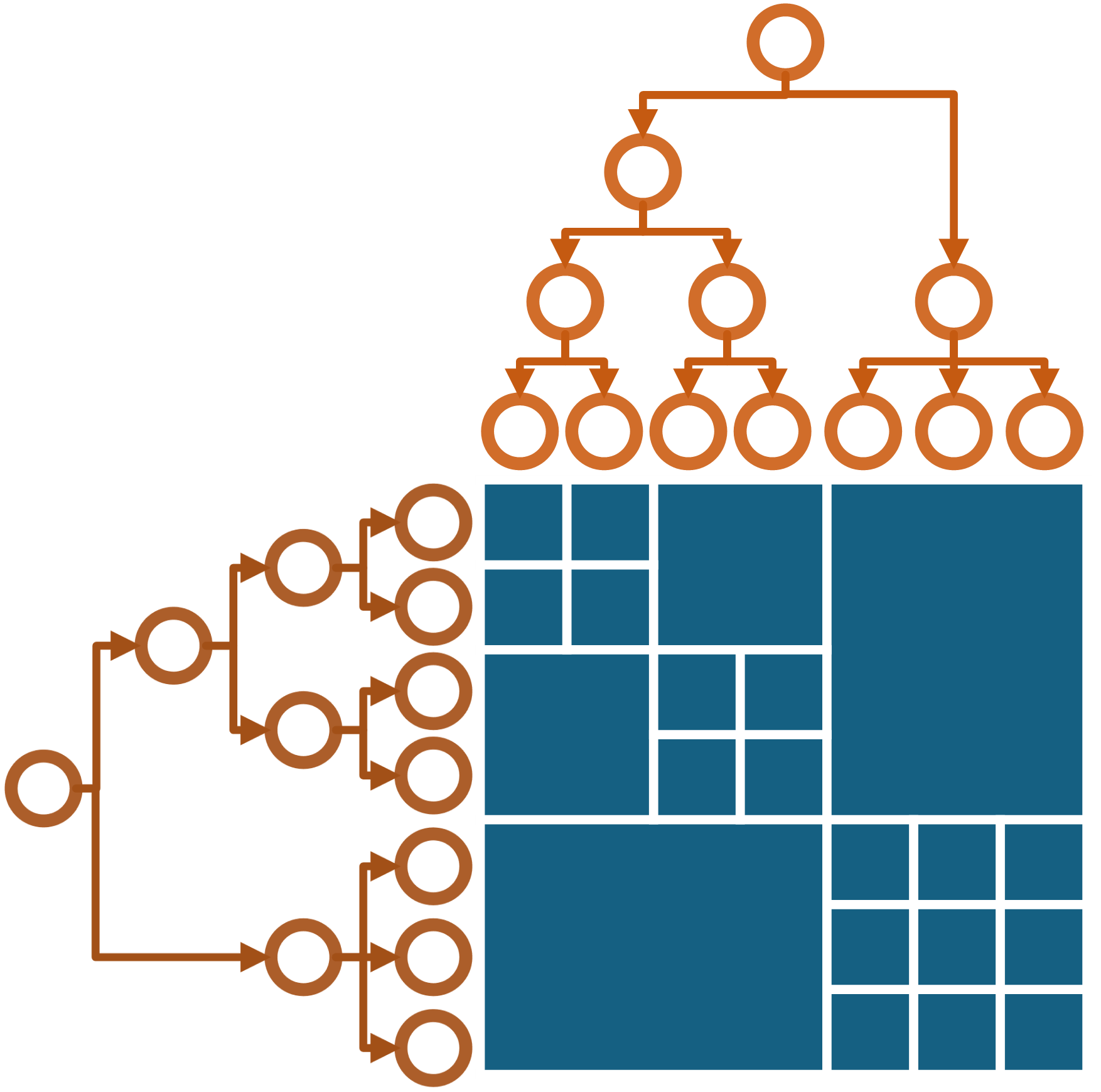}  
  \includegraphics[width=\columnwidth]{Sections/Figures/Fig2a.png}  
  \label{fig2:sub-first}
\end{subfigure}
\begin{subfigure}{.2\textwidth}
  \centering
  % include second image
  % \includegraphics[width=0.3625\columnwidth]{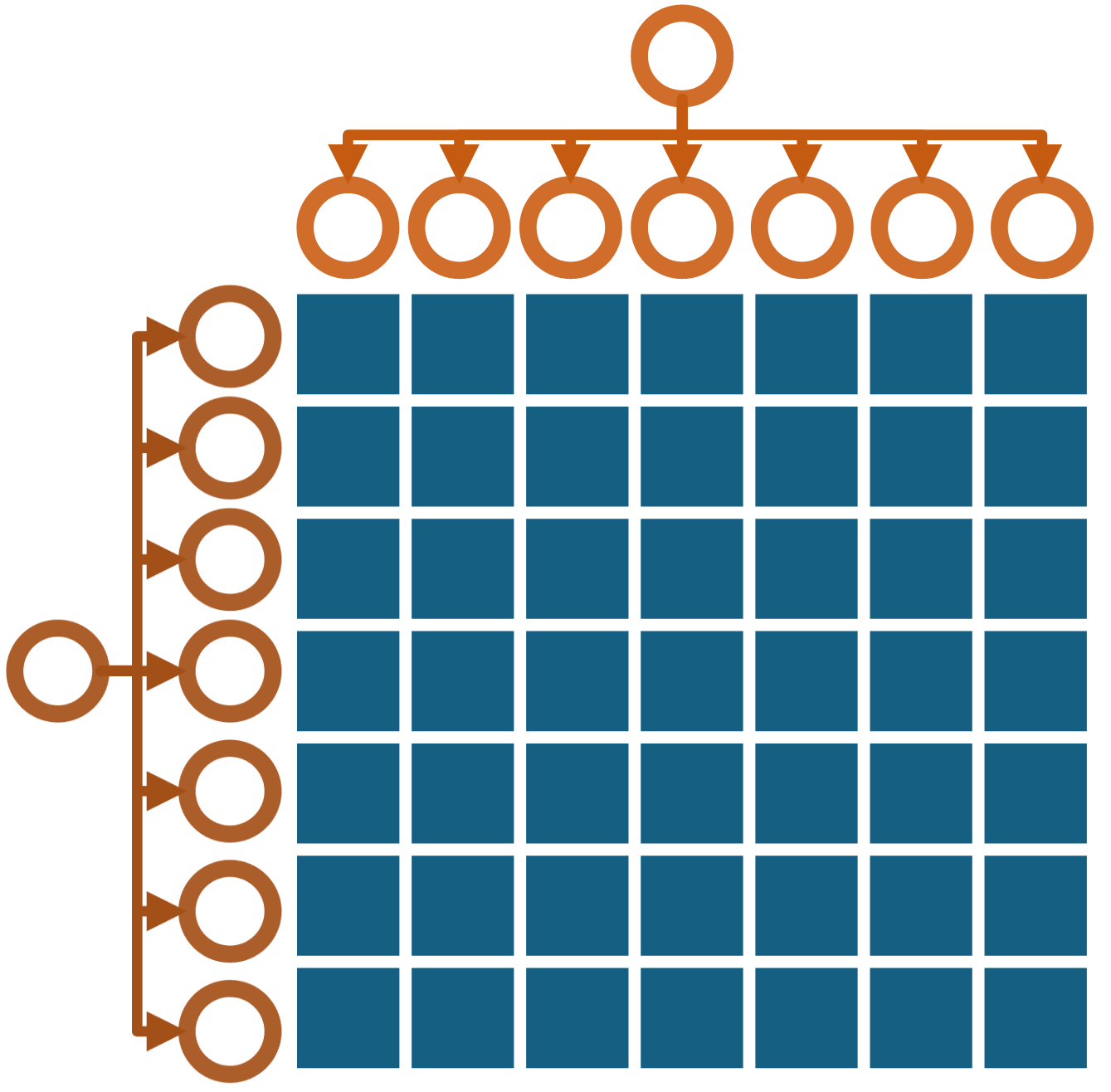}  
  \includegraphics[width=\columnwidth]{Sections/Figures/Fig2b.png}  
  \label{fig2:sub-second}
\end{subfigure}
\caption{\textbf{(Left)} The self-attention matrix for a toy signal hierarchy with the block constraint. Each contiguous tile here represents one tied value for the corresponding cells. \textbf{(Right)} The self-attention matrix for the flattened (or simple) signal without the block constraint.}
\label{fig:fig2}
\vspace{-0.2in}
\end{wrapfigure}
Note that the block constraint by itself merely enforces tied values for the attention weights over the leaves of sibling nodes; it does \textit{not}, however, specify what those values should be. That is, there are infinitely many attention matrices that adhere to the block constraint; our proposed formulation in \eqref{eq:hierarchical-attention} is just one of them. However, as we show next, our proposed formulation is \textit{optimal} in the sense of being the closest approximation to the standard Softmax attention if the nested signal were to be treated as a flat, simple signal.

\begin{theorem}[\textbf{The optimality of HSA}]\label{thm:optimality}
Let both $Q$ and $K$ variables be normalized using the LayerNorm function. For the given interaction energy function $\psi$ in \eqref{eq:interaction-energy}, if $\boldsymbol{\Theta}=[\theta_{i,j}]_{|\ell(R_x)|\times|\ell(R_x)|}$ is the self-attention matrix for the nested signal $x$ derived from the proposed gradient recurrence in \eqref{eq:hierarchical-attention} (as depicted by \eqref{eq:matrix-form}), then for the temperature parameter $\tau=\big(|\ell(R_x)|\sqrt{d}\big)^{-1}$, $\hat{\boldsymbol{\Theta}}=-\frac{1}{\tau}\boldsymbol{\Theta}$ is a stochastic matrix; that is, it is non-negative and we have $\hat{\boldsymbol{\Theta}} \textbf{1}=\textbf{1}$. Moreover, $\hat{\boldsymbol{\Theta}}$ is the \textit{closest} attention matrix \textit{with} the block constraint to the classical Softmax attention matrix for the flattened signal in terms of total KL-divergence; that is,
\vspace{-0.05in}
\begin{equation}\label{eq:optimality}
    \hat{\boldsymbol{\Theta}}=\arg\min_{\boldsymbol{\Theta} \in \mathcal{B}}\sum_{i\in \ell(R_x)}D_{KL}(\theta_{i,\cdot}\|\theta^f_{i,\cdot})
\end{equation}
\vspace{-0.05in}
where $\mathcal{B}\subset\mathbb{R}^{|\ell(R_x)|\times|\ell(R_x)|}$ is the space of all stochastic attention matrices that admit the block constraint induced by $h_x$, and $\theta^f_{i,\cdot}$ ($\forall i\in\ell(R_x)$) are the rows of the attention matrix for the flattened version of the signal:
\vspace{-0.05in}
\begin{equation}\label{eq:flattened-attention}
    \theta^f_{i,j}=\frac{\exp(-\psi_{i\rightarrow j})}{\sum_{k\in \ell(R_x), k\neq i}\exp(-\psi_{i\rightarrow k})}\text{  ,  } \forall i, j \in \ell(R_x)
\vspace{-0.1in}
\end{equation}
\vspace{-0.15in}
\end{theorem}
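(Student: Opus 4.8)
The plan is to prove the two assertions in turn: first that $\hat{\boldsymbol{\Theta}}=-\tfrac1\tau\boldsymbol{\Theta}$ is row-stochastic, and then that it is the KL-projection of the flattened Softmax attention onto the block-constrained simplex $\mathcal{B}$. For stochasticity I would first make the LayerNorm reduction explicit exactly as in Proposition~\ref{proposition:softmax}: since normalization fixes $\|q_i\|$ and $\|k_j\|$, the quadratic term in \eqref{eq:interaction-energy} becomes a bilinear form up to additive constants that do not survive differentiation, so that $\nabla_{q_i}\psi_{B\rightarrow C}=-\tfrac{1}{\sqrt{d}\,|\ell(B)|\,|\ell(C)|}\sum_{j\in\ell(C)}k_j$ for $i\in\ell(B)$. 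I would then prove by induction on the height of a node $A$ that $-|\ell(A)|\sqrt{d}\,\nabla_{q_i}\phi(A)=\sum_{j\in\ell(A),\,j\neq i}\mu^A_{i,j}k_j$ with coefficients $\mu^A_{i,j}\geq 0$ summing to one. The base case is a family of leaves, where $\alpha=0$ removes the recursion term in \eqref{eq:hierarchical-attention} and leaves a genuine Softmax over siblings. In the inductive step I substitute the hypothesis for $\nabla_{q_i}\phi(B)$ and the closed form of $\nabla_{q_i}\psi_{B\rightarrow C}$ into \eqref{eq:hierarchical-attention}; the $|\ell(B)|$ factors cancel and the denominator $\alpha(B)+\sum_{C\in sib(B)}|\ell(C)|\beta(B,C)$ is precisely the sum of the numerator coefficients (the internal block contributes $\alpha(B)\sum_j\mu^B_{i,j}=\alpha(B)$ and each external block contributes $|\ell(C)|\beta(B,C)$), so the coefficients renormalize to one. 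At the root $|\ell(R_x)|\sqrt{d}=\tfrac1\tau$, giving $\hat\theta_{i,j}\geq 0$ and $\hat{\boldsymbol{\Theta}}\mathbf{1}=\mathbf{1}$.

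For optimality I would first reparametrize $\mathcal{B}$ by its genuine degrees of freedom: one scalar $\eta_{A,C}$ per ordered sibling pair, with $\theta_{i,j}=\eta_{A,C}$ whenever $A,C$ are the highest distinct ancestors of $i,j$. Grouping each row's divergence by the branch $C$ at which $j$ splits off and using that $\theta_{i,\cdot}$ is constant on each branch, $D_{KL}(\theta_{i,\cdot}\|\theta^f_{i,\cdot})$ collapses to a sum over branches in which the flattened weights enter only through their geometric mean over $\ell(C)$. Summing over $i\in\ell(R_x)$ and regrouping by families yields
\[\sum_{i\in\ell(R_x)} D_{KL}(\theta_{i,\cdot}\|\theta^f_{i,\cdot})=\sum_{A,C\ \mathrm{siblings}}|\ell(A)|\,|\ell(C)|\,\eta_{A,C}\log\frac{\eta_{A,C}}{\bar G_{A,C}},\]
with $\bar G_{A,C}=\exp\big(\tfrac{1}{|\ell(A)||\ell(C)|}\sum_{i\in\ell(A),\,j\in\ell(C)}\log\theta^f_{i,j}\big)$. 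The decisive computation here is the identity $\tfrac{1}{|\ell(A)||\ell(C)|}\sum_{i\in\ell(A),\,j\in\ell(C)}\psi_{i\rightarrow j}=\psi_{A\rightarrow C}$, which holds because every leaf pair in the block shares the highest distinct ancestors $A,C$ and hence the same position-embedding term; it gives $\bar G_{A,C}=c_A\,\beta(A,C)$ with a factor $c_A$ depending only on $A$ through the flattened partition functions.

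The reduced objective is a sum of strictly convex terms $\eta\log(\eta/\mathrm{const})$ and the row-sum constraints are linear (expressed recursively as one balance equation per child within each family), so the program is convex with a unique minimizing $\{\eta_{A,C}\}$; it therefore suffices to exhibit multipliers under which the HSA weights satisfy stationarity. Writing the Lagrangian with one multiplier $\gamma_B$ per non-root node, stationarity in $\eta_{A,C}$ forces $\eta_{A,C}=\kappa_A\,\beta(A,C)$ with a per-subtree constant $\kappa_A$ shared across all siblings $C$, while stationarity in the internal-budget variables forces the recursion $\gamma_A=\sum_{B\in chd(A)}\gamma_B$. Unrolling \eqref{eq:hierarchical-attention} shows the HSA block value is exactly $\eta^{\mathrm{HSA}}_{A,C}=\big(\prod_{U}\alpha(U)/D_U\big)\,\beta(A,C)/D_A$, with $D_U=\alpha(U)+\sum_{C\in sib(U)}|\ell(C)|\beta(U,C)$ and the product taken over the proper ancestors of $A$ below the root, i.e.\ precisely of the stationary form $\kappa_A\beta(A,C)$. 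The one remaining check is that the multipliers so induced are consistent, i.e.\ $\gamma_A=\sum_{B\in chd(A)}\gamma_B$; substituting the explicit $\kappa_A$ and $c_A$, the partition-function contributions telescope away (because $\ell(A)$ is the disjoint union of the $\ell(B)$) and the condition collapses to $|\ell(A)|\log\alpha(A)=\sum_{B\in chd(A)}|\ell(B)|\log D_B$, which is exactly the energy recurrence \eqref{eq:hierarchical-energy} written through $\alpha(A)=\exp(-\phi(A))$ and therefore holds by definition of $\phi$. Convexity then upgrades this stationary feasible point to the unique global minimizer in \eqref{eq:optimality}.

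I expect the main obstacle to be conceptual rather than computational: the block constraint couples the rows, since all queries inside one subtree must share their external attention values, so the naive per-row KL projections are infeasible and the minimization does not decompose row-wise. The crux is to recognize that this coupling is tree-structured and that its Lagrangian consistency condition is nothing other than the recursive definition of the hierarchical energy in \eqref{eq:hierarchical-energy} --- which is exactly why the particular gradient recurrence \eqref{eq:hierarchical-attention}, and no other block-respecting scheme, turns out to be optimal.
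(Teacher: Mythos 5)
Your proof is correct, and while your stochasticity half is essentially the paper's own argument (the same induction, yours phrased on node height with the normalization $-|\ell(A)|\sqrt{d}$ factored out, the paper's on hierarchy depth, both resting on the fact that the denominator $\alpha(B)+\sum_{C}|\ell(C)|\beta(B,C)$ equals the sum of the numerator coefficients), your optimality half takes a genuinely different route. The paper first proves an optimal-substructure lemma for KL projections under tie constraints (Lemma \ref{lemma:substructure}), uses it to argue that the diagonal blocks of the optimizer must themselves be optimal for the sub-hierarchies, and then solves only the top-level coupling by a Lagrangian; the resulting value function $\gamma(\cdot)$ is found to obey the same recurrence as $\exp(-\phi(\cdot))$, so the optimizer is constructed recursively and identified with HSA by induction. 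You instead pose one global convex program over the block values $\eta_{A,C}$: you collapse the total KL into $\sum_{A,C}|\ell(A)|\,|\ell(C)|\,\eta_{A,C}\log\big(\eta_{A,C}/\bar G_{A,C}\big)$ via the geometric-mean identity $\tfrac{1}{|\ell(A)||\ell(C)|}\sum_{i,j}\psi_{i\to j}=\psi_{A\to C}$ (the same identity the paper isolates in \eqref{eq:zeta}), and then verify that the unrolled HSA weights --- which coincide with the closed form \eqref{eq:closed-form} the paper derives for its algorithmic theorem --- form a KKT point, the multiplier-consistency condition being precisely the energy recurrence \eqref{eq:hierarchical-energy} rewritten through $\alpha(A)=\exp(-\phi(A))$. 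Your route buys three things: it dispenses with the substructure lemma entirely, it yields uniqueness of the minimizer for free from strict convexity, and it exposes the energy recurrence as nothing but dual consistency for the coupled row-sum constraints. What it costs is that it is a verification rather than a derivation: you must already know the closed form of the HSA block weights to check stationarity, whereas the paper's dynamic-programming argument derives the optimizer from scratch and thereby explains where recurrence \eqref{eq:hierarchical-energy} comes from (as the log of the optimal sub-problem KL value). One point worth making explicit in your write-up: feasibility of the HSA point, which KKT sufficiency presupposes, is exactly the stochasticity statement from your first half, so the two halves of your argument are not independent and should be invoked in that order.
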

\begin{proof}
See Appendix \ref{proofs:thm1}.
\vspace{-0.15in}
\end{proof}

% \begin{corollary}
%     If both the query and key variables are normalized using the LayerNorm function, then for the choice of the interaction energy function defined in Eq. \eqref{eq:interaction-energy}, the result of Theorem \ref{thm:optimality} holds with Eq. \eqref{eq:flattened-attention} reducing to the classical Softmax attention function introduced by \cite{}.
% \end{corollary}

% \begin{proof}
% See the Appendix.
% \end{proof}
This result is crucial in the sense that it shows our proposed HSA mechanism for nested signals formalized by \eqref{eq:hierarchical-energy} and \eqref{eq:hierarchical-attention} is the \textit{closest approximation} to the classical attention mechanism while at the same time adhering to the block constraint (induced by the hierarchical structure of the nested signal), which benefits the model computationally \textit{and} statistically. 

From the practical perspective, this result has another important implication: if we replace the interaction energy function $\psi_{i\rightarrow j}$ with the original cosine similarity in transformers (where the position information is simply \textit{added} to the signal), our proposed methodology provides the \textit{closest} hierarchical approximation of the original Softmax attention. Practically speaking, this means that if we have access to some form of hierarchical information $h_x$ in a problem at inference time, we can simply replace the self-attention operation in pre-trained transformer-based models by HSA and arrive at much more efficient calculations \textit{without the need for major re-training}. 
% The hierarchy can either be extracted from the semantics of the problem (\textit{e.g.} sentence and paragraph structures in text), or be imposed using the notion of \textit{locality} in the signal domain (\textit{e.g.} hierarchical patching of an image.)
Note that the direct evaluation of the recurrence in \eqref{eq:hierarchical-attention} for all query variables $q_i$ still takes $O(b^2\cdot M\log_b M)$. In Appendix \ref{appendix:algorithm}, we prove that the HSA can be computed in $O(M\cdot b^2)$ using a dynamic programming algorithm. Furthermore, we propose a transformer encoder architecture based on the HSA in Appendix \ref{appendix:architecture}. 
\vspace{-0.1in}
\section{Experimental Results} \label{sec:experiments}
% In order to demonstrate the efficacy of our proposed theoretical framework, 
In this section, we present an empirical study aiming at two main goals: (1) showing the capability of the HSA mechanism in incorporating useful domain hierarchy knowledge into training better transformer models from scratch, and (2) demonstrating the unique capacity of HSA as post-training approximation of the Softmax attention in pre-trained transformer models in order to reduce the self-attention computation FLOPS in a zero-shot manner.
% we present an empirical comparison of the HSA and the Softmax attention in a few problems with semantic hierarchy and/or multi-modality. We note that since our proposed transformer architecture has a few differences other than the attention mechanism with standard transformers, for a fair comparison, we employ the flattened HSA as the baseline representing the regular Softmax attention mechanism.
\vspace{-0.1in}
\subsection{Hierarchical Language} \label{sec:hierarchical-language}
Despite its unimodality, natural language data often comes in a semantically meaningful hierarchy (\textit{e.g.} sections, paragraphs, sentences, etc.) which can be seen as granular abstraction of the underlying semantics in the data. Nonetheless, most transformer-based frameworks ignore this hierarchical structure which not only discards valuable prior knowledge about the semantics of the text, but in the long context scenario, it can also result in loss of information due to truncation (which is a common practice for long sequences in order to manage the computational complexity of the Softmax attention). HSA avoids truncation for long sequences by effectively reducing the computational complexity  via incorporating the hierarchical abstraction. 

For our empirical assessment, we have chosen the text classification problem for the sentiment analysis task on two datasets: \textit{IMDB}~\citep{maas-etal-2011-learning, imdb}, and \textit{Elec}~\citep{McAuley2013HiddenFA, elec}—for sentiment classification in movie reviews and Amazon electronics product reviews, respectively \citep{amazon}. The rationale for choosing these datasets lies in their inclusion of lengthy texts, which means they can benefit from hierarchical representation. For details, see Appendix \ref{appendix:datasets}. 

\textbf{Signal Hierarchy}: We represent each text datapoint in our datasets as a 3-level signal hierarchy: paragraphs, sentences and tokens. The position embedding at each level is the 1D grid embedding materialized by random Fourier features \citep{li2021learnable}. The tokens form the leaves of each signal hierarchy and are represented via vector embeddings. We have experimented with two token-embeddings in our experiments: the simple \textit{Word2Vec}~\citep{Mikolov2013EfficientEO, word2vec}, and the richer, transformer-based \textit{T5}~\citep{2020t5, t5}.

\textbf{Experimental Settings}: We have used similar architectures for both the baseline and the HSA, each amounting to $~1.2$M trainable parameters. For a fair comparison, we have used the same training hyper-parameters for both models. See Appendix \ref{appendix:experimental-settings} for the details of experimental settings.

\begin{table*}[htb]
\vspace{-0.1in}
\small
\centering
\begin{tabular}{cccc c cc}
  \hline
  \multirow{2}{*}{\textbf{Dataset}} & \multirow{2}{*}{\textbf{Model}} & \multicolumn{2}{c}{\textbf{Word2Vec embedding}} && \multicolumn{2}{c}{\textbf{T5-small embedding}} \\\cline{3-4}\cline{6-7}
  & & Acc & F1 Score && Acc & F1 Score \\
  \hline
  \multirow{2}{*}{IMDB} & FSA &  \textbf{${0.6739}\pm{0.0004}$} & 0.6739$\pm$0.0004 && 0.7577$\pm$0.0024& 0.7577$\pm$0.0024 \\
  & HSA & \textbf{0.7469$\pm$0.0029} & \textbf{0.7468$\pm$0.0027} && \textbf{0.8129$\pm$0.0010} & \textbf{0.8129$\pm$0.0010} \\
  \hline
  \multirow{2}{*}{Elec} & FSA &0.7182$\pm$0.0001  &0.7182$\pm$0.0001 && 0.8212$\pm$0.0014 &0.8212$\pm$0.0014 \\
  & HSA & \textbf{0.7549$\pm$0.0005} & \textbf{0.7549$\pm$0.0005} && \textbf{0.8521$\pm$0.0022} & \textbf{0.8521$\pm$0.0022} \\
  \hline
\end{tabular}
\caption{The sentiment classification Accuracy/F1 score comparison for the Flat Self-Attention (\textbf{FSA}), \textit{i.e.} the Softmax attention, and the Hierarchical Self-Attention (\textbf{HSA}).}
\label{tab:nlpresults}
\vspace{-0.1in}
\end{table*}
\textbf{HSA vs. Flat Self-Attention}: Table \ref{tab:nlpresults} depicts the test Accuracy and F1 Score of sentiment classification for the two models on the IMDB and Elec datasets. As these results show, HSA consistently and significantly outperforms the standard Softmax self-attention across the datasets as well as the token-embeddings. The superiority of HSA over the standard self-attention can be attributed to two main factors: (1) by incorporating the semantic hierarchical knowledge of the problem within the attention computation process, HSA effectively employs a form of regularization based on the scale separation prior that protects it against potential overfitting, and (2) for long input sequences, unlike the standard self-attention mechanism, HSA can evade truncation of the input sequence by effectively reducing the memory and the compute footprints of the attention mechanism.   

\textbf{Word2Vec vs. T5 embedding}: From Table \ref{tab:nlpresults}, we also observe that the classification results significantly improve for both models by replacing the basic Word2Vec token embedding with the richer T5 embedding. This is not surprising, but it also shows that our proposed HSA framework can be incorporated as a (shallow) adaptor on the top of pre-trained foundational models and adapt them for a new domain. Furthermore, we can see the gap between the HSA and the standard self-attention intensifies for simpler token embeddings. In other words, where we do not have access to pre-trained embedding models, the superiority of HSA and its hierarchical inductive bias is even more significant. This points to the potential significant boost we can gain by training HSA-based, multi-modal foundational models instead of the classical transformers. Due to its demanding computational requirements, we leave this empirical investigation for future work. Nonetheless, in Appendix \ref{appendix:comparison}, we have experimented with training the HSA-based transformer from scratch (as opposed to on the top of a pre-trained embedding) and showed superior generalization capability compared to the classical transformer.    
\begin{wraptable}%[h!]
% \vspace{-0.1in}
{r}{0.51\textwidth}
\small
\centering
\begin{tabular}{ccc}
  \hline
  \textbf{Model} & \textbf{Acc} &\textbf{F1 Score} \\
  \hline
  FSA & $0.7921\pm 0.0036$ & $0.7902\pm 0.0003$\\
  DeepSet & $0.7578\pm 0.0096$ & $0.7590\pm 0.0065$\\
  HSA & \textbf{0.7952$\pm$0.0155} & \textbf{0.8091$\pm$0.0102}\\
  \hline
\end{tabular}
\caption{Accuracy/F1-score comparison for the Flat Self-Attention (\textbf{FSA}), \textit{i.e.,} the Softmax attention, DeepSet\citep{NIPS2017_f22e4747}, and the Hierarchical Self-Attention (\textbf{HSA}) on N24News dataset.}
\label{tab:multi-modal}
\vspace{-0.1in}
\end{wraptable}

\vspace{-0.2in}
\subsection{Multi-modal News Classification}\label{sec:news-classification}
In order to showcase the capabilities of our proposed framework in multi-modal settings, we have performed experiments for the news classification task on N24News dataset \citep{wang2022n24news}, where for each news article not only we have language and image modalities present, but the text itself consists of multiple sub-modalities, \textit{i.e.} headline, abstract, image caption and body. 
\begin{table*}[ht!]
\vspace{-0.2in}
    \centering
    \small
    \begin{tabular}{ccccc c cccc}
        \hline
        \multirow{2}{*}{\textbf{Dataset}} & \multicolumn{4}{c}{\textbf{Original RoBERTa}} && \multicolumn{4}{c}{\textbf{HSA-RoBERTa}} \\\cline{2-5}\cline{7-10} & Acc$\uparrow$ & Pre$\uparrow$ & Rec$\uparrow$ & FL(M)$\downarrow$ && Acc$\uparrow$ & Pre$\uparrow$ & Rec$\uparrow$ & FL(M)$\downarrow$ \\
        \hline
         IMDB(264) & 0.9558 & 0.9558 & 0.9558 & 214.94 && 0.9494 & 0.9501 & 0.9494 & \textbf{4.32} \\
         AGNEWS(54) & 0.9469 & 0.9469 & 0.9469 & 8.99 && 0.9422 & 0.9423 & 0.9422 & \textbf{0.8357} \\
         CoLA(12) & 0.8150 & 0.8348 & 0.8017 & 0.4441 && 0.7687 & 0.7608 & 0.7821 & \textbf{0.1912} \\
         SST-2(26) & 0.9403 & 0.9404 & 0.9402 & 2.08 && 0.9025 & 0.9083 & 0.9014 & \textbf{0.4132}\\
         MRPC(55) & 0.9117 & 0.9006 & 0.8938 & 9.33 && 0.8553 & 0.8613 & 0.7963 & \textbf{0.8481} \\
         RTE(70) & 0.7833 & 0.7870 & 0.7796 & 15.11 && 0.7400 & 0.7400 & 0.7377 & \textbf{1.29}\\
         QNLI(38) & 0.9267 & 0.9267 & 0.9268 & 4.45 && 0.5072 & 0.3398 & 0.7531 & \textbf{0.5643}\\
        \hline
    \end{tabular}
    \caption{The FLOPs comparisons for zero-shot HSA approximation of RoBERTa-base layers 7,9,11 and RoBERTa-large layers 16,18,20,22,24 (for IMDB). We have reported MFLOPs per impacted layers as well as Accuracy (Acc), Precision (Pre) and Recall (Rec). The FLOPs are computed based on the average seq. length (shown in parentheses) for each dataset.} 
    \label{tab:zero-shot}
    \vspace{-0.2in}
\end{table*}

\textbf{Baselines}: For N24News dataset, most approaches in the literature concatenate a subset of the text sub-modalities and use that as the representation of the whole article. There are also a few multi-modal methods that incorporate the image modality as well, the best of which achieves $91\%$ Accuracy and $90\%$ F1 Score using $~211$M trainable parameters \citep{wang2022n24news}, not to mention incorporating other tricks such as using multiple loss functions to achieve the SOTA performance. For our experimental evaluation of HSA, however, we would need to keep these other contributing factors out, and instead compare moderate size models within our computational budget that are only different in their attention mechanisms. To this end, for our baseline method, we concatenate headline, abstract and body into one text sequence and use that to train a classical transformer (realized via one-level signal hierarchy). As the second baseline, we incorporate a multi-modal model based on the DeepSet architecture \citep{NIPS2017_f22e4747} to incorporate the image modality as well as the text; see Appendix \ref{sec:model-architectures} for details. For all baselines as well as our HSA-based model, we ensure the number of trainable parameters is around $~12$M.

\textbf{Signal Hierarchy}: For the HSA-based model, each news article is represented as a signal hierarchy where at the top level the image modality as well as the text sub-modalities are represented by the \textit{key-value} signal type (see Appendix \ref{appendix:signal-generalization}). The headline, abstract and caption sub-trees are further divided into tokens in the next level using the 1D Grid signal type; whereas, the body is divided into paragraphs (again using 1D Grid signal) where each paragraph is treated as a leaf by pooling the text embedding of the whole paragraph. To embed the text components at the leaves, we use \textit{e5-base}~\citep{wang2022n24news, e5}; whereas, for image leaves, we use \textit{VIT}~\citep{dosovitskiy2021image, vit}. Both of these models have shown superior performance in various benchmarks \citep{muennighoff2023mteb, russakovsky2015imagenet}.

\textbf{Results}: Table \ref{tab:multi-modal} shows the test accuracy and F1 Score for the three competing methods for the N24News multi-class classification problem. From these results, we can see that our HSA methodology outperforms the baselines and the difference is significant. Interestingly, despite incorporating the additional modality of image, the performance of DeepSet significantly declines compared to the vanilla uni-modal, flat attention. This signifies the fact that it is not enough to only incorporate other information modalities within the model, but also \textit{how} they are incorporated is equally important to boost the model's generalization. In that sense, our proposed nested signal formalism along with its hierarchical attention mechanism provide a principled methodology to incorporate different information modalities within a transformer model. 
\vspace{-0.1in}
\subsection{Zero-Shot Hierarchical Approximation}\label{sec:zero-shot}
An important feature of our proposed framework is that Theorem \ref{thm:optimality} gives us the theoretical basis for approximating Softmax attention via HSA given an appropriate hierarchical structure. This means that HSA can seamlessly replace regular Softmax attention \textit{after} training, and depending on the task and the original model, the accuracy may not experience significant drop. The main objective for such replacement post-training is to reduce the number of FLOPs needed for the self-attention operation.
% as HSA needs much less FLOPs compared to classical self-attention.
To further examine this idea, we have adopted the classical pre-trained RoBERTa model \citep{liu2019roberta} and have replaced the Softmax self-attention operation in it with HSA, and then run it against some benchmark classification datasets. During this experimentation, we made a few insightful observations. First, in general, the performance drops significantly if we replace Softmax attention with HSA for \textit{all} hidden layers of RoBERTa, and some amount of fine-tuning is needed to regain the original performance. However, zero-shot replacement is still feasible if only a subset of layers go through HSA replacement. In particular, earlier layers seem to be more sensitive to HSA approximation while the final layers are more amenable to it. Furthermore, we observed that by interleaving HSA layers and regular Softmax layers, we can significantly reduce the accuracy gap.

Based on these observations, we applied HSA approximation to layers $7$, $9$ and $11$ in RoBERTa-base and $16$, $18$, $20$, $22$ and  $24$ in RoBERTa-large. As for the hierarchy, instead of using the sentence/paragraph/etc. structures in text, we opted to fixed hierarchies generated by non-overlapping hopping windows on the input text. In particular, we used a four level hierarchy where the layers' branching factors from top to bottom are $16$, $8$, $4$ and $2$. For more experimental results on different hierarchy structures and different HSA layer combinations, see Appendix \ref{appendix:ablation}. Table \ref{tab:zero-shot} compares HSA-equipped RoBERTa (henceforth HSA-RoBERTa) and the original RoBERTa in terms of FLOPs as well as Accuracy on 5 GLUE benchmarks \citep{wang-etal-2018-glue}, IMDB benchmark \citep{maas-EtAl:2011:ACL-HLT2011} and AGNEWS benchmark \citep{NIPS2015_250cf8b5}. As these results show HSA layers significantly reduce the number of FLOPs for attention computation, and depending on the task the accuracy drop can be minimal. Keep in mind these results are obtained \textit{completely zero-shot without any fine-tuning}. Indeed fine-tuning can further close the accuracy gap while maintaining the performance gain by HSA. This points to another HSA's strong potential: to be used as a self-attention approximation technique for long-context problems. We leave the further exploration of this direction to future work.
\vspace{-0.1in}
%\vspace{-0.1in}
\section{Conclusions} \label{sec:conclusions}
\vspace{-0.05in}

In this paper, we propose HSA, a novel mathematical framework for generalizing classical Softmax self-attention mechanism to hierarchical problems that not only occupy multiple scales but may be also defined on multiple geometries. Unlike many existing work that approach these problems via heuristic neural architectures, we mathematically derive our formulation from the principle of entropy minimization given the (nested) data signal is seen as a statistical mechanical system. Given its strong theoretical and algorithmic properties, we empirically showed that HSA can be used to inject hierarchical domain knowledge into training of transformer models and hence produce models with better generalization. We further showed that HSA can be used as a self-attention approximation technique for pre-trained models to significantly reduce the FLOPs needed for self-attention at the test time. This opens the door for HSA to be used in long context scenarios, even after training.

One high-impact future application of HSA is training large-scale foundational models that can naturally handle multi-modal and hierarchical inputs using the HSA formalism. On the theoretical side, HSA can be also extended to include non-Softmax attention mechanisms (See Appendix \ref{appendix:beyond}). The other important future direction is application of HSA to transformer decoder for hierarchical auto-regressive generation. This is important specially because it has the potential to boost LLMs in terms of both generalization (by incorporating hierarchical, multi-modal domain knowledge) and speed (due to the low-rank nature of HSA computation). Due to its significance, we have laid the foundations of hierarchical decoding via HSA in Appendix \ref{appendix:generation}.  

% representation learning for multi-modal, multi-scale settings where each datapoint simultaneously occupies various number of modalities at potentially different scales, each coming with a different geometry. Unlike most existing work which approach this problem by devising heuristic (and mostly rigid) neural architectures, our framework first models hierarchical data with a mathematical construct (\textit{i.e.} the nested signal), and then derives neural attention operations for it from the first principle of entropy minimization. This form of hierarchical, multi-modal attention mechanism provides a versatile dynamics to learn useful representation from arbitrarily arranged multi-modal data while at the same time being the closest hierarchy-based approximation to the standard Softmax attention mechanism, as we proved. By incorporating the scale separation prior within its neural dynamics, we empirically showed that our framework outperforms the flat Softmax attention mechanism on a a few select problems. Finally, as a mathematical generalization of the standard Softmax attention, we believe our proposed framework in this paper can lead the way to a new generation of transformers (and subsequently foundational models) that can seamlessly operate on arbitrary, multi-modal information hierarchies \textit{without} discarding geometrical and hierarchical priors accompanying them.

\newpage
\bibliographystyle{plain}
\bibliography{references}

\newpage
\appendix
\section{Notations} \label{appendix:notations}
Table \ref{tab:notation} summarizes our notations in the main paper. Moreover, Fig \ref{fig:hierarchy} visually demonstrates some of our tree-related notations.
\begin{figure}[ht]
\centering
\includegraphics[width=1\linewidth]{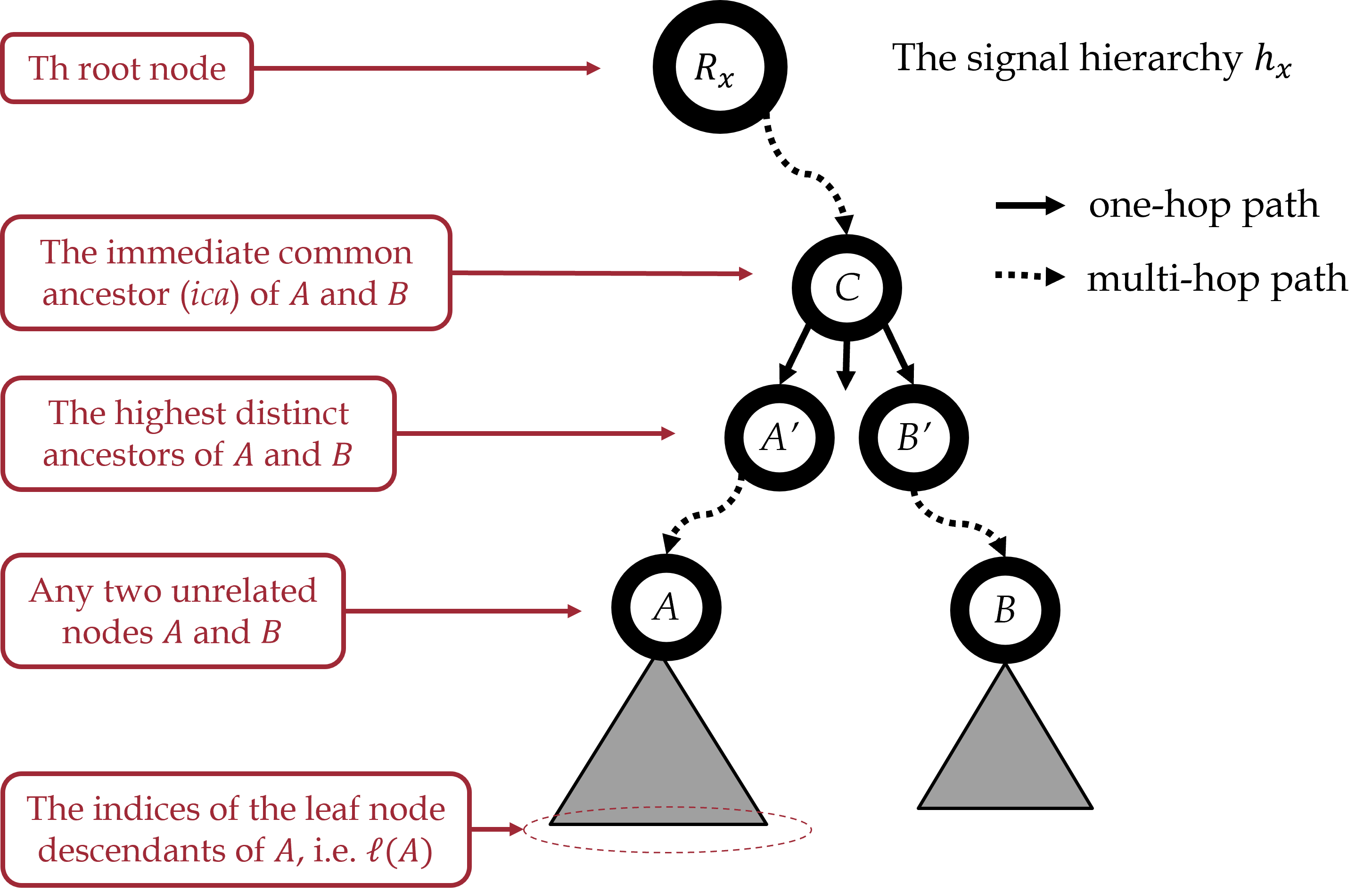}
\caption{The visual demonstration of some of our tree-related notations in the paper.}
\label{fig:hierarchy}
\end{figure}

\begin{table}[ht]
    \centering
    \begin{tabular}{|c|l|}
       \hline
       \textbf{Notation} & \textbf{Description} \\
       \hline
       \hline
       \textbf{Bold face} & A vector or a matrix when it is not obvious from the context\\
       $\boldsymbol{0}$ & A column vector of zeros\\
       $\boldsymbol{1}$ & A column vector of ones\\
       $x$ & A simple or nested signal\\
       $x(u)$ & The value of signal $x$ at position $u$\\
       $\Omega$ & The signal domain\\
       $\mathcal{C}$ & The vector space containing the signal range\\
       $d$  & The dimensionality of $\mathcal{C}$, \textit{i.e.} the channel dimension\\
       $\mathcal{D}$ & The set of all signal domains present in the problem\\
       $\varepsilon_\Omega$ & The position embedding function for domain $\Omega$\\
       $c$ & The dimensionality of each position embedding\\
       $\boldsymbol{\varepsilon}$ & The set of all position embedding functions for all domains in $\mathcal{D}$\\
       $\mathcal{S}$ & The set of all possible simple signals in the problem\\
       $\mathcal{N}_\ell$ & The set of all possible nested signals up to depth $\ell$\\
       $\mathcal{N}$ & The set of all possible nested signals in the problem\\
       $\mathcal{N}_0$ & An equivalent notation for $\mathcal{C}$\\
       $h_x$ & The signal hierarchy representing the finite nested signal $x$\\
       $A,B,C,...$ & The nodes in the signal hierarchy $h_x$\\
       $L_i$ & The leaf node in the signal hierarchy $h_x$ corresponding to the query variable $q_i$\\
       $R_x$ & The root node of the signal hierarchy $h_x$\\
       $\ell(A)$ & The set of indices of the leaf node descendants of node $A$\\
       $chd(A)$ & The children of node $A$\\
       $pa(A)$ & The parent of node $A$\\
       $sib(A)$ & The siblings of node $A$\\
       $ica(A,B)$ & The immediate common ancestors of the unrelated nodes $A$ and $B$\\
       $A',B'$ & The highest distinct ancestors of the unrelated nodes $A$ and $B$\\
       $M$ & The number of non-leaf nodes of the signal hierarchy $h_x$\\
       $b$ & The maximum branching factor of the signal hierarchy $h_x$\\
       $|\cdot|$ & Set cardinality\\
       $\Eta(Q\mid K)$ & The conditional entropy of the query variable $Q$ given the key variable $K$\\
       $\psi_{A\rightarrow B}$ & The (directional) interaction energy between the unrelated nodes $A$ and $B$\\
       $\phi(A)$ & The energy of node $A$\\
       $\nabla_{q_i}\phi(A)$ & The gradient of the energy of node $A$ wrt the query vector $q_i$\\
       $\theta_{i,j}$ & The (directional) attention weight between query $q_i$ and key $k_j$\\
       $\boldsymbol{\Theta}$ & The attention matrix\\
       $\mathcal{B}$ & The set of (hierarchical) stochastic matrices respecting the block constraint wrt $h_x$\\
       \hline
    \end{tabular}
    \caption{The notations used in the main paper.}
    \label{tab:notation}
\end{table}

\section{Related Work} \label{sex:related-work}
\textbf{Hierarchical models}: The notion of hierarchy has played a key role in data representation and clustering in Machine Learning \citep{murtagh2012algorithms, shetty2021hierarchical}. In the context of transformers, the idea of multi-scale attention has been mainly used to combat the long-context challenge in language \citep{huang2023advancing, pappagari2019hierarchical, ye2019bp, nawrot2021hierarchical,yang2016hierarchical, liu2019hierarchical}, but it has also made its way into vision \citep{liu2021swin, zhang2022nested, liu2024vision}, audio \citep{yu2022museformer} and graphs \citep{trang2024scalable}. Nevertheless, most of these frameworks deal with a single modality that occupies the same geometry, just at different scales. Our proposed framework, in contrast, can incorporate an arbitrary number of mutually-inconsistent geometries within its representation of the multi-scale data. Another related line of work is based on \textit{hierarchical matrices} \citep{hackbusch1999sparse, hackbusch2000sparse} that have been used traditionally for clustering \citep{thiesson2012fast} as well as transition matrix approximation \citep{amizadeh2012variational}, but more recently for attention matrix \citep{zhu2021h}.

\textbf{Multi-modal models}: Multi-modality has been vastly explored in Machine Learning \citep{baltruvsaitis2018multimodal} and more recently within various neural architectures, using various \textit{fusion} techniques \citep{gao2020survey, bayoudh2022survey, guo2019deep, suzuki2022survey}. As for multi-modal transformers \citep{xu2023multimodal}, most frameworks are tailored toward a fixed set of modalities, \textit{e.g.} vision-language \citep{kim2021vilt, huang2020pixel, lu2019vilbert, zhu2021long}, audio-visual \citep{truong2021right}, audio-language \citep{deshmukh2023pengi}, graph-language \citep{zhang2020text}, vision-pose-audio \citep{rahman2021tribert}, audio-vision-language \citep{tsai2019multimodal}, etc. The fusion of different modalities in these frameworks typically takes place via a heuristic operation at the embedding or the attention stages resulting in distinct architectural variants, which are typically categorized as (1) single-stream (\textit{e.g.} \citep{li2019visualbert}), (2) multi-stream (\textit{e.g.} \citep{lu2019vilbert}), and (3) hybrid-stream (\textit{e.g.} \citep{lin2020interbert}). However, most of these frameworks either ignore the geometrical (positional) information for some of the input modalities, or impose artificial restrictions on input geometries such as alignment. 
% Moreover, since the interaction pattern of different modalities is hard-coded into the architecture, these frameworks are often not flexible enough to generalize to different interaction patterns or other multi-modal problems. In contrast, our proposed framework handles multi-modality not via architecture but the data representation which makes it quite flexible and versatile in dealing with different modalities interacting in different ways across the data. Furthermore, our framework incorporates the geometry of all input modalities without imposing any extraneous constraint on them. 

\textbf{Geometric Deep Learning}: Geometric Deep Learning \citep{bronstein2021geometric} studies the invariance and equivariance properties of deep learning models by introducing the notion of \textit{signal} and its geometry which is explicitly modeled via the signal's domain. We build our framework also based on the same notion of signal and generalize it further to \textit{nested signals} which can represent hierarchical, multi-modal data which potentially encompass \textit{multiple domains}. Also, most frameworks within Geometric Deep Learning achieve the desired equivariance properties through the model's architecture (\textit{e.g.} CNNs \citep{li2021survey}, GNNs \citep{wu2020comprehensive}, and Group-equivaraint CNNs \citep{finzi2020generalizing}). A prominent exception is the LieTransformer \citep{hutchinson2021lietransformer} where the desired group-equivariance is achieved by explicit modeling of the position information and its separate similarity computation (as opposed to adding it to the feature vectors). The formulation of the position information in our framework is in part inspired by the LieTransfomer.

\textbf{The theoretical foundations of self-attention}: Despite its revolutionary success in Deep Learning, there has been quite little effort to understand the theoretical foundations of self-attention. These efforts provide various interpretations of self-attention, including the probabilistic view \citep{shim2022probabilistic, fan2020bayesian}, the causal view \citep{rohekar2024causal}, the structural inference view \citep{singh2023attention}, the dynamical system view \citep{huang2023understanding, lu2019understanding, dutta2021redesigning}, the statistical mechanical view \citep{rende2023mapping}, the variational denoising view \citep{nguyen2024mitigating}, the clustering view \citep{geshkovski2024emergence}, and the Hopfield network view \citep{ramsauer2020hopfield}. In this paper, we provide a statistical mechanical perspective to derive self-attention from the first principle of entropy minimization; in that sense, our interpretation is closely related to the statistical mechanical, denoising and Hopfiled network views. More importantly, our interpretation lends itself to straightforward generalization to the hierarchical self-attention mechanism which, as we show, is both theoretically optimal and efficiently computable.  
\section{Generalizing The Notion of Signal}\label{appendix:signal-generalization}

In standard Geometric Deep Learning, signals typically represent data structures in Computer Vision, Audio Processing, Natural Language Processing and Graph and Manifold Processing. But the notion off signal is quite versatile and can be generalized to include feature representations in classical Machine Learning. 
In particular, we note the special case where the signal domain $\Omega$ is a countable, discrete set with no additional structure. In this case, if the elements of $\Omega$ are conceptually indistinguishable, then any signal $x$ on $\Omega$ is said to be defined on an \textit{unordered set} and subsequently, the position embedding $\varepsilon_{\Omega}$ maps all the elements of $\Omega$ to the constant vector $\mathbf{0}$. The latter conveys that there is no positional information associated with the signal. As an example, a vector set can be seen as a signal defined on an unordered set. 

On the other hand, if the elements of $\Omega$ are distinguishable, we can define a bijective position embedding $\varepsilon_{\Omega}$ to carry that information into the position vector space. We refer to signals defined on such $\Omega$ domains as \textit{key-value} signals. For instance, a tabular feature vector in classical Machine Learning can be seen as a set of key-value pairs where the keys are the feature names and the values are the feature values, and hence modeled as a key-value signal. In this case, a text embedding model can be used to map the feature names into a vector space and regard the results as the position embeddings of those features. In other words, the notion of signal in our work is quite generic and encompasses not only the signal types in Geometric Deep Learning but also the classical tabular feature vectors.
\section{The Emergence of The Value Projection Matrix} \label{appendix:value}
The derived formulation for Softmax attention in \eqref{eq:softmax-attention} deviates from the classical Softmax attention in that it lacks separate \textit{value} projections, which can be quite restrictive as it significantly reduces the model's degrees of freedom. Nevertheless, the value projections can be theoretically injected into our derived formulation by considering learnable step-size for the gradient update in \eqref{eq:softmax-attention}. In particular, instead of setting step size to $\lambda=1$, we can let $\lambda=W_v$ where $W_v\in \mathbb{R}^{d\times d}$ is a trainable parameter. By doing so, \eqref{eq:softmax-attention} changes to:
\begin{equation}\label{eq:softmax-attention2}
    q_i \leftarrow q_i + \sum_{j=1,j\neq i}^N\frac{\exp(q_i^T k_j /\sqrt{d}+ e_i^Te_j)}{\sum_{t=1,t\neq i}^N\exp(q_i^T k_t /\sqrt{d}+ e_i^Te_j)}\cdot W_vk_j\text{,          } 1\leq i \leq N
\end{equation}
By defining $v_i= W_vk_j=W_vW_kx_j$, we effectively arrive at separate value projections, where $W_vW_k$ can be seen as the value projection matrix used in the standard Softmax attention formulation. 

Note that by introducing learning step-size in the form of projection matrix, we effectively \textit{project} the direction of the gradient vector into a new direction. So in that sense, \eqref{eq:softmax-attention2} is no longer a strict gradient ascent update. In other words, depending on the learned projection matrix $W_v$ and the value of gradient vector for point $q_i$, we may decrease or even increase the upper-bound on the conditional entropy. This extra degree of flexibility indeed enables the transformer model to best adapt to the end task. And therefore, we have adopted separate value projections in our code as well as all of our reported experiments, similar to the standard transformer architecture.
\section{Efficient Calculation of HSA} \label{appendix:algorithm}

\subsection{Dynamic Programming}
Even though our proposed HSA formulation in Eq. \eqref{eq:hierarchical-attention} brings down the degrees of freedom for the attention matrix to $O(M\cdot b^2)$, the na\"ive implementation of the recurrence in Eq. \eqref{eq:hierarchical-attention} for all query variables $q_i$ still takes $O(b^2\cdot M\log_b M)$ time. However, we note that the calculation of $\nabla_{q_i}\phi(R_x)$ and $\nabla_{q_j}\phi(R_x)$ for any two leaf nodes $i, j \in \ell(R_x)$ shares some common intermediate calculations corresponding to the shared segment of the two paths that connect the root node to $i$ and $j$. This is indeed the notion of \textit{common substructure} which is the hallmark of problems that can be efficiently solved by dynamic programming. To this end, in this section, we propose a dynamic programming algorithm that computes $\boldsymbol{\nabla\Phi}$ in Eq. \eqref{eq:matrix-form} in $O(M\cdot b^2)$ time by traversing the signal hierarchy tree in two passes: a bottom-up pass followed by a top-down pass. Essentially, the former computes the energy function $\phi(\cdot)$ while the latter calculates the attention vectors $\nabla_{q_i}\phi(\cdot)$ for all $i\in \ell(R_x)$. Algorithms \ref{alg:HSA}--\ref{alg:top-down} illustrate these steps.

\begin{algorithm}[H]
\setcounter{AlgoLine}{0}
\SetAlgoLined
\DontPrintSemicolon
\LinesNumbered
%\Input 

\textbf{Input: }{$ h_x $}\tcp*{The signal hierarchy for nested signal $x$}
%\Output
\textbf{Output: }
{$ \{\nabla_{q_i}\phi(R_x)\in \mathbb{R}^d, \forall i \in \ell(R_x)\} $}
\SetKwFunction{ss}{ComputeSufficientStats}
\SetKwFunction{att}{ComputeAttention}

$u \leftarrow -\log(|\ell(R_x)|)$

\ss{$R_x$}\tcp*{Bottom-up pass}

\att{$R_x, u, \boldsymbol{0}$}\tcp*{Top-down pass}

\ForEach{$i\in \ell(R_x)$}
{
    $\nabla_{q_i}\phi(R_x)\leftarrow \vartheta(L_i)$
}

\textbf{return} $\{\nabla_{q_i}\phi(R_x)\mid  i \in \ell(R_x)\}$

\caption{Hierarchical Self Attention (HSA)}
\label{alg:HSA}
\end{algorithm}

\begin{algorithm}[H]
\setcounter{AlgoLine}{0}
\SetAlgoLined
\DontPrintSemicolon
\LinesNumbered
%\Input
\textbf{Input: }{$ A\in h_x $}\tcp*{A node in the signal hierarchy}
%\Output
\textbf{Output: }{$ \phi(A)\in \mathbb{R},\eta(A)\in \mathbb{R},\vartheta(A)\in \mathbb{R}^d$}
% ,\rho_q(A)\in \mathbb{R}^d,\rho_k(A)\in \mathbb{R}^d,\rho_v(A)\in \mathbb{R}^d $}

    \SetKwFunction{FMain}{ComputeSufficientStats}
    \SetKwProg{Fn}{Function}{:}{}
    \Fn{\FMain{$A$}}{
        \eIf{$A$ is a leaf}
        {
            $\phi(A) \leftarrow \infty$

            $\rho_q(A) \leftarrow q(A)$\tcp*{$q(A)$ is the query at leaf $A$} 

            $\rho_k(A) \leftarrow k(A)$\tcp*{$k(A)$ is the key at leaf $A$}
            
            $\rho_v(A) \leftarrow v(A)$\tcp*{$v(A)$ is the value at leaf $A$} 
        }
        {
            \ForEach{$C\in chd(A)$}
            {
                \FMain{$C$}
            }
            $\phi(A) \leftarrow -\sum_{C \in chd(A)}\frac{|\ell(C)|}{|\ell(A)|}\cdot\log\big[\exp\big(-\phi(C)\big) + \exp\big(-\eta(C)\big)\big]$

            $\rho_q(A) \leftarrow \frac{1}{|\ell(A)|}\sum_{C \in chd(A)}|\ell(C)|\rho_q(C)$

            $\rho_k(A) \leftarrow \frac{1}{|\ell(A)|}\sum_{C \in chd(A)}|\ell(C)|\rho_k(C)$

            $\rho_v(A) \leftarrow \frac{1}{|\ell(A)|}\sum_{C \in chd(A)}|\ell(C)|\rho_v(C)$
        }
        $\forall B\in sib(A): \psi_{A\rightarrow B}' \leftarrow \varepsilon(A)^T\varepsilon(B) + \frac{1}{\sqrt{d}}\rho_q(A)^T\rho_k(B)-\sqrt{d}+\log|\ell(B)|$

        $\eta(A)\leftarrow -\log\big[\sum_{B \in sib(A)}\exp(\psi_{A\rightarrow B}')\big]$
        
        $\vartheta(A)\leftarrow \exp\big(-\eta(A)\big)\sum_{B \in sib(A)}\exp(\psi_{A\rightarrow B}')\cdot \rho_v(B)$        
    }
\textbf{End Function}
\caption{The Bottom-up Sufficient Statistics Computation}
\label{alg:bottom-up}
\end{algorithm}

\begin{algorithm}[H]
\setcounter{AlgoLine}{0}
\SetAlgoLined
\DontPrintSemicolon
\LinesNumbered
%\Input
\textbf{Input: }{$ A\in h_x, u\in \mathbb{R}, \textbf{v} \in \mathbb{R}^d $}\\
%\Output
\textbf{Output: }{$ \vartheta(A)\in \mathbb{R}^d $}\tcp*{The attention vectors}

    \SetKwFunction{FMain}{ComputeAttention}
    \SetKwProg{Fn}{Function}{:}{}
    \Fn{\FMain{$A, u, \textbf{v}$}}{
        
        \ForEach{$C\in chd(A)$}
        {
            $\vartheta(C) \leftarrow \textbf{v}-\frac{1}{\sqrt{d}}\exp\big(u + \text{LogSigmoid}\big[\phi(C)-\eta(C)\big]\big)\cdot\vartheta(C)$

            $u' \leftarrow u + \text{LogSigmoid}\big[\eta(C)-\phi(C)\big]$ 

            \FMain{$C, u', \vartheta(C)$}
        }
}
\textbf{End Function}
\caption{The Top-down Attention Computation}
\label{alg:top-down}
\end{algorithm}

\subsection{Correctness and Complexity}
First off, it is not hard to show that for the case of flat hierarchy, Algorithms \ref{alg:HSA}--\ref{alg:top-down} reduce to the standard Softmax attention calculations. 
% In particular, Algorithm \ref{alg:bottom-up} calculates the normalization denominator of Softmax for each row of the attention matrix $\boldsymbol{A}$ and Algorithm \ref{alg:top-down} directly computes $\boldsymbol{A}\boldsymbol{V}^T$ (where $\boldsymbol{V}$ is the value matrix) without explicitly forming $\boldsymbol{A}$. 
In other words, the standard Softmax attention calculation is a special case of our proposed algorithm here.
Showing the correctness and the complexity of Algorithms \ref{alg:HSA}--\ref{alg:top-down} for the general case, however, is more involved, which we achieve through the following theorem.
\begin{theorem}
    For a given signal hierarchy $h_x$, if both query and key variables are normalized via the LayerNorm function, then Algorithms \ref{alg:HSA}--\ref{alg:top-down} compute $\{\nabla_{q_i}\phi(R_x)\mid i \in \ell(R_x)\}$ in Eq. \eqref{eq:hierarchical-attention} in $O(M\cdot b^2)$ based on the interaction energy function defined in Eq. \eqref{eq:interaction-energy}, where $b$ and $M$ are the branching factor and the number of families in $h_x$, respectively.
\end{theorem}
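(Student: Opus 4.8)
The plan is to prove correctness and the runtime bound separately, each by induction on the signal hierarchy tree, with correctness split along the two passes: first I would show the bottom-up routine (Algorithm \ref{alg:bottom-up}) computes the energies $\phi(A)$ of \eqref{eq:hierarchical-energy} together with the pooled statistics $\rho_q,\rho_k,\rho_v$ and the auxiliary scalar $\eta$, and then I would show the top-down routine (Algorithm \ref{alg:top-down}) assembles these into the gradients \eqref{eq:hierarchical-attention}. The single algebraic fact that makes everything collapse is that, after LayerNorm, every query and key has fixed norm $\sqrt d$, so $\|q_i-k_j\|^2 = 2d - 2q_i^Tk_j$. The quadratic interaction term therefore \emph{linearizes}, and the double average over the leaves of two subtrees factors as $\frac{1}{|\ell(A)||\ell(B)|}\sum_{i\in\ell(A)}\sum_{j\in\ell(B)}\|q_i-k_j\|^2 = 2d - 2\,\rho_q(A)^T\rho_k(B)$. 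This is what licenses replacing all pairwise leaf interactions by a single inner product of pooled means, hence the $O(d)$-per-node storage of the sufficient statistics.

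For the bottom-up pass I would argue by structural induction from the leaves up. The base case fixes $\phi=\infty$ and the raw $q,k,v$ at leaves. For the inductive step I first verify that the weighted-mean-of-means update $\rho_\bullet(A)=\frac{1}{|\ell(A)|}\sum_{C\in chd(A)}|\ell(C)|\rho_\bullet(C)$ equals the true average over $\ell(A)$, which is immediate. Using the linearization above together with the fact that for siblings the highest distinct ancestors are the nodes themselves ($A'=A$, $B'=B$), I would show $\psi'_{A\rightarrow B} = -\psi_{A\rightarrow B} + \log|\ell(B)|$, so that $\exp(\psi'_{A\rightarrow B}) = |\ell(B)|\,\beta(A,B)$ with $\beta$ as in \eqref{eq:alpha-beta}. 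Consequently $\exp(-\eta(A)) = \sum_{B\in sib(A)}|\ell(B)|\beta(A,B)$ is exactly the interaction sum appearing in the bracket of \eqref{eq:hierarchical-energy}, and the recurrence for $\phi(A)$ in Algorithm \ref{alg:bottom-up} matches \eqref{eq:hierarchical-energy} term by term. The same identity shows $\vartheta(A)$ is the interaction-weighted value average $\bar v(A):=\big(\sum_{B\in sib(A)}\exp(\psi'_{A\rightarrow B})\rho_v(B)\big)/\sum_{B\in sib(A)}\exp(\psi'_{A\rightarrow B})$, which is precisely the pooled factor demanded by the interaction gradient $\nabla_{q_i}\psi_{A\rightarrow C}=\frac{1}{\sqrt d\,|\ell(A)|}(q_i-\rho_k(C))$ (with $\rho_k$ replaced by the pooled value $\rho_v$ once the value projection of Appendix \ref{appendix:value} is folded in).

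The top-down pass is the crux and where I expect the real work. I would fix a leaf $i$ and its root-to-leaf path $R_x=P_0,P_1,\dots,P_m=L_i$, and unroll \eqref{eq:hierarchical-attention} along it. Writing $\alpha(A)=e^{-\phi(A)}$ and $e^{-\eta(A)}=\sum_C|\ell(C)|\beta(A,C)$, the recursion/interaction split uses the weights $w_1(A)=\alpha(A)/(\alpha(A)+e^{-\eta(A)})=\sigma(\eta(A)-\phi(A))$ and $w_2(A)=\sigma(\phi(A)-\eta(A))$ (with $\sigma$ the logistic sigmoid), whose logarithms are exactly the LogSigmoid arguments in Algorithm \ref{alg:top-down}. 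Two telescopings then drive the argument: the leaf-count ratios $|\ell(P_t)|/|\ell(P_{t-1})|$ cancel down the path, and the products of recursion weights accumulate, so that the closed form becomes $\nabla_{q_i}\phi(R_x)=\frac{1}{\sqrt d\,|\ell(R_x)|}\sum_{t=1}^m\big(\prod_{s<t}w_1(P_s)\big)w_2(P_t)\,(q_i-\bar v(P_t))$. I would then match this to Algorithm \ref{alg:top-down} by identifying $e^{u}=\frac{1}{|\ell(R_x)|}\prod_{s}w_1(P_s)$ (initialized by $u\leftarrow-\log|\ell(R_x)|$) and checking that the in-place update of $\vartheta$ accumulates these summands; the vanishing of $w_1$ at the leaf (where $\phi=\infty$) makes the product telescope to $\sum_t\big(\prod_{s<t}w_1\big)w_2=1$, which accounts for the $q_i$ self-coefficient and is consistent with the residual attention convention. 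The delicate point is bookkeeping: the overwritten $\vartheta$ must be read in its bottom-up meaning $\bar v$ before being replaced, and the $u$ passed to the recursive call must use the post-update value while the $\vartheta$ update uses the pre-update one.

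Finally, for the complexity I would count amortized over families. Each node is touched a constant number of times per pass. In the bottom-up pass the work at a node is one $O(d)$ inner product per sibling, so a family of size $s$ costs $O(s^2 d)$; since $\sum_{\text{families}} s_f^2 \le b\sum_f s_f = O(Mb^2)$ (the sum $\sum_f s_f$ counts all non-root nodes and each $s_f\le b$), the pass is $O(Mb^2)$ with $d$ treated as constant. The top-down pass does $O(d)$ vector work per child, totalling $O(Mbd)$, which is dominated. Hence the overall cost is $O(Mb^2)$, improving on the naive per-leaf evaluation of \eqref{eq:hierarchical-attention} by the factor $\log_b M$, since the shared path prefixes are computed once rather than re-derived for each of the $O(Mb)$ leaves.
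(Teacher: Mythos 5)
Your overall strategy is essentially the paper's own: use LayerNorm to linearize the quadratic interaction so that all pairwise leaf terms collapse to inner products of pooled means; verify that the bottom-up pass reproduces the energy recurrence \eqref{eq:hierarchical-energy} via $\exp(\psi'_{A\rightarrow B})=|\ell(B)|\beta(A,B)$ and $\exp(-\eta(A))=\sum_{B\in sib(A)}|\ell(B)|\beta(A,B)$; unroll the first-order recurrence \eqref{eq:hierarchical-attention} along each root-to-leaf path into a closed form whose coefficients are products of your $w_1=\sigma(\eta-\phi)$ and $w_2=\sigma(\phi-\eta)$ (exactly the LogSigmoid terms of Algorithm \ref{alg:top-down}, cf. \eqref{eq:f}), with the accumulated log-coefficient $u$ initialized to $-\log|\ell(R_x)|$; and bound the cost by counting work per family. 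In fact, your explicit telescoping of the leaf-count ratios is cleaner than the paper's writeup, which drops the factor $|\ell(B^i)|/|\ell(R_x)|$ when passing from \eqref{eq:hierarchical-attention} to \eqref{eq:recurrence} and recovers it only through the initialization of $u$; your amortized bound $\sum_f s_f^2\le b\sum_f s_f=O(M\cdot b^2)$ also matches the paper's counting argument.

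The one step that fails is your formula for the interaction gradient, $\nabla_{q_i}\psi_{A\rightarrow C}=\frac{1}{\sqrt d\,|\ell(A)|}\big(q_i-\rho_k(C)\big)$. That is the gradient of the raw quadratic in \eqref{eq:interaction-energy}; but the gradients the theorem refers to are taken \emph{after} the LayerNorm reduction (see \eqref{eq:psi} and \eqref{eq:euclidean-interation}, and the same convention in the proof of Proposition \ref{proposition:softmax}), namely $\nabla_{q_i}\psi_{A\rightarrow C}=-\frac{1}{\sqrt d\,|\ell(A)|\,|\ell(C)|}\sum_{j\in\ell(C)}k_j$, with no $q_i$ term. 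This matters for your matching step: under your convention the closed form acquires a self-term
\begin{equation*}
\frac{q_i}{\sqrt d\,|\ell(R_x)|}\sum_{t}\Big(\prod_{s<t}w_1(P_s)\Big)w_2(P_t)=\frac{q_i}{\sqrt d\,|\ell(R_x)|},
\end{equation*}
using exactly the telescoping-to-one identity you invoke, whereas Algorithm \ref{alg:top-down} never accumulates any multiple of $q_i$ into $\vartheta(L_i)$ (its Line 3 only ever adds pooled sibling statistics). So under your convention $\vartheta(L_i)\neq\nabla_{q_i}\phi(R_x)$ and the correctness claim you are proving would literally fail; the appeal to ``the residual attention convention'' does not close this, because the residual update sits outside the algorithms being analyzed. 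The fix is one line: differentiate the reduced interaction energy \eqref{eq:psi}, as the paper does in \eqref{eq:g}, after which your $g$-terms become pure pooled-key (or pooled-value, once the value projection of Appendix \ref{appendix:value} is folded in) combinations and the rest of your argument goes through verbatim.
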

\begin{proof}
See Appendix \ref{proofs:thm2}.
\end{proof}

Once the attention values are computed using Algorithm \ref{alg:HSA}, the query vector representations at the leaf nodes can be updated via the residual connection:
\begin{equation}
    q_i\leftarrow q_i - |\ell(R_x)|\sqrt{d}\cdot \nabla_{q_i}\phi(R_x), \forall i\in \ell(R_x)
\end{equation}
And that would conclude the HSA operation.

\subsection{Black-box Attention Computation}\label{sec:black-box}
It is important to note that Lines 16-18 in Algorithm \ref{alg:bottom-up} perform the standard Softmax attention mechanism on the members of a family that contains node $A$. In other words, our proposed HSA algorithm can be seen as a \textit{divide-and-conquer} algorithm where the attention computation on the whole sequence (\textit{i.e.} the hierarchy's leaves) is broken down into attention computation on the much smaller families in the hierarchy (aka the \textit{sub-problems}) via the bottom-up part of the algorithm, and then these intermediate results (aka the \textit{sufficient statistics}) are combined through the top-down part of the algorithm to produce the final self-attention output.
From this perspective, if the average branching factor (i.e. the family size) in the hierarchy is $b$, then on average, the sub-problem attention calculation takes $O(b)$ time and memory for each node $A$, which makes the $O(b^2)$ complexity for the entire family. Then intuitively for the total of $M$ families in the hierarchy, the final computational complexity comes to $O(M\cdot b^2)$. As a special case, for flat hierarchies where there is only $M=1$ family of size $b=N$ (\textit{i.e.} the sequence length), the complexity becomes $O(N^2)$. 

% As mentioned above, for simple signals with one level signal hierarchies, Algorithm \ref{alg:HSA} reduces to the traditional Softmax attention calculation with the computational complexity of $O(b^2)$ where the branching factor $b$ is the same as the sequence length. 

More importantly, from the practical perspective, the divide-and-conquer view of the proposed algorithm encapsulates the sub-problem self-attention computation (in Lines 16-18 in Algorithm \ref{alg:bottom-up}) as a \textit{black-box} module that can be easily replaced by any exact or approximate function that computes the standard Softmax attention. This has a significant practical implication, as it allows the HSA algorithm to invoke any efficient attention computation frameworks in the literature as its base attention calculation sub-module. For instance, the quadratic factor $b^2$ in $O(M\cdot b^2)$ can be further reduced to linear if one employs one of the many approximation techniques proposed for efficient computation of Softmax attention \cite{choromanski2020rethinking, beltagy2020longformer, peng2021random, katharopoulos2020transformers} as the black-box sub-problem attention computation module in Lines 16-18 in Algorithm \ref{alg:bottom-up}. 

\subsection{GPU Implementation}
The Algorithms \ref{alg:HSA}--\ref{alg:top-down} are technically classical tree-traversal algorithms which are typically not fit for parallel processing on GPU. Indeed, this would introduce a practical challenge for incorporation of HSA within modern Deep Learning frameworks. To address this challenge, in this section, we present two major techniques for introducing parallelization both at the node level for one signal hierarchy as well as at the batch level across multiple signal hierarchies. 

First, we note that all the summations in Algorithms \ref{alg:bottom-up} and \ref{alg:top-down} can be done in parallel for different sets of nodes in $h_x$. In particular, if a summation statement can be parallelized for $K$ nodes of $h_x$, it can be implemented as a (sparse) matrix by dense vector multiplication $\boldsymbol{W}v$, where $\boldsymbol{W}=[w_{i,j}]_{K\times S}$ is the sparse \textit{coefficient matrix} and $v=[v_i]_{S\times 1}$ contains the values of the input terms. In particular, $w_{i,j}$ is the weight of the $j$th term for computing the summed quantity at the $i$th node (typically $1$ or $0$). As for the quantities in Algorithms \ref{alg:bottom-up} and \ref{alg:top-down}, $\mu_k(\cdot)$, $\mu_q(\cdot)$ and $\eta(\cdot)$ can be parallelized over \textit{all} the nodes in $h_x$; that is, in order to compute each one of these quantities for all nodes of $h_x$, only \textit{one} sparse matrix-vector multiplication is needed given the appropriate coefficient matrix. The computation of $\phi(\cdot)$ and $\vartheta(\cdot)$ is also parallelizable over the nodes belonging to the same \textit{depth} in $h_x$; in other words, given the appropriate coefficient matrices, we would need $D$ sparse matrix-vector multiplications to calculate each one of these quantities for all nodes in $h_x$, where $D$ is the depth of $h_x$. Since the coefficient matrices in this scheme are highly sparse, we have represented the coefficient matrices using sparse tensors and used the efficient implementation of sparse matrix by dense vector multiplication in Pytorch to carry out the tree-based summations in Algorithms \ref{alg:bottom-up} and \ref{alg:top-down}.

The other fundamental aspect of parallelization in Deep Learning is batch computation, which typically boils down to matrix operations for the standard batches of fixed-size tensors. However, in our scenario, the signal hierarchies in each batch are trees with different structures as well as potentially different signal types/modalities appearing in arbitrary arrangements for each signal hierarchy in the batch. This effectively makes the classical batch computation impossible for signal hierarchies. To address this challenge, we propose a completely different technique for batch parallelization. As explained above, we already have a method to parallelize the computations within each signal hierarchy; we can further parallelize the computations across different signal hierarchies in a batch by making them part of \textit{one} hierarchy. In particular, we introduce a \textit{dummy root} node and make each signal hierarchy in the batch a direct child of it. The position embedding for this dummy root is set to unordered-set embedding; that is, no position embedding. This way, we end up with only one, wide signal hierarchy in our batch that is just one level deeper than the deepest signal hierarchy in the original batch. By performing the parallel version of Algorithms \ref{alg:HSA}--\ref{alg:top-down} (as described above) on this one "concatenated" signal hierarchy, we effectively compute all the targeted quantities for all signal hierarchies in the batch at the same time. We refer to this batch processing technique as \textit{breadth-wise tree concatenation}. 
\section{Hierarchical Transformer Encoder}\label{appendix:architecture}

The proposed HSA mechanism does not introduce any trainable parameters on its own; it is simply an attention operation. However, similar to classical transformers, we can add trainable linear projections before performing HSA. This gives rise to the \textit{hierarchical transformer encoder (HTE)} architecture which is capable of operating on signal hierarchies representing finite nested signals. Similar to classical transformers, we also add multiple heads as well as point-wise linear projection of the output of HSA followed by some non-linearity. The same way the classical transformer layers do not change the query sequence length or the position embeddings of its tokens, HTE layers do not alter the structure of the hierarchy tree or its nodes' positional embeddings\footnote{Even though, the same position embeddings are fed to each layer, in our implementation, we have designed a separate linear projection per position embedding type per layer to project the position embeddings before the HSA operation.}. Figure \ref{fig:architecture} depicts our proposed architecture for each HTE layer.

\begin{figure}[ht]
  \centering
  \includegraphics[width=1\linewidth]{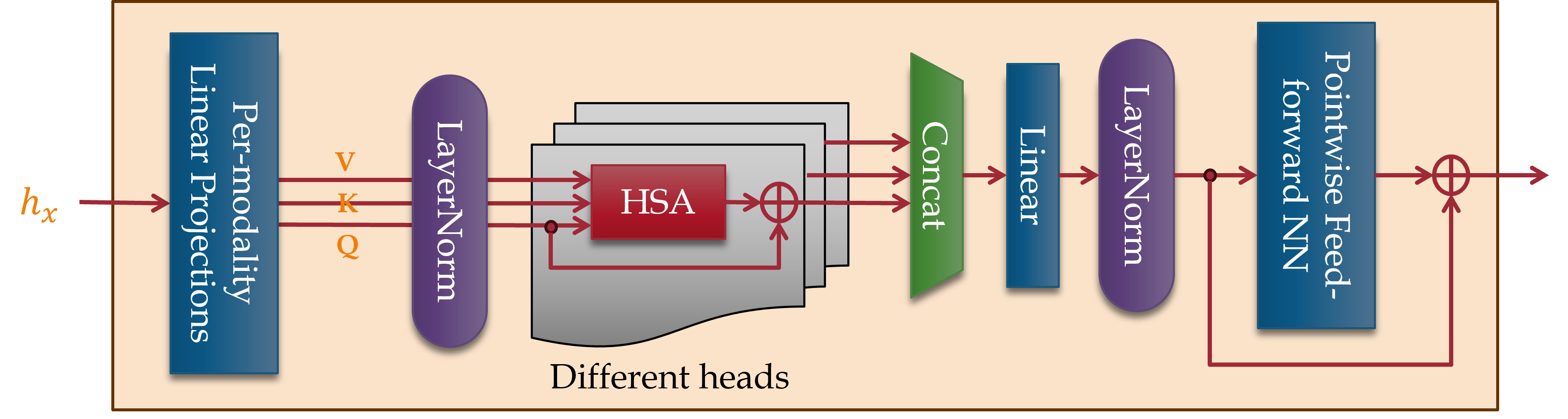}  
\caption{The proposed Hierarchical Transformer Encoder (HTE) layer architecture.}
\label{fig:architecture}
\end{figure}

Aside from HSA, HTE is different from classical transformer encoder in two ways. First, the LayerNorm operation is performed after linear projection as opposed to before it. As mentioned before, by doing so, the attention operation will minimize a proper energy function which is in turn a proxy for minimizing the entropy of the representation. Second, unlike simple signals in standard transformers, signal hierarchies can contain different modalities and signal domains within their different families across the hierarchy. Therefore, using the same linear projection layer for all this various types of information may be an over-simplification. To this end, our proposed framework allows different linear projection per the type of input information. For example, the leaf vectors coming from the language and vision modalities can be transformed using their own separate linear projection layers. Note that this distinction is only allowed at the linear projection layer; the HSA operation itself is universal and does not treat different types of information differently. Also, it is assumed the different types of information in a given problem (including modalities and signal domains) are a priori known and fixed, even though each signal hierarchy in the input dataset can be an arbitrary, variable-depth composition of these known types. By making this assumption, we can know ahead of time how many linear projection layers are needed within each HTE layer.

The HTE layers can be cascaded to form a hierarchical transformer based on the HSA. Furthermore, different types of \textit{pooling} operations can be introduced to (gradually) coarsen the hierarchical structure of the input nested signal. In particular, using a \textit{local} pooling operation, the leaf nodes of the input signal hierarchy are either merged together or completely pooled into their parents resulting in a coarser representation of the underlying nested signal. Furthermore, since the channel dimensionality $d$ is constant across the hierarchy, \textit{global} pooling is also well-defined which reduces the whole signal hierarchy into a single, fixed-size vector of $d$ dimensions (\textit{e.g.} by taking the average). Depending on the application, global pooling can also be realized by taking a specific leaf node's query vector of the output signal hierarchy (\textit{e.g.} in per-token classification tasks on uni-modal, hierarchical data).
\section{Hierarchical Auto-regressive Generation} \label{appendix:generation}

The HSA-based, encoder-only architecture introduced in Appendix \ref{appendix:architecture} is primarily suitable for classification and regression applications. However, for auto-regressive generation such as causal language modeling, we would need to have a decoder. One straightforward approach is to use an encoder-decoder architecture where the encoder is HSA-based while the decoder is the standard sequential decoder. In particular, in this scheme, the hierarchical self-attention is only incorporated for the initial prompt while for the generated text, we simply compute the standard flat attention. While simple, this solution does not take the full advantage HSA, especially if the generated text allows for the similar hierarchical structure as the prompt text. For instance, if the hierarchy is built upon the sentence and paragraph structure of the prompt text, then it is fairly reasonable for the generated text to have the same hierarchical construct as well. The same can be said when the hierarchy is based on fixed hopping windows over the text. In such cases, a HSA-based, \textit{decoder-only} architecture is needed to incorporate the hierarchical structure of the generated text during auto-regressive generation.

Theoretically speaking, for a HSA-based decoder during auto-regressive generation, we would need to maintain a \textit{dynamic} signal hierarchy where every generated token augments the signal hierarchy with at least one new leaf node and possibly multiple non-leaf nodes. Once the signal hierarchy is updated, the HSA calculations are, in principle, the same as before. 
Nevertheless, there are two major issues here specific to auto-regressive generation. First, unlike the HSA mechanism introduced so far, due to causal generation of tokens, leaf nodes are only allowed to attend to the other leaf nodes that have appeared \textit{before} them; that is, we would need a \textit{hierarchical causal masking} mechanism. Second, running the full HSA algorithm for every generated token is inefficient as it would re-compute some of the sufficient statistics in Algorithm \ref{alg:bottom-up}, which is clearly redundant. In the following sections, we address these two problems. 

\subsection{Hierarchical Causal Masking}
In the standard auto-regressive generation using the self-attention mechanism, in order to prohibit tokens from attending to the future tokens, one incorporates a causal mask in calculation of the attention weights via an appropriate lower-triangular mask matrix. However, this straightforward approach will not work with hierarchical self-attention mechanism because attention weights between all tokens are \textit{not} computed simultaneously but rather in hierarchical fashion.

Nevertheless, one can easily show that if the standard causal masking is applied at each level of the hierarchical attention calculation, at the end, no leaf token will attend to its future tokens (i.e. the tokens to its \textit{right}) in the hierarchy. In particular, as explained in Appendix \ref{sec:black-box}, Lines 16--18 of Algorithm \ref{alg:bottom-up} encapsulate a black-box Softmax self-attention function that is applied for each family in the hierarchy. For applying hierarchical causal masking, we can simply apply the standard causal masking within this black-box self-attention calculation. This is equivalent to replacing the sibling function $sib(A)$ in lines 16--18 of Algorithm \ref{alg:bottom-up} with $sib_L(A)$ which restricts $A$'s siblings to the ones to its left (i.e. previous tokens). This simple black-box causal masking will further propagate through the hierarchy such that at the end, the leaf nodes will only attend to other leaf nodes that are located to their left. Figure \ref{fig:causal-masking} illustrates this process through a toy example.   

\begin{figure}[ht]
  \centering
  \includegraphics[width=0.8\linewidth]{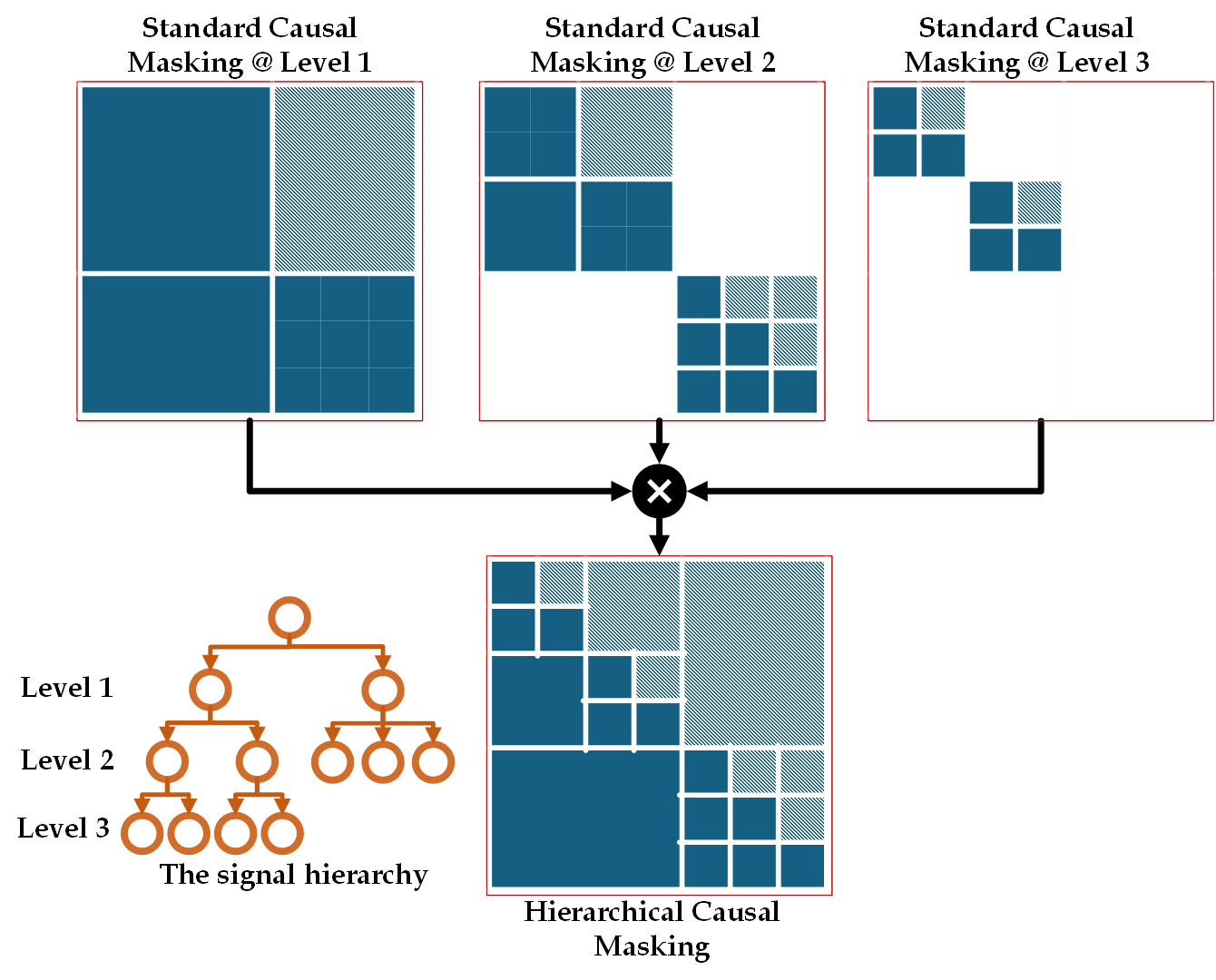}  
\caption{An illustration of the proposed Hierarchical Causal Masking scheme for hierarchical auto-regressive generation.}
\label{fig:causal-masking}
\end{figure}

\subsection{Hierarchical Caching}
In standard auto-regressive generation, every generated token merely attends to the tokens seen so far whose projections are cached via a \textit{key-value cache}. This makes the attention computation for each token linear in the (generated + prompt) sequence length. This simple idea, however, is not directly applicable to the hierarchical case. One important distinction that we need to keep in mind is that in the hierarchical case, we are not only generating a sequence but also a hierarchy that comes with it; in other words, the generated sequence is the set of the leaf nodes of a hierarchy that needs to be maintained and updated as well. As such, any caching mechanism would need to maintain and update the signal hierarchy and not just its leaf nodes. Note that caching the hierarchy means maintaining its structure as well as its nodes' sufficient statistics pre-computed by Algorithm \ref{alg:bottom-up}.

Nevertheless, during generation, we do \textit{not} need to keep the entire signal hierarchy. In particular, in our HSA framework not every leaf node directly attend to every other leaf node; instead, leaf nodes that are not in the same family only attend to each other at the coarse scale through their highest distinct ancestors. This means that during generation, a newly generated token (leaf node) only needs to directly attend to its previously generated leaf siblings and not other leaf nodes. Instead it will indirectly attend to other leaf nodes \textit{en masse} by attending to their highest ancestor that is \textit{not} an ancestor of the new token. Following this scheme, we would only need to cache a sub-tree of the original hierarchy that consists of the ancestor line of the latest generated token as well as their immediate children nodes. We refer to this sub-tree as \textit{right-skewed} because only the right-most sibling in each family across the signal hierarchy is allowed to have children. Figure \ref{fig:cache}(A) illustrates the maximal right-skewed sub-tree for the toy hierarchy in Figure \ref{fig:cache}(B).  

Once the the right-skewed sub-tree of the signal hierarchy is extracted, we can simply update as new tokens are generated. However, we have to be careful as \textit{not all} of the newly generated tokens are added to the latest family: some new tokens may start a new family via a higher level of the hierarchy. For example, if the hierarchy for language data is built based upon the sentence and paragraph structure in the text, a new token is not always going to be part of the latest sentence or paragraph; it may start a new sentence or even a new paragraph. In such cases, more nodes need to be added to or deleted from the cache other than the new token's leaf node. These two cases are illustrated in Figure \ref{fig:cache}(C)-(D).

Finally we note that during the entire generation process the hierarchical cache remains a right-skewed tree which means that the CPU and memory complexity for calculating attention and maintaining the cache would be $O(b\log_b N)$ where $N$ is the length of the generated sequence so far and $b$ is the average branching factor of the hierarchy. This is in stark contrast to the classical key-value caching where the memory and computation are of $O(N)$ complexity, and hence shows the potential computational advantage of our hierarchical scheme.

\begin{figure}[ht]
  \centering
  \includegraphics[width=1\linewidth]{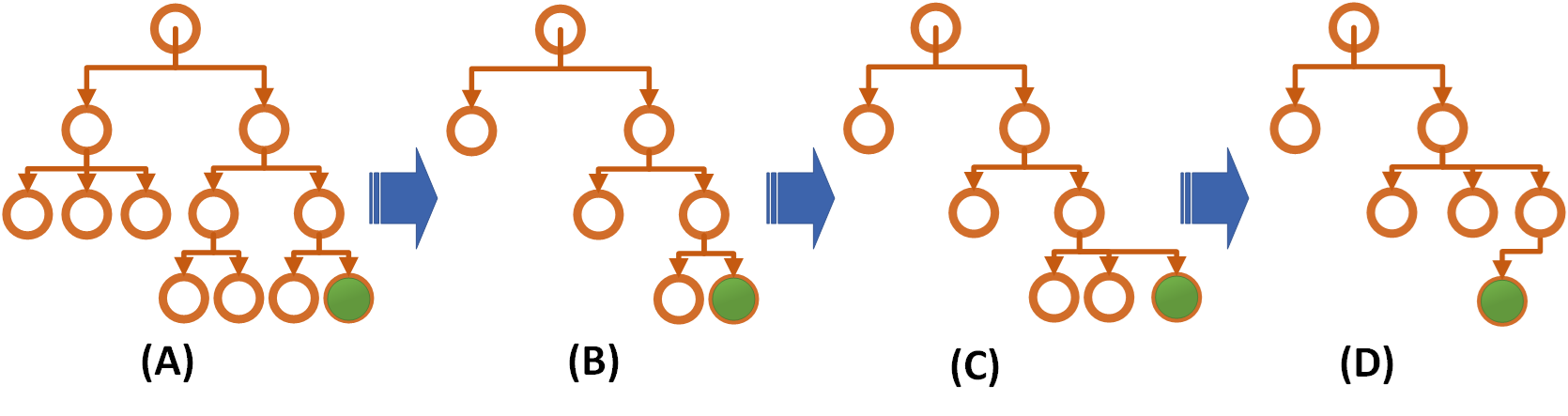}  
\caption{An illustration of the proposed hierarchical caching mechanism for hierarchical auto-regressive generation: (A) The original signal hierarchy built on the prompt text. (B) The right-skewed sub-tree of the original hierarchy. (C) The updated hierarchy after generation of a new token that does not end the latest family. (D) The updated hierarchy after generation of another token that \textit{does} end the latest family. The green leaf nodes depict the latest generated tokens in each step.}
\label{fig:cache}
\end{figure}
\section{Proofs} \label{appendix:proofs}
\subsection{Proposition 1: Softmax Attention}\label{proofs:prop1}
\begin{proof}
Since each $Q$ and $K$ variables is normalized via a LayerNorm layer, we have $\|q_i\|^2=d$ and $\|k_i\|^2=d$, $\forall 1\leq i \leq N$, which would reduce the energy function in \eqref{eq:softmax-energy} to:
\[
 \phi(Q, K) = \sqrt{d}-\frac{1}{N}\sum_{i=1}^N \log\bigg(\frac{1}{N-1}\sum_{j=1, j\neq i}^N \exp\big[q_i^Tk_j/\sqrt{d} \big]\bigg)
\]
By taking the gradient of w.r.t. $q_i$, we get:
\[
\nabla_{q_i}\phi(Q, K)= \sum_{j=1,j\neq i}^N\frac{\exp(q_i^T k_j /\sqrt{d})}{\sum_{t=1,t\neq i}^N\exp(q_i^T k_t /\sqrt{d})}\cdot k_j\text{,          } 1\leq i \leq N
\]
By plugging $\nabla_{q_i}\phi(Q, K)$ into \eqref{eq:attention} and setting $\tau=(N\sqrt{d})^{-1}$, $\lambda=1$ and the sample size to $1$, we will get the Softmax attention formulation in \eqref{eq:softmax-attention}.
\end{proof}

\subsection{Theorem 1: The Optimality of HSA}\label{proofs:thm1}
First off, since each $Q$ and $K$ variables is normalized via a LayerNorm layer, we have $\|q_i\|^2=d$ and $\|k_i\|^2=d$, $\forall 1\leq i \leq N$, which would reduce the interaction energy function $\psi_{A\rightarrow B}$ in \eqref{eq:interaction-energy} to:
\begin{equation}\label{eq:psi}   
\psi_{A\rightarrow B} = -\varepsilon_\Omega(A')^T\varepsilon_\Omega(B') + \sqrt{d} - \frac{1}{\sqrt{d}\cdot|\ell(A)|\cdot|\ell(B)|}\sum_{i\in \ell(A)}\sum_{j \in \ell(B)}q_i^Tk_j
\end{equation}
Then $\nabla_{q_i}\psi_{A\rightarrow B}$ becomes:
\begin{equation}\label{eq:euclidean-interation}
    \nabla_{q_i}\psi_{A\rightarrow B}=-\frac{1}{\sqrt{d}\cdot|\ell(A)|\cdot|\ell(B)|}\sum_{j \in \ell(B)}k_j
\end{equation}

\begin{proof}[Proof of stochasticity] Next, we show that $\hat{\boldsymbol{\Theta}}=-\frac{1}{\tau}\boldsymbol{\Theta}$ is a stochastic matrix where $\tau=\big(|\ell(R_x)|\sqrt{d}\big)^{-1}$ and $\boldsymbol{\Theta}=[\theta_{i,j}]_{|\ell(R_x)|\times|\ell(R_x)|}$ is the HSA matrix for the nested signal $x$ in \eqref{eq:matrix-form}. This is equivalent to showing that $\boldsymbol{\Theta}$ is a negative matrix whose rows sum to $-\big(|\ell(R_x)|\sqrt{d}\big)^{-1}$. We prove the latter by induction on the depth of the signal hierarchy $h_x$.

\textbf{The base case}: Using the \eqref{eq:euclidean-interation}, for a signal hierarchy $h_x$ of depth $1$ (\textit{i.e.} a simple signal), \eqref{eq:hierarchical-attention} reduces to:
\begin{equation}\label{eq:base-attention}
    \nabla_{q_i}\phi(R_x)=-\frac{1}{\sqrt{d}|\ell(R_x)|}\bigg[\sum_{L_j\in sib(L_i)}\frac{\exp\big(-\psi_{L_i\rightarrow L_j}\big)}{\sum_{L_k\in sib(L_i)}\exp\big(-\psi_{L_k\rightarrow L_j}\big)}\cdot k_j \bigg]=\boldsymbol{\Theta}_{i.}\boldsymbol{K}
\end{equation}
where
\begin{align*}
    \boldsymbol{K}&=[k_1, ..., k_{|\ell(R_x)|}]^T, \text{    }\\
    \boldsymbol{\Theta}_{i.}&=\bigg[\frac{-\exp\big(-\psi_{L_i\rightarrow L_j}\big)}{\sqrt{d}|\ell(R_x)|\sum_{L_k\in sib(L_i)}\exp\big(-\psi_{L_k\rightarrow L_j}\big)}\bigg]_{j=1}^{|\ell(R_x)|}
\end{align*}
is the $i$th row of $\boldsymbol{\Theta}$, and $L_i$, $L_j$ and $L_k$ are the leaf nodes corresponding to $q_i$, $q_j$ and $q_k$, respectively.
From \eqref{eq:base-attention}, it is clear that the elements of $\boldsymbol{\Theta}$ are all negative and each row sums to $-\big(|\ell(R_x)|\sqrt{d}\big)^{-1}$.

\textbf{The induction step}: Now assume that the above statement holds for any $\boldsymbol{\Theta}$ matrix derived from a signal hierarchy up to depth $T-1$, we show that it also holds for the signal hierarchy $h_x$ of depth $T$. To this end, \eqref{eq:hierarchical-attention} can be written as:
\begin{align*}
    \nabla_{q_i}\phi(R_x)&=\frac{|\ell(B^i)|}{|\ell(R_x)|}\bigg[\frac{\alpha(B^i)\cdot \nabla_{q_i}\phi(B^i) + \sum_{C\in sib(B^i)}|\ell(C)|\beta(B^i,C)\cdot \nabla_{q_i}\psi_{B^i\rightarrow C}}{\alpha(B^i) + \sum_{C\in sib(B^i)}|\ell(C)|\beta(B^i,C)} \bigg]\\
    &=\frac{|\ell(B^i)|}{|\ell(R_x)|}\big[\mu(B^i)\nabla_{q_i}\phi(B^i) + \sum_{C\in sib(B^i)}|\ell(C)|\delta(B^i,C)\nabla_{q_i}\psi_{B^i\rightarrow C}\big]
\end{align*}
where,
\begin{align} \label{eq:mu}    
    \mu(B^i)&=\frac{\alpha(B^i)}{\alpha(B^i) + \sum_{D\in sib(B^i)}|\ell(D)|\beta(B^i,D)}\\ \label{eq:delta}
    \delta(B^i,C)&=\frac{\beta(B^i,C)}{\alpha(B^i) + \sum_{D\in sib(B^i)}|\ell(D)|\beta(B^i,D)}
\end{align}
and we have $\mu(B^i)+\sum_{C\in sib(B^i)}|\ell(C)|\delta(B^i,C)=1$, for all $i\in \ell(R_x)$. On the other hand, since $B^i$ is a child of the root node $R_x$, the depth of its corresponding sub-signal hierarchy is inevitably less than $T$, and therefore its corresponding energy gradient $\nabla\phi(B^i)$ induces an attention matrix $\boldsymbol{\Theta}^{B^i}$ that is negative with rows that sum to $-\big(|\ell(B^i)|\sqrt{d}\big)^{-1}$ according to the induction hypothesis. With that in mind, we can write
\[
\nabla_{q_i}\phi(R_x)=\frac{|\ell(B^i)|}{|\ell(R_x)|}\bigg[\mu(B^i)\boldsymbol{\Theta}_{i.}^{B^i}\boldsymbol{K}^{B^i} - \sum_{C\in sib(B^i)}\frac{\delta(B^i,C)}{\sqrt{d}\cdot|\ell(B^i)|}\sum_{j \in \ell(C)}k_j\bigg]=\boldsymbol{\Theta}_{i.}\boldsymbol{K}
\]
where $\boldsymbol{\Theta}_{i.}^{B^i}$ is the $i$th row of $\boldsymbol{\Theta}^{B^i}$, $\boldsymbol{K}^{B^i}=[k_j]_{j\in \ell(B^i)}$, and we have:
\[
\boldsymbol{\Theta}_{i.}=\texttt{concat}\bigg[\frac{|\ell(B^i)|}{|\ell(R_x)|}\mu(B^i)\boldsymbol{\Theta}_{i.}^{B^i},\texttt{concat}\big[-\frac{\delta(B^i,C)}{\sqrt{d}\cdot|\ell(R_x)|}\boldsymbol{1}_{|\ell(C)|}\big]_{C\in sib(B^i)}\bigg]
\]
Then the sum of the elements of the row vector $\boldsymbol{\Theta}_{i.}$ is given by:
\begin{align*}    
\sum_{j\in \ell(R_x)}\theta_{i,j}&=\frac{1}{|\ell(R_x)|}\bigg[|\ell(B^i)|\mu(B^i)\sum_{j\in \ell(B^i)}\theta_{i,j}^{B^i}-\sum_{C\in sib(B^i)}\frac{|\ell(C)|}{\sqrt{d}}\delta(B^i,C)\bigg]\\
&=\frac{1}{|\ell(R_x)|}\bigg[-\frac{1}{\sqrt{d}}\mu(B^i)-\sum_{C\in sib(B^i)}\frac{|\ell(C)|}{\sqrt{d}}\delta(B^i,C)\bigg]\\
&=-\frac{1}{\sqrt{d}|\ell(R_x)|}\bigg[\mu(B^i)+\sum_{C\in sib(B^i)}|\ell(C)|\delta(B^i,C)\bigg]=-\frac{1}{\sqrt{d}|\ell(R_x)|}
\end{align*}
where the second equality comes from the induction hypothesis that $\sum_{j\in \ell(B^i)}\theta_{i,j}=-\big(|\ell(B^i)|\sqrt{d}\big)^{-1}$. In other words, $\boldsymbol{\Theta}^{R_x}$ is negative with rows that sum to $-\big(|\ell(R_x)|\sqrt{d}\big)^{-1}$, which in turn, implies that $\hat{\boldsymbol{\Theta}}=-\big(|\ell(R_x)|\sqrt{d}\big)\boldsymbol{\Theta}$ is a stochastic matrix.
\end{proof}

Before proving the optimality of HSA, we need to show that the KL-divergence admits \textit{optimal sub-structure} in our setting. To this end, let $\wp=[p_i]_{i=1}^N$ be a categorical distribution over $N$ items such that $\sum_{i=1}^Np_i=1$. Furthermore, let $\mathcal{R}=\{R_1,...,R_K\}$ be a $K$-partition on the index set $\mathcal{I}=\{1,...,N\}$ such that $\bigcup_{j=1}^K R_j=\mathcal{I}$ and $R_i\cap R_j=\emptyset, \forall i,j\in 1..K, i\neq j$. 
We say a categorical distribution $\omega=[w_i]_{i=1}^N$ admits the \textit{tie constraint} w.r.t. $\mathcal{R}$ iff we have $w_i=w_j$ if $\exists R_k\in \mathcal{R}$ s.t. $i,j\in R_k$. We refer to set of all such distributions as $W_{\mathcal{R}}$.

Given a distribution $\omega\in W_{\mathcal{R}}$ and the \textit{sub-partition} $\mathcal{R}'\subset\mathcal{R}$, the projection of $\omega$ on $\mathcal{R}'$ is defined as $\omega_{\bot\mathcal{R}'}=[w_i/h]_{i\in \mathcal{I}(\mathcal{R}')}$ where $\mathcal{I}(\mathcal{R}')=\bigcup_{R\in\mathcal{R}'} R$, and $h=\sum_{i\in \mathcal{I}(\mathcal{R}')}w_i$ is the re-normalization constant. From this definition, it is clear $\omega_{\bot\mathcal{R}'}$ is a categorical distribution restricted to the items in the partition $\mathcal{R}'$.
\begin{lemma}[The optimal sub-structure of the KL-divergence]\label{lemma:substructure}
Let $\wp$, $\mathcal{R}$ and $W_\mathcal{R}$ be defined as above; furthermore, let $\omega^*\in W_\mathcal{R}$ be the closest categorical distribution in $W_\mathcal{R}$ to $\wp$ in terms of the KL-divergence; that is,
\[
\omega^*=\arg\min_{\omega\in W_\mathcal{R}}D_{KL}(\omega\|\wp)
\]
Then, for any $\mathcal{R}'\subset\mathcal{R}$, we have:
\[
\omega^*_{\bot\mathcal{R}'}=\arg\min_{\omega\in W_{\mathcal{R}'}}D_{KL}(\omega\|\wp_{\bot\mathcal{R}'})
\]
\end{lemma}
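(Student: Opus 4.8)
The plan is to reduce the constrained minimization to a low-dimensional problem over the \emph{block masses}, solve that problem in closed form, and then verify directly that projecting the global optimum onto a sub-partition reproduces the optimum of the sub-problem.

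First I would exploit the tie constraint to reparametrize. Any $\omega\in W_{\mathcal{R}}$ is constant on each block, so writing $w_i=c_k$ for $i\in R_k$, the objective becomes
\begin{equation*}
D_{KL}(\omega\|\wp)=\sum_{k=1}^{K}c_k\Big(|R_k|\log c_k-\sum_{i\in R_k}\log p_i\Big),
\end{equation*}
a strictly convex function of $(c_1,\dots,c_K)$ subject to the single linear constraint $\sum_k|R_k|c_k=1$. A Lagrange-multiplier computation then gives the unique minimizer $c_k^{*}\propto G_k$, where $G_k=\big(\prod_{i\in R_k}p_i\big)^{1/|R_k|}$ is the geometric mean of $\wp$ over block $R_k$; after normalization, $w_i^{*}=c_k^{*}=G_k/\sum_j|R_j|G_j$ for every $i\in R_k$. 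The crucial structural feature is that $c_k^{*}$ depends on $\wp$ only through the per-block geometric means $G_k$, which are \emph{local} to each block.

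Next I would evaluate both sides of the claimed identity using this closed form. For the projected object, restricting $w^{*}$ to $\mathcal{I}(\mathcal{R}')$ and dividing by the renormalization constant $h=\sum_{R_k\in\mathcal{R}'}|R_k|c_k^{*}$ yields the value $G_k/\sum_{R_l\in\mathcal{R}'}|R_l|G_l$ on each $i\in R_k\in\mathcal{R}'$. For the sub-problem, the target is $\wp_{\bot\mathcal{R}'}=[p_i/P']_{i\in\mathcal{I}(\mathcal{R}')}$ with $P'=\sum_{i\in\mathcal{I}(\mathcal{R}')}p_i$; applying the same characterization, the relevant geometric means are $G_k'=G_k/P'$, and the common factor $1/P'$ cancels upon normalization, leaving exactly $G_k/\sum_{R_l\in\mathcal{R}'}|R_l|G_l$. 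The two expressions coincide, which establishes the lemma.

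The step I expect to require the most care is confirming that the global renormalization used to form $\wp_{\bot\mathcal{R}'}$ rescales all the relevant block-local quantities $G_k$ by the \emph{same} constant, so that it cancels after normalization; this cancellation is precisely the mechanism behind the optimal-substructure phenomenon. Conceptually, the cleanest way to see why this must happen is the chain-rule decomposition $D_{KL}(\omega\|\wp)=D_{KL}(m\|P)+\sum_k m_k\,D_{KL}(\mathrm{Unif}(R_k)\|\wp(\cdot\mid R_k))$, where $m_k$ and $P_k$ are the block masses of $\omega$ and $\wp$: since every tied distribution is uniform within blocks and the within-block conditionals $\wp(\cdot\mid R_k)$ are invariant under projection to a union of whole blocks, only the block-mass term $D_{KL}(m\|P)$ is affected, and the substructure claim then follows from the corresponding elementary property of the finite-support KL divergence at the block level. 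I would keep this decomposition in reserve as the conceptual backbone, while carrying out the explicit geometric-mean computation above as the concrete verification.
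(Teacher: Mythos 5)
Your proposal is correct, but it reaches the lemma by a genuinely different route than the paper. The paper gives the classical exchange (``cut-and-paste'') argument: it supposes some $\omega'\in W_{\mathcal{R}'}$ strictly beats $\omega^*_{\bot\mathcal{R}'}$ on the sub-problem, splices the rescaled $h_1\omega'$ into $\omega^*$ on $\mathcal{I}(\mathcal{R}')$ while keeping $\omega^*$ on the complement, and shows the spliced distribution $\omega''\in W_\mathcal{R}$ strictly improves $D_{KL}(\cdot\|\wp)$, contradicting the optimality of $\omega^*$; no formula for the optimum is ever computed. You instead solve the constrained problem in closed form: the tie constraint reduces it to a strictly convex program over block values whose Lagrange solution is $c_k^*\propto G_k$ with $G_k$ the per-block geometric mean of $\wp$, and you then verify directly that projection onto $\mathcal{R}'$ commutes with optimization because the renormalization constant $P'$ rescales every relevant $G_k$ identically and cancels. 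Your Lagrange computation and the final cancellation both check out, so the proof stands on its own; the reserve ``chain-rule'' sketch is slightly loose (after conditioning, the within-block terms still enter the objective with weights $m_k$, so the block-level problem is KL plus a linear term rather than pure KL), but your main computation never relies on it. The trade-off between the two approaches: the paper's exchange argument is the generic optimal-substructure template, needing neither convexity nor an explicit minimizer, and so transfers unchanged to other objectives; yours buys strictly more information, namely uniqueness of the minimizer (via strict convexity, something the paper's strict-inequality step tacitly presupposes), an explicit formula for it, and a transparent mechanism for why substructure holds. It is also worth noting that your geometric means are precisely the objects the paper encounters later anyway: the quantities $\zeta(A,B)$ in \eqref{eq:zeta}, used in the proof of Theorem~\ref{thm:optimality}, are geometric means of the flattened attention weights, so your lemma-level computation anticipates that structure. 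Both your proof and the paper's implicitly assume $\wp$ has full support, so that is not a gap relative to the paper.
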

\begin{proof}
 Let us assume the closest distribution in $W_{\mathcal{R}'}$ to $\wp_{\bot\mathcal{R}'}$ is $\omega'$ that is \textit{not} equal to $\omega^*_{\bot\mathcal{R}'}$. Then we have,
 \begin{align*}
     D_{KL}(\omega^*\|\wp)&=\sum_{i\in \mathcal{I}(\mathcal{R}')}w^*_i\log(w^*_i/p_i) + \sum_{i\in \mathcal{I}(\mathcal{R}\setminus\mathcal{R}')}w^*_i\log(w^*_i/p_i)\\
     &=h_1\log(h_1/h_2)+h_1\sum_{i\in \mathcal{I}(\mathcal{R}')}w^*_{\bot\mathcal{R}'i}\log(w^*_{\bot\mathcal{R}'i}/p_{\bot\mathcal{R}'i}) + \sum_{i\in \mathcal{I}(\mathcal{R}\setminus\mathcal{R}')}w^*_i\log(w^*_i/p_i)\\
     &> h_1\log(h_1/h_2)+h_1\sum_{i\in \mathcal{I}(\mathcal{R}')}w'_i\log(w'_i/p_{\bot\mathcal{R}'i}) + \sum_{i\in \mathcal{I}(\mathcal{R}\setminus\mathcal{R}')}w^*_i\log(w^*_i/p_i)\\
     &=\sum_{i\in \mathcal{I}(\mathcal{R}')}h_1w'_i\log(h_1w'_i/p_i) + \sum_{i\in \mathcal{I}(\mathcal{R}\setminus\mathcal{R}')}w^*_i\log(w^*_i/p_i)\\
     &=D_{KL}(\omega''\|\wp)
 \end{align*}
 where
\[    
 \omega''=[w''_i]_{i=1}^N, \text{ such that } w''_i=\begin{cases} h_1w'_i, & \mbox{for}\; i\in \mathcal{I}(\mathcal{R}')\\ w^*_i, & \mbox{for}\; i\in \mathcal{I}(\mathcal{R}\setminus\mathcal{R}')
 \end{cases}
\]
and $h_1=\sum_{i\in \mathcal{I}(\mathcal{R}')}w^*_i$ and $h_2=\sum_{i\in \mathcal{I}(\mathcal{R}')}p_i$ are the re-normalization coefficients. The inequality in the above derivation is the direct result of the fact that $\omega'$ is the closest distribution to $\wp_{\bot\mathcal{R}'}$ in $W_{\mathcal{R}'}$. This further implies that we just found another distribution $\omega''\in W_\mathcal{R}$ that is closer to $\wp$ than $\omega^*$ is. And this contradicts our assumption regarding the optimality of $\omega^*$. Therefore, $\omega^*_{\bot\mathcal{R}'}$ must be the closest distribution to $\wp_{\bot\mathcal{R}'}$ in $W_{\mathcal{R}'}$.
\end{proof}
Intuitively speaking, Lemma \ref{lemma:substructure} states that any sub-structure of an optimal solution for the KL-divergence to a target distribution is also optimal. With that, we are now ready to show the optimality of HSA.

\begin{proof}[Proof of optimality]
We would like to show that the HSA formulation in \eqref{eq:hierarchical-attention} results in a self attention matrix $\hat{\boldsymbol{\Theta}}$ that minimizes the total KL-divergence in \eqref{eq:optimality}. In order to do so, we derive the optimal solution for the total KL-divergence and show that it obeys the recurrence in  \eqref{eq:hierarchical-attention}.

For a signal hierarchy $h_x$ rooted at $R_x$, let $\hat{\boldsymbol{\Theta}}^R$ denote the closest HSA matrix in $\mathcal{B}$ (the space of all matrices that admit the block constraint according to the signal hierarchy $h_x$) to the flattened self-attention matrix $\boldsymbol{\Theta}^f$ described by \eqref{eq:flattened-attention}. That is,
\begin{equation} \label{eq:total-KL}
    \hat{\boldsymbol{\Theta}}^R=\arg\min_{\boldsymbol{\Theta} \in \mathcal{B}}\sum_{i\in \ell(R_x)}D_{KL}(\theta_{i,\cdot}\|\theta^f_{i,\cdot})\equiv\arg\min_{\boldsymbol{\Theta} \in \mathcal{B}}\bar{D}_{KL}(\boldsymbol{\Theta}\|\boldsymbol{\Theta}^f)
\end{equation}
Since each row of $\hat{\boldsymbol{\Theta}}^R$ is a categorical distribution, by applying Lemma \ref{lemma:substructure} to the rows of $\hat{\boldsymbol{\Theta}}^R$, it is straightforward to see that the diagonal blocks of $\hat{\boldsymbol{\Theta}}^R$ corresponding to the children of $R_x$ are also (up to a re-normalization factor) the closest HSA matrices to the restriction of the flattened self-attention matrix $\boldsymbol{\Theta}^f$ to the corresponding sub-hierarchies. For the child node $A\in chd(R_x)$, the renormalized restriction of $\boldsymbol{\Theta}^f$ to $A$ is denoted by  $\boldsymbol{\Theta}^{f,A}$. The elements of $\boldsymbol{\Theta}^f$ are then can be written as:
\begin{equation}\label{eq:theta_flat}
    \forall i, j \in \ell(R_x), \theta^f_{i,j}=\begin{cases} \frac{z_i}{z_i+\bar{z}_i}\theta^{f,A^i}_{i,j}, & \mbox{if}\; A^i=A^j\\ \frac{b_{i,j}}{z_i+\bar{z}_i}, & \mbox{if}\; A^i\neq A^j
 \end{cases}
\end{equation}
where $A^i$ denotes that child of $R_x$ that contains the $i$th leaf node, $b_{i,j}=\exp(-\psi_{i\rightarrow j})$, $z_i=\sum_{j\in \ell(A^i)}b_{i,j}$, and $\bar{z}_i=\sum_{j\in \ell(R_x)\setminus\ell(A^i)}b_{i,j}$. Similarly, if we denote the renormalized restriction of $\hat{\boldsymbol{\Theta}}^R$ to $A$ by $\hat{\boldsymbol{\Theta}}^{R,A}$, the elements of of $\hat{\boldsymbol{\Theta}}^R$ are then can be written as:
\begin{equation}\label{eq:theta_hat}
    \forall i, j \in \ell(R_x), \hat{\theta}^R_{i,j}=\begin{cases} \mu(A^i)\hat{\theta}^{R,A^i}_{i,j}, & \mbox{if}\; A^i=A^j\\ \delta(A^i, A^j), & \mbox{if}\; A^i\neq A^j
 \end{cases}
\end{equation}
where $\mu(A^i)$ and $\delta(A^i, A^j)$ are unknown coefficients. Note that unlike \eqref{eq:theta_flat}, for the case of $A^i\neq A^j$, we only have \textit{one} number representing the attention weight between sub-trees $A^i$ and $A^j$ - \textit{i.e.} $\delta(A^i, A^j)$. This is due to the block constraint being enforced on $\hat{\boldsymbol{\Theta}}^R$. Similarly, the block constraint requires the renormalization coefficient for every child $A^i$, \textit{i.e.} $\mu(A^i)$, to be the same for all the rows $k\in\ell(A^i)$.
If we assume we already know the optimal restricted HSA matrices $\hat{\boldsymbol{\Theta}}^{R,A}, \forall A \in chd(R_x)$, our goal reduces to computing the values of $\mu(A)$ and $\delta(A, B)$ for all $A,B\in chd(R_x)$ such that the total KL-divergence in \eqref{eq:total-KL} is minimized. By plugging Eqs.\eqref{eq:theta_flat} and\eqref{eq:theta_hat} into \eqref{eq:total-KL}, we get:
\begin{align}\label{eq:dkl1}
    \bar{D}_{KL}&(\hat{\boldsymbol{\Theta}}^R\|\boldsymbol{\Theta}^f)=\sum_{A\in chd(R_x)}\sum_{i\in\ell(A)} \bigg[\sum_{j\in\ell(A)}\mu(A)\hat{\theta}^{R,A^i}_{i,j}\log\bigg(\frac{\mu(A)\hat{\theta}^{R,A^i}_{i,j}(z_i+\bar{z}_i)}{z_i \theta^{f,A^i}_{i,j}}\bigg)\nonumber\\ &+ 
    \sum_{B\in sib(A)}\sum_{j\in\ell(B)}\delta(A, B)\log\bigg(\frac{(z_i+\bar{z}_i)\delta(A, B)}{b_{i,j}}\bigg)\bigg]\nonumber\\
    &=\sum_{A\in chd(R_x)}\bigg[\mu(A)\bigg(\bar{D}_{KL}(\hat{\boldsymbol{\Theta}}^{R,A}\|\boldsymbol{\Theta}^{f,A})+|\ell(A)|\log \mu(A)+\sum_{i\in\ell(A)}\log\big(\frac{z_i+\bar{z}_i}{z_i}\big)\bigg)\nonumber\\
    &+\sum_{B in sib(A)}\bigg(|\ell(A)||\ell(B)|\delta(A, B)\log \delta(A, B)\nonumber\\
    &+\delta(A, B)\bigg[|\ell(B)|\sum_{i\in \ell(A)}\log(z_i+\bar{z}_i)-\sum_{i\in\ell(A)}\sum_{j\in\ell(B)}\log b_{i,j}\bigg]\bigg)\bigg]
\end{align}
Where $\bar{D}_{KL}(\hat{\boldsymbol{\Theta}}^{R,A}\|\boldsymbol{\Theta}^{f,A})$ is the optimal value of the total KL-divergence for the sub-problem induced by the child node $A$ of $R_x$. Since $\hat{\boldsymbol{\Theta}}^R$ is the minimizer of \eqref{eq:dkl1}, the values of $\mu(A)$ and $\delta(A, B)$, $\forall A,B\in chd(R_x)$ must be chosen such that they minimize \eqref{eq:dkl1}. Furthermore, each row of the matrix $\hat{\boldsymbol{\Theta}}^R$ must sum to $1$, which results in the following set of constraints on the values of $\mu(A)$ and $\delta(A, B)$:
\begin{align}\label{eq:constraints}
    \forall i\in\ell(&R_x), \sum_{j\in\ell(R_x)}\hat{\theta}^R_{i,j}=1\Rightarrow \sum_{j\in\ell(A^i)}\hat{\theta}^R_{i,j}+\sum_{j\in\ell(R_x)\setminus\ell(A^i)}\hat{\theta}^R_{i,j}=1\nonumber\\
    &\Rightarrow \mu(A^i)\sum_{j\in\ell(R_x)}\hat{\theta}^{R, A^i}_{i,j}+\sum_{B\in sib(A^i)}|\ell(B)|\delta(A^i, B)=1\nonumber\\
    &\Rightarrow \mu(A)+\sum_{B\in sib(A)}|\ell(B)|\delta(A, B)=1\text{,   } \forall A\in chd(R_x)
\end{align}
where the second line is obtained by incorporating \eqref{eq:theta_hat} and the last line uses the fact that the rows of the restricted matrix $\hat{\boldsymbol{\Theta}}^{R,A^i}$ are already normalized. To optimize \eqref{eq:dkl1} w.r.t. $\mu(A)$ and $\delta(A, B)$, $\forall A,B\in chd(R_x)$ while enforcing the constraints in \eqref{eq:constraints}, we form the Lagrangian as follows:
\begin{align}\label{eq:Lagrangian}
    \mathfrak{L}\big(&\mu(A),\delta(A, B), \lambda_A; \forall A,B\in chd(R_x)\big)\nonumber\\
    &=\bar{D}_{KL}(\hat{\boldsymbol{\Theta}}^R\|\boldsymbol{\Theta}^f)-\sum_{A\in chd(R_x)}\lambda_A\bigg[\mu(A)+\sum_{B\in sib(A)}|\ell(B)|\delta(A, B)-1\bigg]
\end{align}
where $\lambda_A, \forall A\in chd(R_x)$ are the Lagrange multipliers. By taking the partial derivatives of the Lagrangian w.r.t. $\mu(A)$ and $\delta(A, B)$ and solving for them, we get:
\begin{align}\label{eq:lagrange-parameters}
    \mu(A)&=\exp\bigg[\frac{1}{|\ell(A)|}\bigg(\lambda_A - \bar{D}_{KL}(\hat{\boldsymbol{\Theta}}^{R,A}\|\boldsymbol{\Theta}^{f,A})+\sum_{i\in \ell(A)}\log\big(\frac{z_i}{z_i+\bar{z}_i}\big)\bigg)-1\bigg],\nonumber\\
    \delta(A, B)&=\exp\bigg[\frac{1}{|\ell(A)|}\bigg(\lambda_A - \frac{1}{|\ell(B)|}\sum_{i\in\ell(A)}\sum_{j\in\ell(B)}\log b_{i,j}-\sum_{i\in \ell(A)}\log(z_i+\bar{z}_i)\bigg)-1\bigg]
\end{align}
Now if we plug \eqref{eq:lagrange-parameters} into the constraints in \eqref{eq:constraints}, we can solve for $\lambda_A$'s, which can be further put back into \eqref{eq:lagrange-parameters} to derive the values of $\mu(A)$ and $\delta(A, B)$ as:
\begin{equation}\label{eq:mu-delta}
    \mu(A)=\frac{\gamma(A)}{\gamma(A)+\sum_{C\in sib(A)}|\ell(B)|\zeta(A,C)}\text{ 
  , }\delta(A,B)=\frac{\zeta(A,B)}{\gamma(A)+\sum_{C\in sib(A)}|\ell(B)|\zeta(A,C)}
\end{equation}
where
\begin{align}\label{eq:gamma}
    \gamma(A)&=\exp\bigg[\frac{1}{|\ell(A)|}\bigg(\sum_{i\in\ell(A)}\log z_i-\bar{D}_{KL}(\hat{\boldsymbol{\Theta}}^{R,A}\|\boldsymbol{\Theta}^{f,A})\bigg)\bigg]\\\label{eq:zeta}
    \zeta(A,B)&=\exp\bigg[\frac{1}{|\ell(A)||\ell(B)|}\sum_{i\in \ell(A)}\sum_{j\in\ell(B)}\log b_{i,j}\bigg]=\exp(-\psi_{A\rightarrow B})
\end{align}
where the last equality directly results from the definition of $b_{i,j}$ and the definition of the interaction energy between $A$ and $B$ in \eqref{eq:interaction-energy}. In case $R_x$ is of depth $1$ (that is, $A$ is a leaf node), $\gamma(A)$ is simply defined to be $0$.
By plugging these values into \eqref{eq:dkl1} and doing some algebra, we derive the optimal value of the total KL-divergence as follows:
\begin{equation}\label{eq:optimal-kl}
    \bar{D}_{KL}(\hat{\boldsymbol{\Theta}}^R\|\boldsymbol{\Theta}^f)=\sum_{A\in chd(R_x)}\bigg[\sum_{i\in\ell(A)}\log(z_i+\bar{z}_i)-|\ell(A)|\log\bigg(\gamma(A)+\sum_{B\in sib(A)}|\ell(B)|\zeta(A,B)\bigg)\bigg]
\end{equation}
On the other hand, using \eqref{eq:gamma}, we can derive $\gamma(R_x)$ as:
\begin{equation}\label{eq:gamma-R}
    \gamma(R_x)=\exp\bigg[\frac{1}{|\ell(R_x)|}\bigg(\sum_{i\in\ell(R_x)}\log z_i-\bar{D}_{KL}(\hat{\boldsymbol{\Theta}}^{R}\|\boldsymbol{\Theta}^{f})\bigg)\bigg]
\end{equation}
Now by plugging \eqref{eq:optimal-kl} into \eqref{eq:gamma-R}, applying \eqref{eq:zeta}, and taking the logarithm of both sides, we arrive at:
\begin{equation}\label{eq:gamma-recurrence}
    \log \gamma(R_x)=\sum_{A\in chd(R_x)}\frac{|\ell(A)|}{|\ell(R_x)|}\log\bigg[\exp\big(\log \gamma(A)\big)+\sum_{B\in sib(A)}|\ell(B)|\exp(-\psi_{A\rightarrow B})\bigg]
\end{equation}
By comparing \eqref{eq:gamma-recurrence} to the definition of the energy of the signal hierarchy in \eqref{eq:hierarchical-energy}, it is clear that our proposed energy function $\phi(\cdot)$ and $-\log\gamma(\cdot)$ follow the exact same recurrence dynamic. Furthermore, since the initial values of these two functions at the leaf nodes are both equal to $\infty$, we can conclude that $\gamma(A)=exp(-\phi(A))$ for all nodes $A$ in the signal hierarchy $h_x$. In other words, $\gamma(\cdot)$ and $\zeta(\cdot,\cdot)$ are respectively the exact same functions as $\alpha(\cdot)$ and $\beta(\cdot,\cdot)$ in \eqref{eq:alpha-beta}. This further means that the optimal coefficients $\mu(\cdot)$ and $\delta(\cdot,\cdot)$ in \eqref{eq:mu-delta} to update the optimal self-attention matrix recurrence in \eqref{eq:theta_hat} are the exact same coefficients in our proposed recurrence in \eqref{eq:hierarchical-attention} to compute hierarchical self-attention. Since both methods result in the same attention matrix for the base case of one-level hierarchy (\textit{i.e.} the standard Softmax attention), \textit{and} also follow the exact same recurrence dynamic, we can conclude that they are equivalent. This means that our proposed HSA formulation is also optimal in the sense of the total KL-divergence, which concludes the proof. 
\end{proof}

\subsection{Theorem 2: The Correctness and The Complexity of Algorithms \ref{alg:HSA}--\ref{alg:top-down}} \label{proofs:thm2}

\begin{proof}
Before proving the correctness and the complexity of our proposed algorithm, we show the complexity of directly calculating \eqref{eq:hierarchical-attention}. In order to compute $\nabla_{q_i}\phi(R_x)$, we would need to first calculate the node energy function $\phi(\cdot)$ at every node in the signal hierarchy using the recursive formula in \eqref{eq:hierarchical-energy}. For a signal hierarchy with $M$ internal nodes and the maximum $b$ branching factor, we would have $O(M\cdot b)$ nodes in the hierarchy, at each one of them, we would need to compute the sum in in \eqref{eq:hierarchical-energy} over their $O(b)$ siblings. This would make the total complexity of calculating $\phi(\cdot)$ $O(M.b^2)$. This is essentially the complexity of the recursive function in Algorithm \ref{alg:bottom-up}.

Next, to compute $\nabla_{q_i}\phi(\cdot)$ from \eqref{eq:hierarchical-attention}, we need to traverse the path from the root node to the leaf node corresponding to $q_i$ which has $O(\log_b M)$ nodes. In each node, we also need to calculate a sum over the $O(b)$ siblings of that node, which makes the cost of calculating $\nabla_{q_i}\phi(\cdot)$ $O(b\log_b M)$. However, since we would need to repeat this calculation for all $O(M\cdot b)$ leaf nodes $q_i$'s, the total cost of computing HSA would become $O(b^2.M\log_b M)$.

Moving on with the proof, we note that the recurrence relation in \eqref{eq:hierarchical-attention} can be written as:
\begin{equation}\label{eq:recurrence}
    \nabla_{q_i}\phi(R_x)=\exp\big(f(B^i)\big)\nabla_{q_i}\phi(B^i)+g(B^i)
\end{equation}
where
\begin{equation}\label{eq:fg}
    f(B^i)=\log \mu(B^i)\text{  ,   }g(B^i)=\sum_{C\in sib(B^i)}|\ell(C)|\delta(B^i,C)\cdot\nabla_{q_i}\psi_{B^i \rightarrow C}
\end{equation}
and $\mu(\cdot)$ and $\delta(\cdot,\cdot)$ are given in \eqref{eq:mu} and \eqref{eq:delta}. Furthermore, \eqref{eq:recurrence} is a first-order, non-homogeneous recurrence relations with variable coefficients for which we can derive the following closed-form solution:
\begin{equation}\label{eq:closed-form}
    \nabla_{q_i}\phi(R_x)=\sum_{B\in R_x\rightsquigarrow L_i}\bigg[ g(B)\exp\bigg(\sum_{C\in R_x\rightsquigarrow Pa(B)}f(C)\bigg)\bigg]=\sum_{B\in R_x\rightsquigarrow L_i}\bigg[ g(B)\exp\bigg(u\big(Pa(B)\big)\bigg)\bigg]
\end{equation}
where
\begin{equation}\label{eq:u-recurrence}
    u\big(A\big)=\sum_{C\in R_x\rightsquigarrow A}f(C)=f(A) + u\big(Pa(A)\big),
\end{equation}
$R_x\rightsquigarrow A$ denotes the set of all nodes on the path from the root to node $A$ \textit{excluding} the root itself, $L_i$ is the leaf node corresponding to $q_i$ and $Pa(B)$ denotes the parent of node $B$. Furthermore, define:
\begin{equation}\label{eq:vartheta-recurrence}
    \vartheta(A)\equiv\sum_{B\in R_x\rightsquigarrow A}\bigg[ g(B)\exp\bigg(u\big(Pa(B)\big)\bigg)\bigg]=g(A)\exp\bigg(u\big(Pa(A)\big)\bigg)+\vartheta\big(Pa(A)\big)
\end{equation}
Then it is straightforward to see:
\begin{equation}
    \nabla_{q_i}\phi(R_x)=\vartheta(L_i)\text{  ,  }\forall i\in\ell(R_x)
\end{equation}
On the other hand, given that each $Q$ and $K$ variables are normalized via a LayerNorm layer, we can plug \eqref{eq:euclidean-interation} into \eqref{eq:fg}, to get:
\begin{align}\label{eq:g}
    g(A)&=-\sum_{C\in sib(A)}\frac{\delta(A,C)}{|\ell(A)|\sqrt{d}}\sum_{j \in \ell(C)}k_j\nonumber\\
    &=-\frac{1}{|\ell(A)|\sqrt{d}\big[\alpha(A) + \sum_{D\in sib(A)}|\ell(D)|\beta(A,D)\big]}\sum_{C\in sib(A)}\bigg[\beta(A,C)\sum_{j \in \ell(C)}k_j\bigg]\nonumber\\
    &=-\frac{1}{|\ell(A)|\sqrt{d}\big[\exp\big(-\phi(A)\big) + \exp\big(-\eta(A)\big)\big]}\sum_{C\in sib(A)}\bigg[|\ell(C)|\beta(A,C)\rho_k(C)\bigg]\nonumber\\
    &=-\frac{1}{|\ell(A)|\sqrt{d}\big[\exp\big(-\phi(A)\big) + \exp\big(-\eta(A)\big)\big]}\sum_{C\in sib(A)}\bigg[|\ell(C)|\exp(-\psi_{A\rightarrow C})\rho_k(C)\bigg]\nonumber\\
    &=-\frac{\sum_{C\in sib(A)}|\ell(C)|\exp\bigg(\varepsilon_\Omega(A)^T\varepsilon_\Omega(C) - \sqrt{d} + \frac{1}{\sqrt{d}}\rho_q(A)^T\rho_k(C)\bigg)\rho_k(C)}{|\ell(A)|\sqrt{d}\big[\exp\big(-\phi(A)\big) + \exp\big(-\eta(A)\big)\big]}
\end{align}
where the last equality is derived from \eqref{eq:psi}, and we have:
\begin{align}
    \eta(A)&\equiv  -\log\big[\sum_{D\in sib(A)}|\ell(D)|\beta(A,D)\big]\nonumber\\
    &=-\log\big(\sum_{B \in sib(A)}|\ell(B)|\exp\big[\varepsilon(A)^T\varepsilon(B) + \frac{1}{\sqrt{d}}\rho_q(A)^T\rho_k(B)-\sqrt{d}\big]\big)
\end{align}
and
\begin{equation}
    \rho_q(A)\equiv\frac{1}{|\ell(A)|}\sum_{j\in \ell(A)} q_j\text{  ,  }\rho_k(A)\equiv\frac{1}{|\ell(A)|}\sum_{j\in \ell(A)} k_j
\end{equation}
Furthermore, we can rewrite $f(A)$ as:
\begin{align}\label{eq:f}
    f(A)&=\log \mu(A)=\log\bigg[\frac{\alpha(A)}{\alpha(A) + \sum_{D\in sib(A)}|\ell(D)|\beta(A,D)}\bigg]\nonumber\\
    &=\log\bigg[\frac{\exp\big(-\phi(A)\big)}{\exp\big(-\phi(A)\big) + \exp\big(-\eta(A)\big)}\bigg]\nonumber\\
    &=\log\bigg[\frac{1}{1 + \exp\big(\phi(A)-\eta(A)\big)}\bigg]\nonumber\\
    &=\log\texttt{Sigmoid}\big[\eta(A)-\phi(A)\big]
\end{align}
Finally, by plugging \eqref{eq:f} and \eqref{eq:g} into the recurrence relations in \eqref{eq:u-recurrence} and \eqref{eq:vartheta-recurrence}, we arrive at Lines $3$--$4$ of Algorithm \ref{alg:top-down}. This means that after completion of Algorithm \ref{alg:top-down}, we can read off $\nabla_{q_i}\phi(R_x)=\vartheta(L_i)$ at the leaf nodes of the hierarchy. This proves the correctness of our proposed algorithm.

As for the complexity, since Algorithm \ref{alg:bottom-up} visits each $O(M\cdot b)$ nodes of the hierarchy once and performs the summation in Lines $16$--$18$ over the $O(b)$ siblings of each node, the complexity of Algorithm \ref{alg:bottom-up} is $O(M\cdot b^2)$, which means the total complexity of computing HSA using our dynamic programming approach is $O(M\cdot b^2)$. And this concludes the proof.  
\end{proof}

\section{Experimental Settings} \label{appendix:experimental-settings}
In this appendix, we detail the experimental settings used for the reported experiments in the main paper, which are all completed on 2 Nvidia Titan V GPUs with 24GB GPU memory on a local Lambda box.
\subsection{Datasets}
\textbf{Hierarchical Language}: For this experiment, we have chosen the text classification problem for the sentiment analysis task on two datasets: \textit{IMDB}~\cite{maas-etal-2011-learning, imdb}, and \textit{Elec}~\cite{McAuley2013HiddenFA, elec}—for sentiment classification in movie reviews and Amazon electronics product reviews, respectively. The reason behind choosing these datasets lies in their inclusion of lengthy texts, which means they can benefit from hierarchical representation. Both datasets have $2$ classes. Table \ref{table:datasets} summarizes some basic statistics for these datasets. For the validation set, we have used $10\%$ of the training set.
\label{appendix:datasets}
\begin{table}[htb]
\small
\centering
\begin{tabular}{lcccc}
  \hline
  &\textbf{\# Classes} &\textbf{Train Size} & \textbf{Test Size}&\textbf{Avg. Word/Doc.}\\
  \hline
  \textbf{IMDB}&2&25K&25k&235\\
 \textbf{Elec}&2&25K&25k&108\\
  %\hline
  % AG News&4&120K&7.6K&40\\
   \hline
\end{tabular}
\caption{The statistics for the IMDB and Elec datasets used for the sentiment classification task.}
\label{table:datasets}
\end{table}

\textbf{Multi-modal News Classification}: For this task, we have performed experiments for the news classification task on N24News dataset \cite{wang2022n24news}, where for each news article not only we have language and image modalities present, but the text itself consists of multiple sub-modalities, \textit{i.e.} headline, abstract, image caption and main body. N24News dataset consists of total of $61,218$ news stories and $24$ total number of classes. The source of the news articles is the New York Times from 2010 to 2020. For training/validation/testing splitting, we use random splitting of ratio 8:1:1 used by the original paper.

\subsection{Model Architectures} \label{sec:model-architectures}
All the competitor models in our experiments follow the same general architectural pattern: an attention layer, followed by a global pooling layer, followed by a multi-layer Perceptron (MLP).

\textbf{The attention layer}: For our main model, the attention layer is a single-layer, HSA-based transformer as depicted in Fig. \ref{fig:architecture}. For brevity, we refer to this architecture simply as \textbf{HSA}. For the flattened self-attention (\textbf{FSA}) baseline, the same attention layer as Fig. \ref{fig:architecture} is applied, except that the input signal hierarchy to the layer is flattened into a one level (simple) signal (For experiments using the standard transformer layers, see Appendix \ref{appendix:comparison}). As shown by Proposition 1, a single level signal hierarchy is mathematically equivalent to the standard Softmax attention mechanism, which means that we can view \textbf{FSA} representing the standard Softmax attention. For the \textbf{DeepSet} baseline in the multi-modal experiment, we apply the same architecture as Fig. \ref{fig:architecture} for the attention layer, except the attention operation itself is disabled. That is, all the other neural operations in the HTE layer is applied except for the attention. This effectively means that we individually transform each token in the signal hierarchy without letting them interact with each other through the attention mechanism. This operation followed by pooling and MLP layers effectively implements a DeepSet architecture \cite{NIPS2017_f22e4747} for combining the token representations in the input signal into a single, fixed sized vector.
Note that in all of our experiments across different models, the attention layer is simply the HTE layer in Fig. \ref{fig:architecture} or a variant of it, and as such we can specify the architectural details for each experiment/model using the same hyper-parameters, as detailed in Table \ref{table:models}. To ensure a fair comparison, we maintain an equal number of parameters across all models within each experiment.

\begin{table}[htb]
 \small
\centering
\begin{tabular}{lcc c ccc}
  \toprule
  %\rowcolor{lightgray} 
   \textbf{Experiment}&%\textbf{Deep Set}&
   \multicolumn{2}{c}{\textbf{Hierarchical Language}} && \multicolumn{3}{c}{\textbf{Multi-modal News Classification}}\\ \cline{2-3}\cline{5-7}
   \textbf{Model}&\textbf{FSA}&\textbf{HSA} &&\textbf{Deep Set}&\textbf{FSA}&\textbf{HSA}\\
   \midrule
  \textbf{\# of Parameters}%&1.2M
  &1.2M&1.2M && 13.4M & 11.8M & 11.8M\\
    \textbf{\# of Heads}%&3
   &3&3 && 3 &3 &3\\
   %\hline
    \textbf{HTE Layer Output dim} %&256
    &128&128 && 512 & 512 & 512\\
    \textbf{Position Embedding dim}%&768
    &768&768 && 768 & 768 & 768\\
     \textbf{Attention dim}%&170
     &128&128 && 768 & 256 & 256\\
    \textbf{MLP dim}%&128
    &128&128 && 512 & 512 & 512\\
   \bottomrule
\end{tabular}
\caption{Configuration of model architectures employed in all experiments/models}
\label{table:models}
\end{table}

\textbf{The global pooling layer}: The purpose of global pooling layer is to aggregate the leaf representation across the hierarchy into a single, fixed-size vector. We have multiple options for this layer; in our experiments, we have chosen the global mean pooling.

\textbf{The MLP}: After pooling the representation into a single vector, we apply a 1-hidden layer MLP on the resulted vector, the dimensions of which are summarized in Table \ref{table:models}.

\subsection{Training Hyper-parameters}\label{appendix:training-hyperparameters}
Table \ref{table:training-hyperparameters} summarizes the training hyper-parameters used for each experiment. We use the same hyper-parameters across different baselines for each experiment.

\begin{table}[htb]
\small
\centering
\begin{tabular}{lcc}
  \toprule
  %\rowcolor{lightgray} 
   \textbf{Experiment}&\textbf{Hierarchical Language}& \textbf{Multi-modal News Classification}\\
  \midrule
  \textbf{Loss Function}&Standard Cross-Entropy Loss&Standard Cross-Entropy Loss\\
  \textbf{Train Batch Size}&64&512\\
   % \hline
  \textbf{Test Batch Size}&64&512\\
  %\hline
  \textbf{Optimizer}&AdamW&AdamW\\
  %\hline
  \textbf{Max Tokens for Training}&512&512\\
  %\hline
  
  \textbf{Learning Rate}&$2\times10^{-5}$&$1\times10^{-4}$\\
  %\hline
  \textbf{Learning Rate Scheduler}&LinearLR&LinearLR\\
  %\hline
  \# \textbf{Train Epochs}&30&5\\
    \bottomrule
\end{tabular}
\caption{The training hyperparameters used for each experiment.}
\label{table:training-hyperparameters}
\end{table}

\section{Comparison to The Classical Transformer Architecture} \label{appendix:comparison}
The experimental results reported in Sections \ref{sec:hierarchical-language} and \ref{sec:news-classification} aimed at comparing the performance of our HSA framework vs. that of the flat attention, where the rest of the architecture aside from the attention mechanism were the same one proposed in Appendix \ref{appendix:architecture}. However, a more practical comparison would be the one between the performance of these two mechanisms within the classical transformer architecture proposed by \cite{vaswani2017attention}. To this end, we have conducted experiments where we train from scratch and compare a standard RoBERTa model and a HSA-RoBERTa model (as proposed in Section \ref{sec:zero-shot} on two GLUE benchmarks. For HSA-RoBERTa, we simply replace the standard flat self-attention operation with HSA, while the hierarchy is imposed a fixed four-level hierarchy where the branching factors from bottom to to are 16, 8, 4, and 2.

\begin{table}[htb]
\small
\centering
\begin{tabular}{cccccc}
  \hline
  \textbf{Dataset} & \textbf{Model} & Accuracy & Precision & Recall & F1 Score \\
  \hline
  \multirow{2}{*}{MRPC} & RoBERTa & 0.8608 & 0.8872 & 0.9058& 0.8964 \\
  & HSA-RoBERTa & \textbf{0.8846} & \textbf{0.9165} & \textbf{0.9093} & \textbf{0.9129} \\
  \hline
  \multirow{2}{*}{RTE} & RoBERTa &0.8158  &0.7985 & \textbf{0.8167} & \textbf{0.8075} \\
  & HSA-RoBERTa & 0.8158 & \textbf{0.8076} & 0.8015 & 0.8045 \\
  \hline
  \multirow{2}{*}{QQP (after 4 epochs)} & RoBERTa &0.3681  &0.3681 & 0.5381 & 0.4371 \\
  & HSA-RoBERTa & \textbf{0.9185} & \textbf{0.8764} & \textbf{0.9065} & \textbf{0.8911} \\
  \hline
\end{tabular}
\caption{The comparison of training RoBERTa vs. HSA-RoBERTa from scratch on three GLUE datasets.}
\label{tab:nlpresults2}
\end{table}

Table \ref{tab:nlpresults2} shows the results on the evaluation set of each dataset after training. As these results show, the incorporation of HSA within a standard transformer architecture not only can improve the computational complexity of self-attention computation, but it can also improve the evaluation metrics due to the regularization effects of our hierarchical framework. This result is consistent with the ones in Sections \ref{sec:hierarchical-language} and \ref{sec:news-classification}. Furthermore, for the QQP dataset, we have shown the results just after 4 epochs; interestingly, these results show that HSA-RoBERTa converges much faster than the standard RoBERTa model.

\section{Going Beyond Softmax Attention} \label{appendix:beyond}
One of the primary contributions of our work is generalizing Softmax attention from flat signals to the hierarchical structure of nested signals. This generalization is further confirmed by the theoretical result of Theorem \ref{thm:optimality}. However, there has been a significant effort in the literature to explore other forms of attention mechanisms than Softmax attention \cite{child2019generating, correia2019adaptively, zhou2024adapt, shen2021efficient, han2025agent}. One of the main motivations of departing from the Softmax attention lies in the fact that Softmax attention induces \textit{dense} probability distribution over all tokens. Sparse attention \cite{child2019generating, correia2019adaptively}, on the other hand, organically induces sparse probability distributions over tokens which can greatly improve the interpretability and computational efficiency of transformer models. A natural question is then whether our hierarchical derivation can be applied to other forms of attention, in particular the sparse attention. In other words, can our formalism also generalize sparse attention from flat signals to the hierarchical structure of nested signals? 

\subsection{Sparse Attention as Energy Minimization}
The first step toward generalizing Sparse attention to the hierarchical setting is to formulate the flat case as an energy minimization problem, much like what we did in Proposition \ref{proposition:softmax} for the Softmax attention. To this end, we would need to define an appropriate energy function for the sparse attention. But before that let us define a generic form of energy function that can encompass various forms probability-based attentions.

Let $Q$ and $K$ be sets of query and key vectors with bounded norms (\textit{e.g.} induced by LayerNorm) respectively; we define the generic energy function as:
\begin{equation}\label{eq:generic-energy}
    \phi^g(Q,K)=-\frac{1}{N}\sum_{i=1}^N \phi_i^g(z_{i1}, z_{i2}, \dots, z_{iN}) \text{, where } z_{ij}=q_i^Tk_j
\end{equation}
Then the gradient of $\phi^g(Q,K)$ w.r.t the query token $q_i$ is:
\begin{equation}
    \nabla_{q_i}\phi^g = -\sum_{j=1}^N \frac{\partial\phi_i^g}{\partial z_{ij}}\cdot k_j=-(\nabla_z\phi_i^g)^T\boldsymbol{K}
\end{equation}
where $\boldsymbol{K}$ is the key matrix (as defined in \eqref{eq:matrix-form}) and $\nabla_z\phi_i^g=\big[\frac{\partial\phi_i^g}{\partial z_{i1}},\dots,\frac{\partial\phi_i^g}{\partial z_{iN}}\big]^T$ is the attention weight vector. In \eqref{eq:softmax-energy}, we defined $\phi_i^g$'s to be the \textbf{log-sum-exp} function and that led the attention weight vector $\nabla_z\phi_i^g$ to be the Softmax function. Now in the general case, if $\phi_i^g$'s are continuous and strictly convex, we can write (See \cite{blondel2019learning} Proposition 1.3):
\begin{equation} \label{eq:attention-weights}
    \nabla_z\phi_i^g(z)=\arg\max_{p\in\mathrm{dom}(\phi_i^{g*})\subset \mathbb{R}^N}\big[p^Tz - \phi_i^{g*}(p)\big]
\end{equation}
where $\mathrm{dom}(f)$ is the domain of function $f(\cdot)$, and $\phi_i^{g*}(p)=\sup_{z\in\mathrm{dom}(\phi_i^{g})}\big[p^Tz - \phi_i^{g}(z)\big]$ is the convex conjugate of $\phi_i^{g}(z)$. For the \textbf{log-sum-exp} function $\phi_i^{g}(z)=\log\big[\sum_{j=1}^N\exp(z_{ij})\big]$, the convex conjugate is the negative Shannon Entropy $\phi_i^{g*}(p)=\sum_{j=1}^Np_{ij}\log p_{ij}$.

On the other hand, in sparse attention \cite{correia2019adaptively}, the attention weight vector $\nabla_z\phi_i^g(z)$ is set to be the $\alpha$-entmax function which has the exact same form as \eqref{eq:attention-weights} with $\phi_i^{g*}(p)=-\mathcal{H}^T_\alpha(p)$, where
\begin{equation}\label{eq:tsallis-ent}
    \mathcal{H}^T_\alpha(p)=\left\{ 
    \begin{array}{rcl}
    \frac{1}{\alpha(\alpha-1)}\sum_{j=1}^N \big(p_{ij}-p_{ij}^\alpha\big),
    & \alpha\neq 1 \\ -\sum_{j=1}^N p_{ij}\log p_{ij}, & \alpha=1
    \end{array}\right.
\end{equation}
is the Tsallis continuous family of entropies \cite{tsallis1988possible}. It is straightforward to show that for $\alpha=1$ (\textit{i.e.} the Shannon Entropy), the $\alpha$-entmax function reduces to the Softmax function. However, as we saw before, we can alternatively derive the Softmax function by first deriving the energy component $\phi_i^{g*}(p)$ as the \textbf{log-sum-exp} function and then computing its gradient. Now by following the same process for the general Tsallis entropy, we can derive the equivalent energy component whose gradient would be the $\alpha$-entmax function. In particular, by setting $\phi_i^{g*}(p)=-\mathcal{H}^T_\alpha(p)$ (as done in the formulation of Sparse attention \cite{correia2019adaptively}), we will have the energy component $\phi_i^{g}(z)=[-\mathcal{H}^T_\alpha(p)]^*$, which can be further derived in closed form as:
\begin{equation}\label{eq:sub-energy}
    \phi_i^{g}(z)=\frac{1}{\alpha(\alpha-1)} + \sum_{j=1}^N y_{ij}^{1/(\alpha-1)}\bigg(z_{ij} - \frac{1}{\alpha(\alpha-1)}y_{ij}\bigg)
\end{equation}
where 
\begin{equation}
    y_{ij}=\mathrm{ReLU}[(\alpha-1)z_{ij}-\tau_i]
\end{equation}
and $\tau_i$ is the Lagrange multiplier corresponding to the $\sum_{j=1}^Np_{ij}=1$ constraint. Note that, in general, $\tau_i$ is a function of all $z_{ij}$'s; that is, $\tau_i=\tau(z_{i0},\dots,z_{iN})$. By plugging \eqref{eq:sub-energy} into \eqref{eq:generic-energy}, we arrive at the equivalent energy function for the general $\alpha$-entmax attention (\textit{i.e.} the sparse attention):
\begin{equation}\label{eq:sparse-energy}
    \phi^{g}(Q,K)= - \frac{1}{N}\sum_{i=1}^N\sum_{j=1}^N y_{ij}^{1/(\alpha-1)}\bigg(z_{ij} - \frac{1}{\alpha(\alpha-1)}y_{ij}\bigg) + C \text{, where } z_{ij}=q_i^Tk_j
\end{equation}

\subsection{The Hierarchical Generalization}
Now that we have the energy function for sparse attention in the flat case (\eqref{eq:sparse-energy}), we can generalize it to the hierarchical structure of nested signal by following similar recipe as \eqref{eq:hierarchical-energy}. In particular, for node $A$ in the signal hierarchy $h_x$, the \textit{hierarchical sparse energy} is recursively defined as:
\begin{equation}\label{eq:hierarchical-sparse-energy}
    \phi_\alpha(A)= -\sum_{B\in  chd(A)}\frac{|\ell(B)|}{|\ell(A)|} \phi_B^g\bigg(-\phi_\alpha(B), \log|\ell(C_1)|-\psi_{B\rightarrow C_1},\dots, \log|\ell(C_k)|-\psi_{B\rightarrow C_k}\bigg) 
\end{equation}
where $C_1,\dots,C_k$ are the sibling nodes of $B$, $\psi_{B\rightarrow C_k}$ is the interaction energy function defined in \eqref{eq:interaction-energy}, and the multi-variate function $\phi_B^g$ has the same functional form as \eqref{eq:sub-energy}. Then \eqref{eq:hierarchical-energy} can be seen as a special case of \label{eq:hierarchical-sparse-energy} where $\alpha=1$ and $\phi_B^g$ reduces to the \textbf{log-sum-exp} function. Given the hierarchical sparse energy, we can derive the hierarchical sparse attention by taking the gradient of $\phi_\alpha(R_x)$ w.r.t to each query vector $q_i$, similar to the derivation in \eqref{eq:hierarchical-attention} for the Softmax case. 
We leave further derivation of an efficient algorithm and theoretical optimality for sparse attention to future work. 

Lastly, it should be noted that similar to flat sparse attention, one can also learn the sparsity factor $\alpha$ via back-propagation in the hierarchical case. This can be further extended to learning different sparsity patterns for different levels of hierarchy, which can be useful depending on the application.

\section{Zero-shot Approximation of Self-attention: Ablation Study} \label{appendix:ablation}
In this appendix, we further expand on the experimental results for the zero-shot HSA approximation of RoBERTa presented in the main paper. In particular, we study the effects of approximating different combination of layers as well as different hierarchical structures for the datasets reported in the main paper.

\subsection{Experimental Setup}
\textbf{Datasets}: We have run experiments on 5 GLUE datasets (SST-2, CoLA, MRPC, RTE and QNLI) as well as the AGNEWS and IMDB datasets. 

\textbf{Models}: For each dataset, we have used the appropriate pre-trained RoBERTa checkpoint and configuration that has been fine-tuned on the corresponding task. Table \ref{tab:checkpoints} lists the checkpoints and configuration used for each dataset. All of our experiments involve only evaluation of pre-trained RobERTa without any training of fine-tuning it.

\textbf{Metrics}: We have computed Accuracy, Precision, Recall and F1 Score to measure the accuracy drop of pre-trained RoBERTa as its various layers are approximated by HSA.

\textbf{Impacted Layers}: As mentioned in the main paper, approximating all self-attention layers of RoBERTa typically leads to significant zero-shot accuracy drop across all tasks. However, approximating a subset of layers can introduce more reasonable gap while still benefiting from HSA speed up in terms of the number of FLOPs. Nevertheless, finding the best layer combination is a combinatorial problem. To alleviate this issue, instead of examining all different combinations, we only look at certain combinations based on two empirical observations. In particular, we observed that earlier layers in the network are typically more sensitive to approximation, whereas the latter ones are more amenable to it. This observation intuitively makes sense because the sooner approximation takes place in the network, the higher approximation error accumulates along the network. Moreover, having consecutive layers approximated typically increases the accuracy gap whereas interleaving them with regular self-attention layers decreases the gap.

Based on these two observations, in our experiments, we only examine combinations where a start layer (denoted by SL) and every other layer after that are approximated by HSA. The X-axis for the bar plots in this section is associated with SL. Also, the right bar in each plot represents the metrics for the original model without HSA approximation.

\textbf{Hierarchy}: For these experiments we chose to use fixed hierarchies based on non-overlapping hopping windows rather than semantic hierarchies based on the text structure. The reason behind this choice is that semantic hierarchies (such as sentences, paragraphs, etc.) are example dependant which means they would incur different number of FLOPs for different examples. But since our ultimate goal from this experiment is to reduce the number of flops consistently across the data, we opted to use fixed hierarchies.

The fixed hierarchies here are characterized by having a fixed branching factor for all the nodes belonging to the same level of the hierarchy. We then denote such hierarchy by the tuple $(A,B,C,...)$ where $A$ is the branching factor at the lowest level of the hierarchy, $B$ is the branching factor for the next level and so on. Having this notation in place, we have experimented with the following hierarchies:
\begin{itemize}
    \item[1.] $(2,2,2,2)$: A hierarchy with low branching factor at all levels.
    \item[2.] $(2,4,8,16)$: A hierarchy with low branching factor on the bottom and high branching factor on the top.  
    \item[3.] $(7,7,7,7)$: A hierarchy with high branching factors at all levels.
    \item[4.] $(8,4,2)$ A hierarchy with high branching factor on the bottom and low branching factor on the top.
\end{itemize}
\begin{table}
\begin{minipage}{\textwidth}
\centering
\begin{tabular}{ccc}
\toprule
\textbf{Dataset}&\textbf{RoBERTa Configuration}&\textbf{Checkpoint}\\
\toprule
IMDB&Large&siebert/sentiment-roberta-large-english~\footnote{https://huggingface.co/siebert/sentiment-roberta-large-english}\\
AGNEWS&Base&cardiffnlp/twitter-roberta-base-sentiment~\footnote{https://huggingface.co/cardiffnlp/twitter-roberta-base-sentiment}\\
SST-2&Base&textattack/roberta-base-SST-2~\footnote{https://huggingface.co/textattack/roberta-base-SST-2}\\
CoLA&Base&textattack/roberta-base-CoLA~\footnote{https://huggingface.co/textattack/roberta-base-CoLA}\\
MRPC&Base&textattack/roberta-base-MRPC~\footnote{https://huggingface.co/textattack/roberta-base-MRPC~\footnote{https://huggingface.co/textattack/roberta-base-MRPC}}\\
% QQP&Base&textattack/roberta-base-QQP~\footnote{https://huggingface.co/textattack/roberta-base-QQP}\\
% STS-B&Base&textattack/roberta-base-STS-B~\footnote{https://huggingface.co/textattack/roberta-base-STS-B}\\
%MNLI&Base&textattack/roberta-base-MNLI~\footnote{https://huggingface.co/textattack/roberta-base-MNLI}\\
QNLI&Base&textattack/roberta-base-QNLI~\footnote{https://huggingface.co/textattack/roberta-base-QNLI}\\
%WNLI&Base&textattack/roberta-base-WNLI~\footnote{https://huggingface.co/textattack/roberta-base-WNLI}\\
RTE&Base&textattack/roberta-base-RTE~\footnote{https://huggingface.co/textattack/roberta-base-RTE}\\
\bottomrule
\end{tabular}
\caption{Checkpoints and RoBERTa configurations used for evaluating each task.}
\label{tab:checkpoints}
\end{minipage}
\end{table}

\subsection{Results}
\textbf{SST-2 Task}: As Figure \ref{fig:sst-2} shows, the SST-2 task is relatively robust to the choice of SL (start layer for HSA approximation), where the accuracy gap widens if SL falls below Layer 5. Also, the choice of hierarchy is relatively inconsequential except for the narrow hierarchy with low-branching factor across all its levels, which demonstrates slightly poorer results compared to the rest.

\begin{figure*}[ht]
\centering
\begin{subfigure}{0.45\textwidth}
  \centering
  % include first image
  \includegraphics[width=\linewidth]{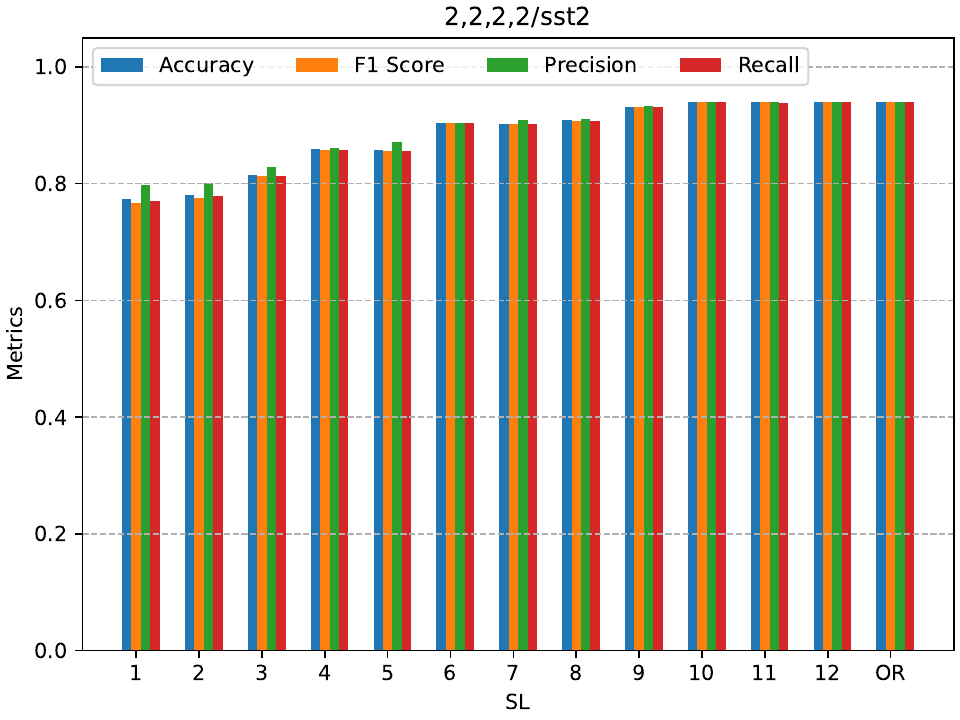}  
  % \caption{Nested signal for a website}
  \label{sst-2:sub-first}
\end{subfigure}
\begin{subfigure}{0.45\textwidth}
  \centering
  % include second image
  \includegraphics[width=\linewidth]{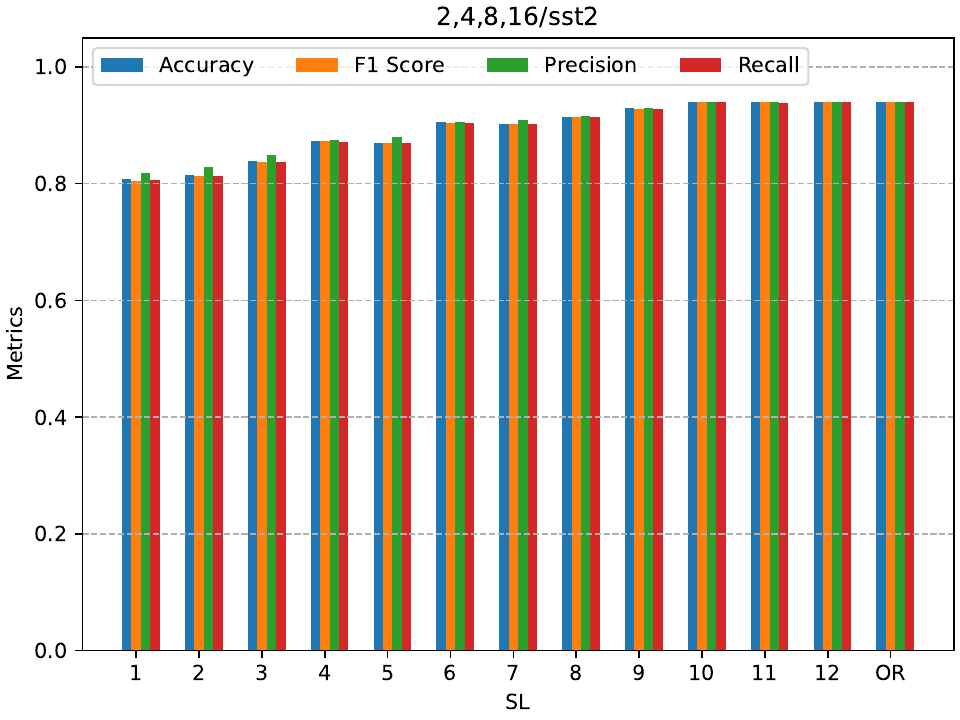}  
  % \caption{Its signal hierarchy representation}
  \label{sst-2:sub-second}
\end{subfigure}
\begin{subfigure}{0.45\textwidth}
  \centering
  % include second image
  \includegraphics[width=\linewidth]{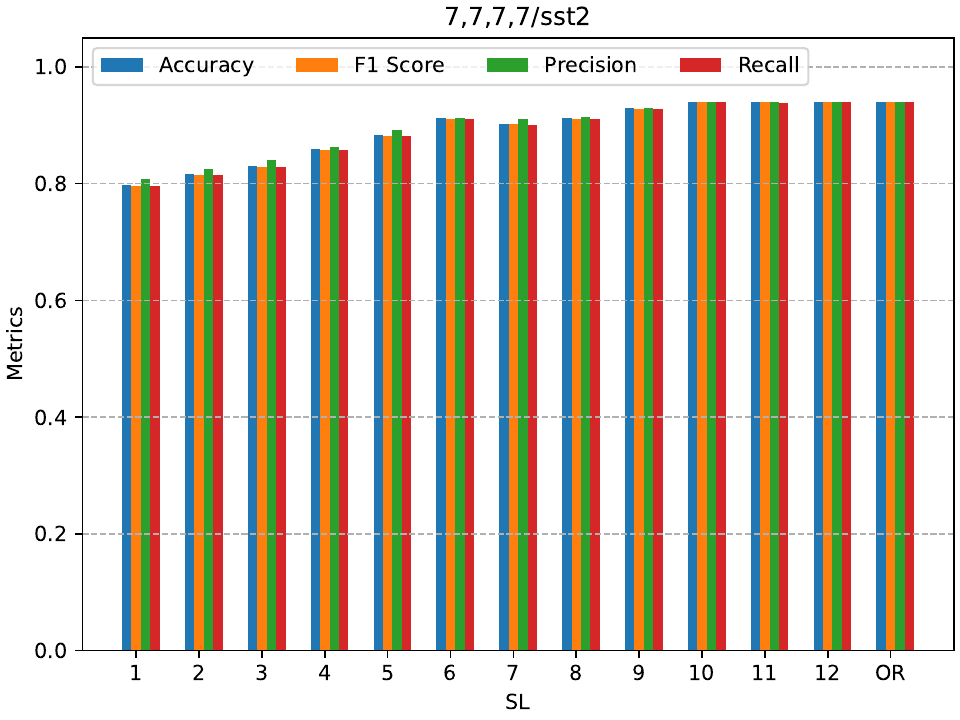}  
  % \caption{Its signal hierarchy representation}
  \label{sst-2:sub-third}
\end{subfigure}
\begin{subfigure}{0.45\textwidth}
  \centering
  % include second image
  \includegraphics[width=\linewidth]{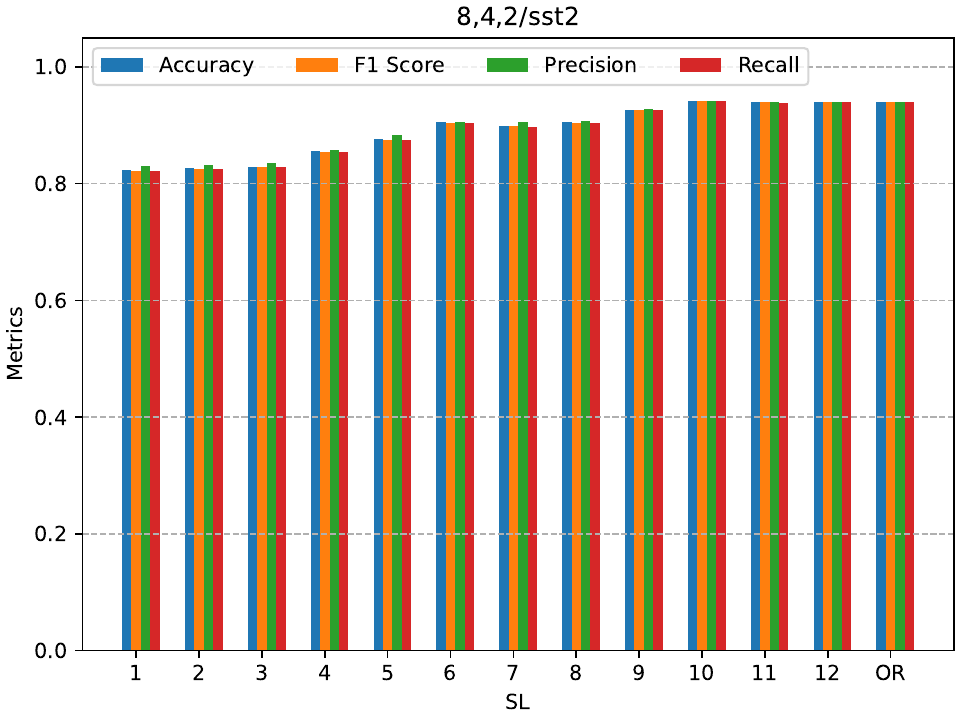}  
  % \caption{Its signal hierarchy representation}
  \label{sst-2:sub-forth}
\end{subfigure}
\caption{The accuracy metrics of HSA approximation of self-attention layers in RoBERTa for the STT-2 task.}
\label{fig:sst-2}
\end{figure*}

\textbf{RTE Task}: As Figure \ref{fig:rte} shows, the RTE task exhibits the same behavior as the SST-2 task with a major accuracy drop takes place when SL falls below Layer 7. Different hierarchy structures seem to have similar behavior though. 

\begin{figure*}[ht]
\centering
\begin{subfigure}{0.45\textwidth}
  \centering
  % include first image
  \includegraphics[width=\linewidth]{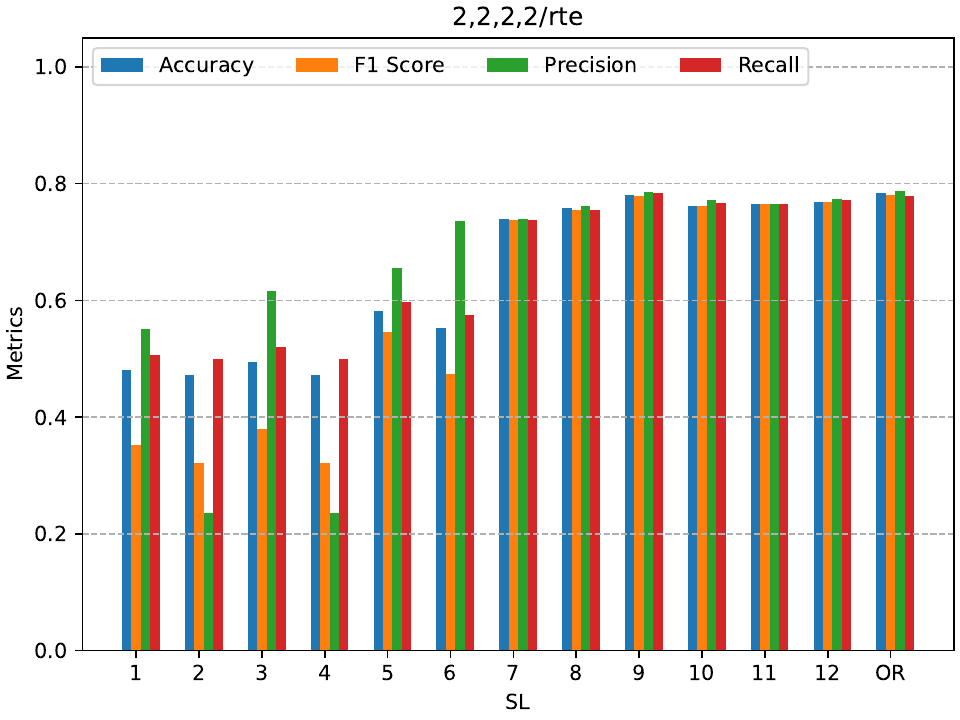}  
  % \caption{Nested signal for a website}
  \label{rte:sub-first}
\end{subfigure}
\begin{subfigure}{0.45\textwidth}
  \centering
  % include second image
  \includegraphics[width=\linewidth]{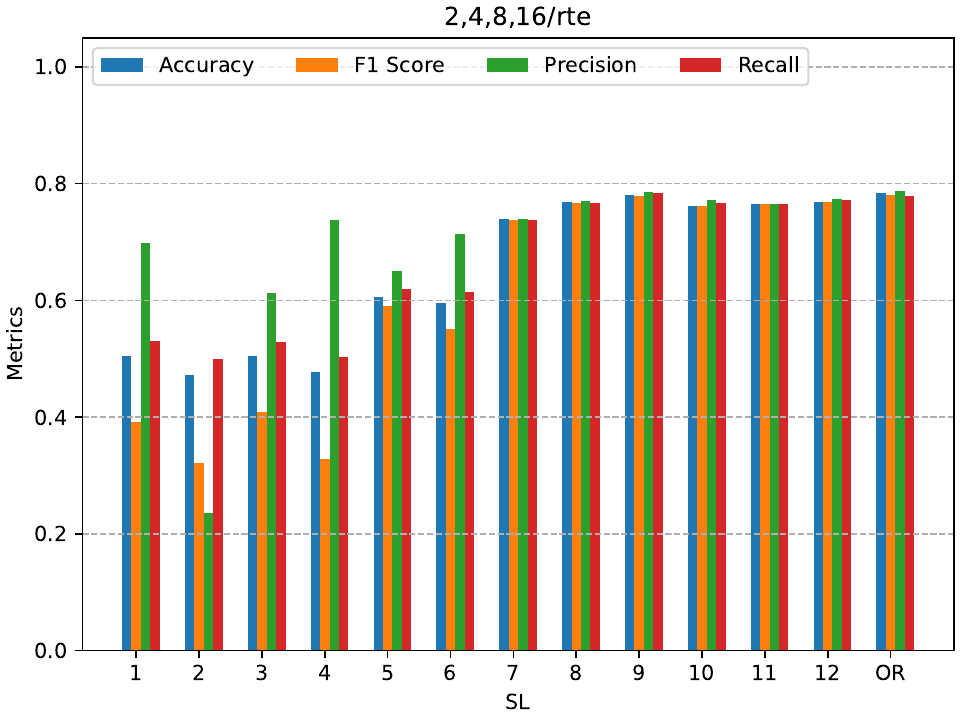}  
  % \caption{Its signal hierarchy representation}
  \label{rte:sub-second}
\end{subfigure}
\begin{subfigure}{0.45\textwidth}
  \centering
  % include second image
  \includegraphics[width=\linewidth]{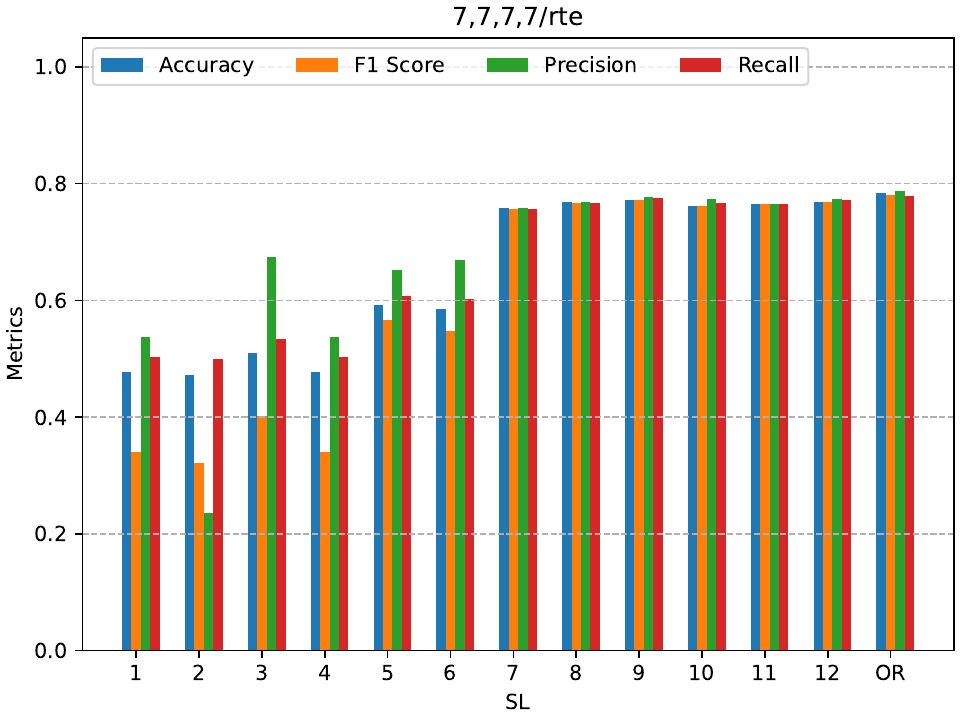}  
  % \caption{Its signal hierarchy representation}
  \label{rte:sub-third}
\end{subfigure}
\begin{subfigure}{0.45\textwidth}
  \centering
  % include second image
  \includegraphics[width=\linewidth]{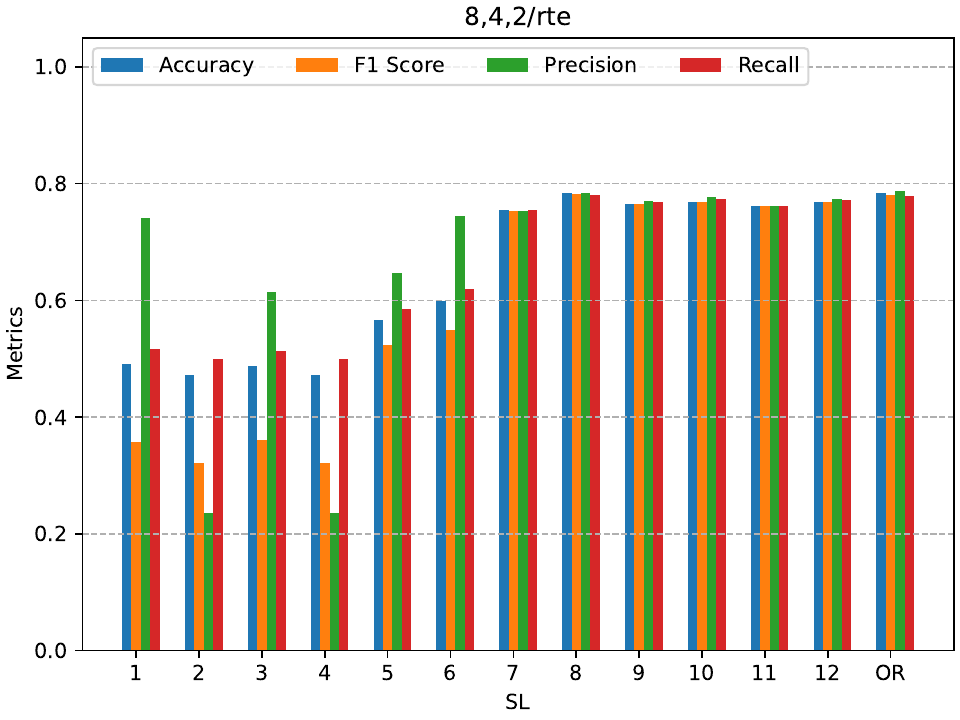}  
  % \caption{Its signal hierarchy representation}
  \label{rte:sub-forth}
\end{subfigure}
\caption{The accuracy metrics of HSA approximation of self-attention layers in RoBERTa for the RTE task.}
\label{fig:rte}
\end{figure*}

\textbf{MRPC Task}: As Figure \ref{fig:mrpc} shows, for the MRPC task, the layers with even index seem to be way more sensitive to HSA approximation than the odd-index layers. Among odd index layers, the accuracy gap starts to widen for SL below Layer 7. As for hierarchy structures, the structures with low  branching factor on the bottom levels seem to do better than the other two candidates.

\begin{figure}[ht]
\centering
\begin{subfigure}{0.45\textwidth}
  \centering
  % include first image
  \includegraphics[width=\linewidth]{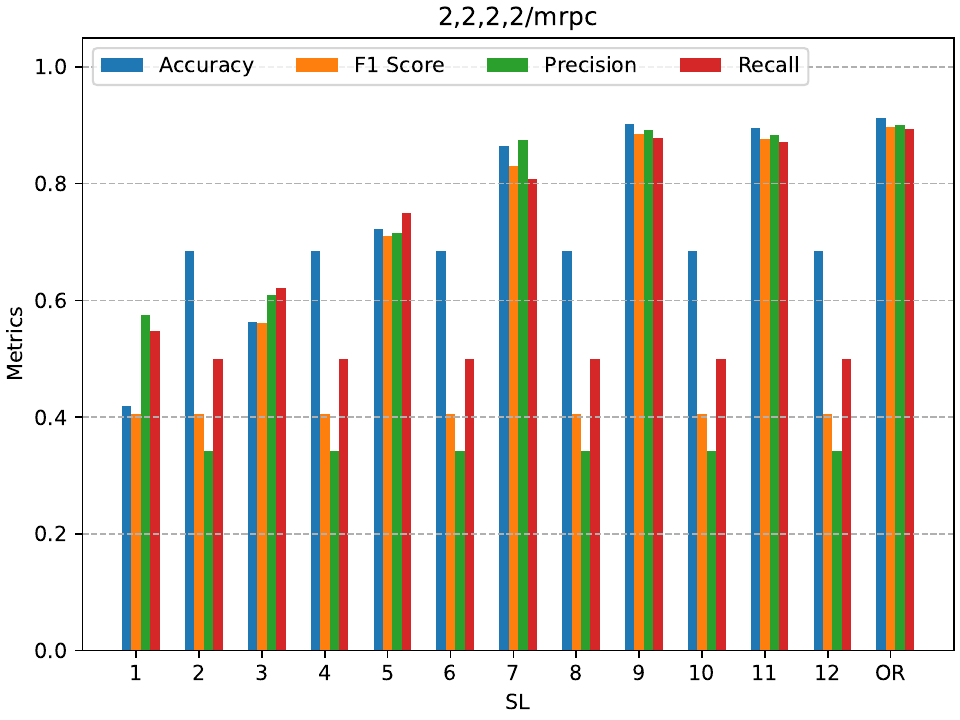}  
  % \caption{Nested signal for a website}
  \label{mrpc:sub-first}
\end{subfigure}
\begin{subfigure}{0.45\textwidth}
  \centering
  % include second image
  \includegraphics[width=\linewidth]{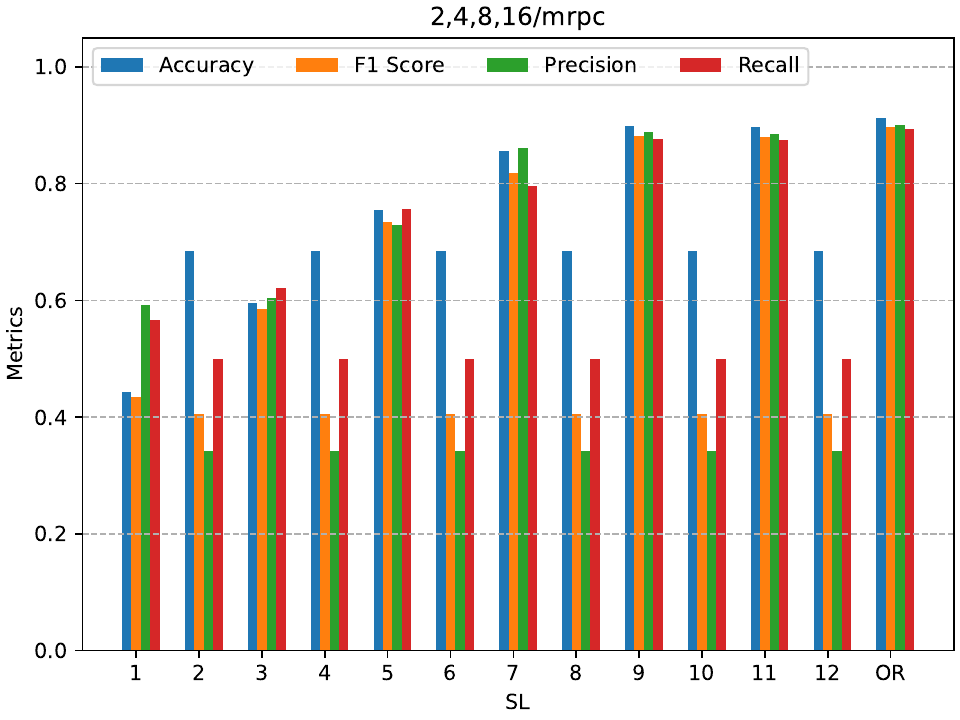}  
  % \caption{Its signal hierarchy representation}
  \label{mrpc:sub-second}
\end{subfigure}
\begin{subfigure}{0.45\textwidth}
  \centering
  % include second image
  \includegraphics[width=\linewidth]{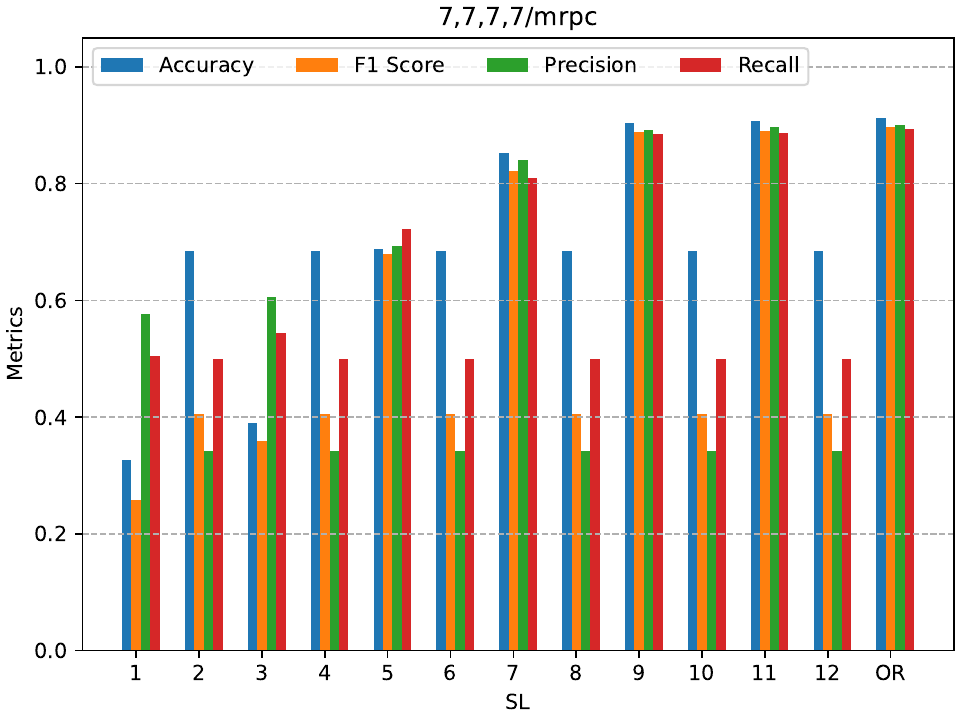}  
  % \caption{Its signal hierarchy representation}
  \label{mrpc:sub-third}
\end{subfigure}
\begin{subfigure}{0.45\textwidth}
  \centering
  % include second image
  \includegraphics[width=\linewidth]{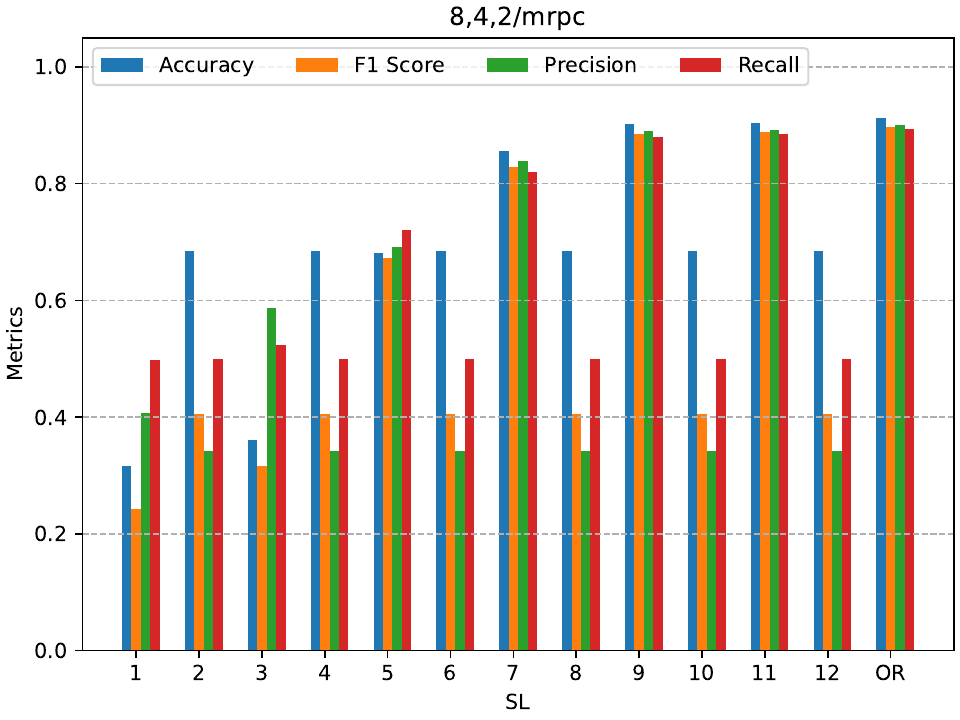}  
  % \caption{Its signal hierarchy representation}
  \label{mrpc:sub-forth}
\end{subfigure}
\caption{The accuracy metrics of HSA approximation of self-attention layers in RoBERTa for the MRPC task.}
\label{fig:mrpc}
\end{figure}

\textbf{QNLI Task}: As Figure \ref{fig:qnli} shows, for the QNLI task, there is a sharp drop of accuracy when SL falls below Layer 8, whereas for the last four layers the accuracy drop is practically insignificant. This shows that in this case, the last 5 layers are quite amenable to approximation. As for the hierarchy structures, they do not exhibit any significant difference for this task.

\begin{figure}[ht]
\centering
\begin{subfigure}{0.45\textwidth}
  \centering
  % include first image
  \includegraphics[width=\linewidth]{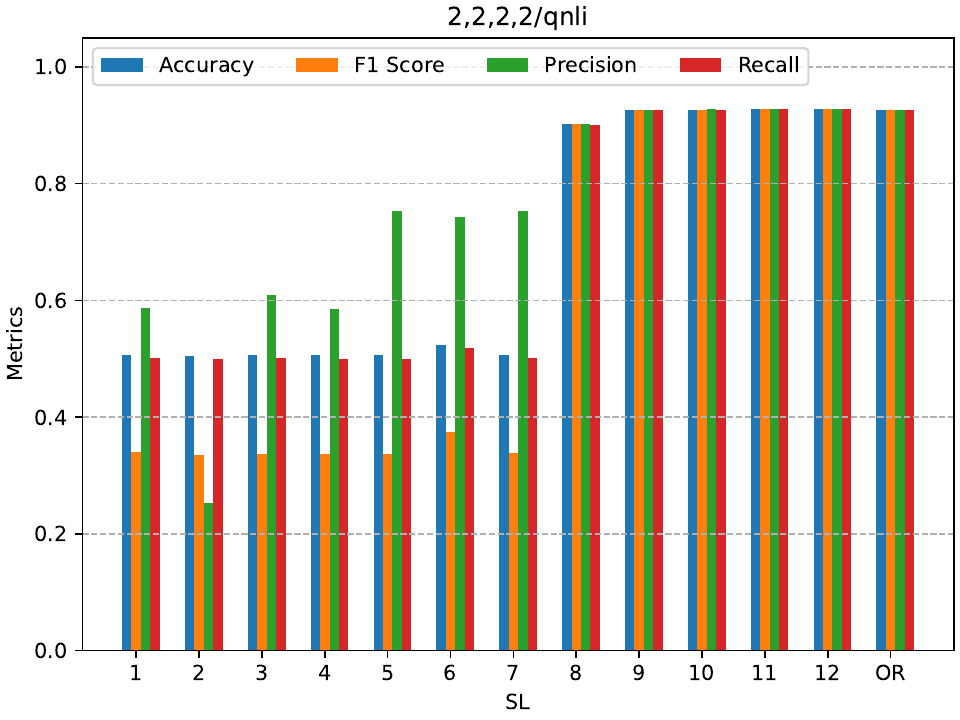}  
  % \caption{Nested signal for a website}
  \label{qnli:sub-first}
\end{subfigure}
\begin{subfigure}{0.45\textwidth}
  \centering
  % include second image
  \includegraphics[width=\linewidth]{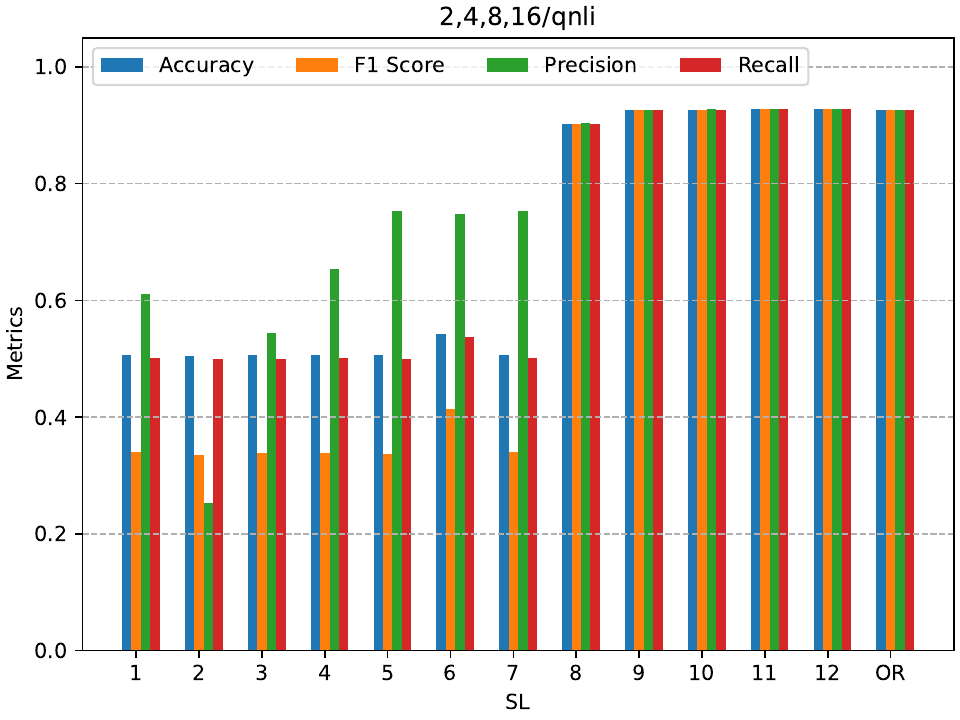}  
  % \caption{Its signal hierarchy representation}
  \label{qnli:sub-second}
\end{subfigure}
\begin{subfigure}{0.45\textwidth}
  \centering
  % include second image
  \includegraphics[width=\linewidth]{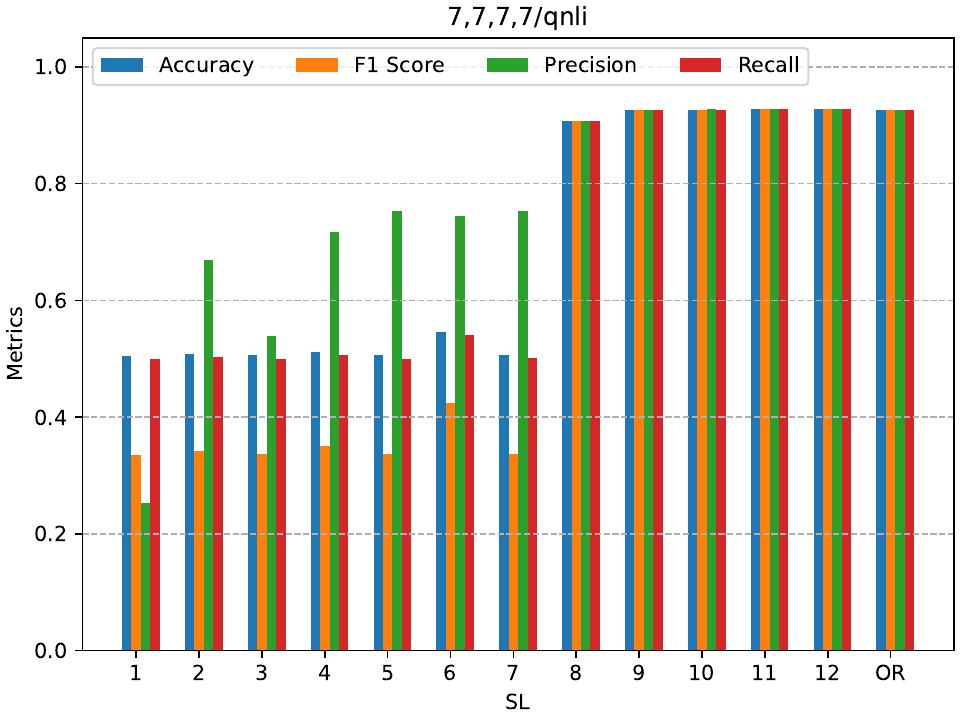}  
  % \caption{Its signal hierarchy representation}
  \label{qnli:sub-third}
\end{subfigure}
\begin{subfigure}{0.45\textwidth}
  \centering
  % include second image
  \includegraphics[width=\linewidth]{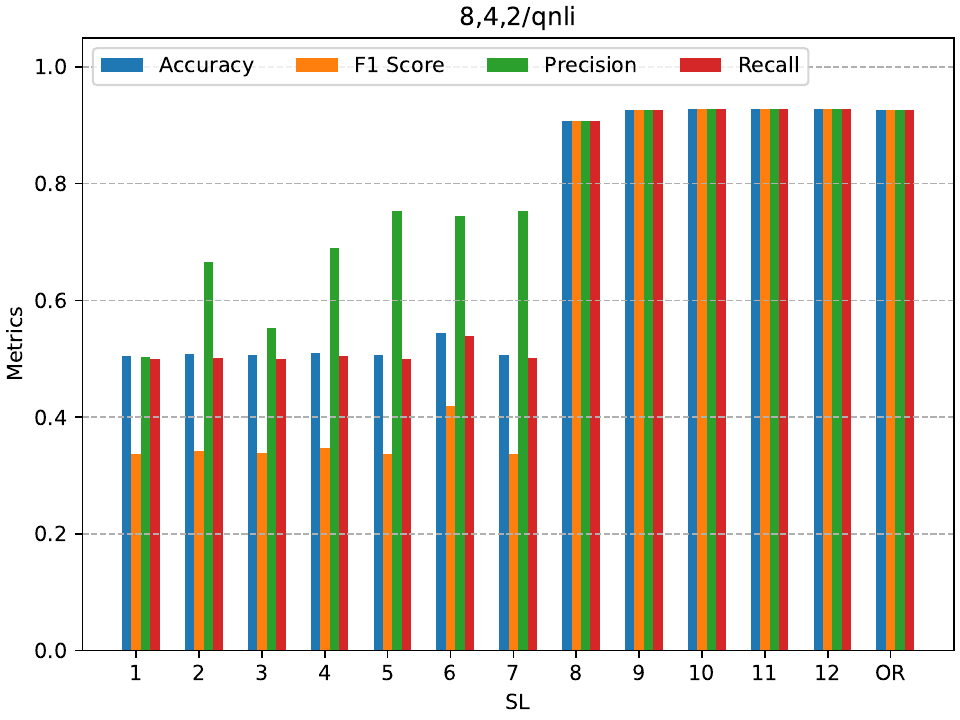}  
  % \caption{Its signal hierarchy representation}
  \label{qnli:sub-forth}
\end{subfigure}
\caption{The accuracy metrics of HSA approximation of self-attention layers in RoBERTa for the QNLI task.}
\label{fig:qnli}
\end{figure}

\textbf{CoLA Task}: As Figure \ref{fig:cola} shows, similar to the MRPC task, in CoLA task, the layers with even index seem to be way more sensitive to HSA approximation than the odd-index layers. However, unlike the MRPC task, the hierarchy structures with high branching factor on the bottom seem to significantly perform better than the ones with low branching factor on the bottom.

\begin{figure}[ht]
\centering
\begin{subfigure}{0.45\textwidth}
  \centering
  % include first image
  \includegraphics[width=\linewidth]{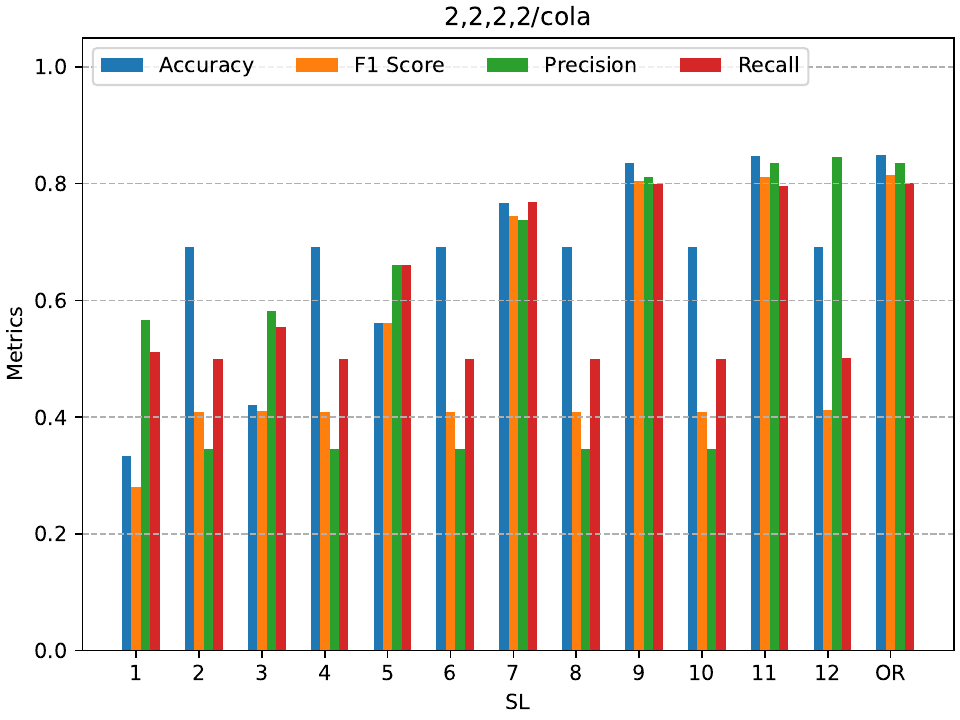}  
  % \caption{Nested signal for a website}
  \label{cola:sub-first}
\end{subfigure}
\begin{subfigure}{0.45\textwidth}
  \centering
  % include second image
  \includegraphics[width=\linewidth]{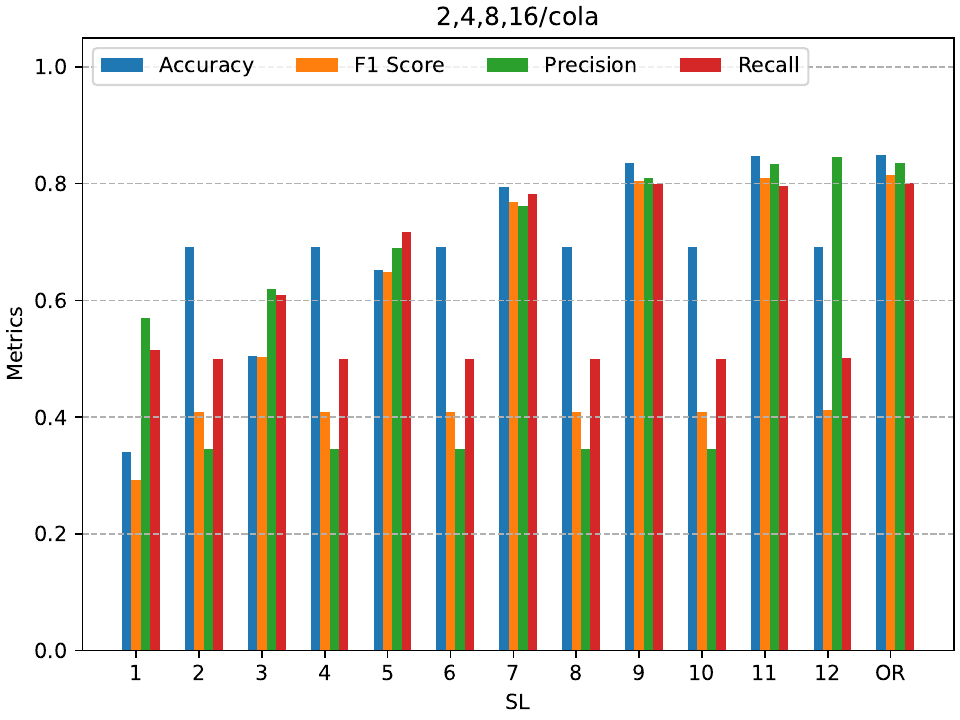}  
  % \caption{Its signal hierarchy representation}
  \label{cola:sub-second}
\end{subfigure}
\begin{subfigure}{0.45\textwidth}
  \centering
  % include second image
  \includegraphics[width=\linewidth]{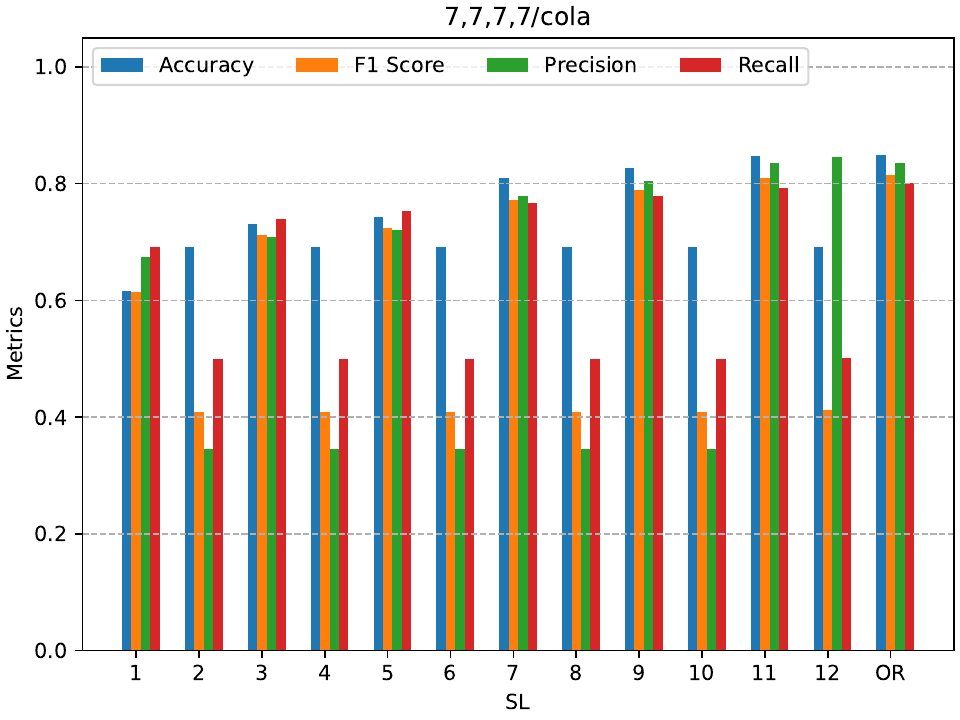}  
  % \caption{Its signal hierarchy representation}
  \label{cola:sub-third}
\end{subfigure}
\begin{subfigure}{0.45\textwidth}
  \centering
  % include second image
  \includegraphics[width=\linewidth]{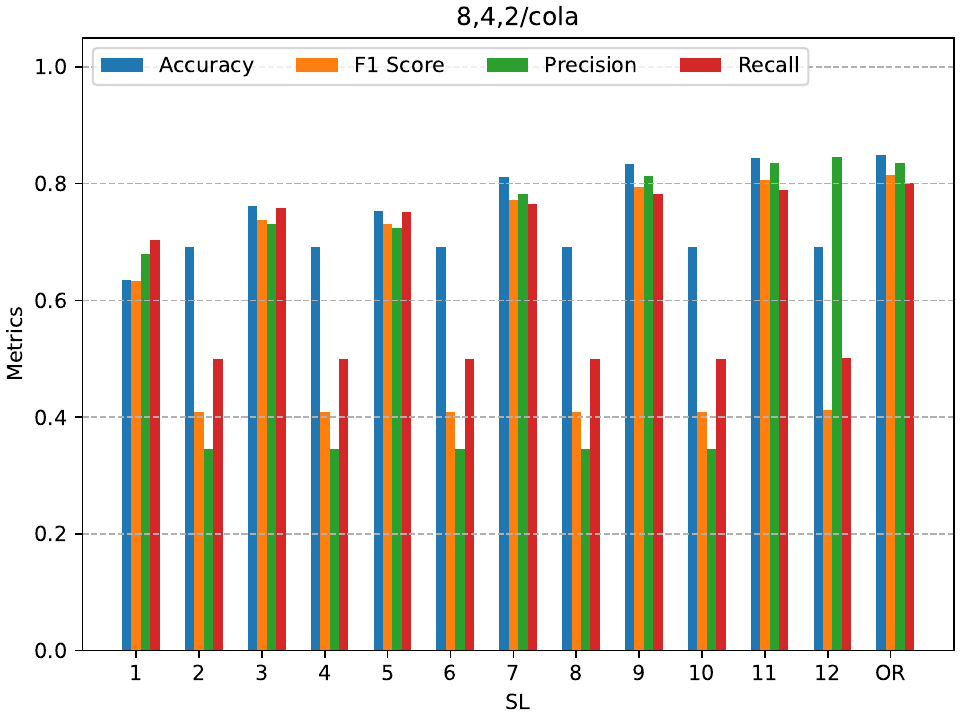}  
  % \caption{Its signal hierarchy representation}
  \label{cola:sub-forth}
\end{subfigure}
\caption{The accuracy metrics of HSA approximation of self-attention layers in RoBERTa for the CoLA task.}
\label{fig:cola}
\end{figure}

\textbf{AGNEWS Task}: As Figure \ref{fig:agnews} shows, for AGNEWS task, we can pretty much start SL at Layer 2 and as long as we approximate every other layer, the accuracy drop in insignificant. As for hierarchy structures, we have tested only 2 of our structures with this datasets, but did not observe any significant difference.

\begin{figure}[ht]
\centering
\begin{subfigure}{0.45\textwidth}
  \centering
  % include second image
  \includegraphics[width=\linewidth]{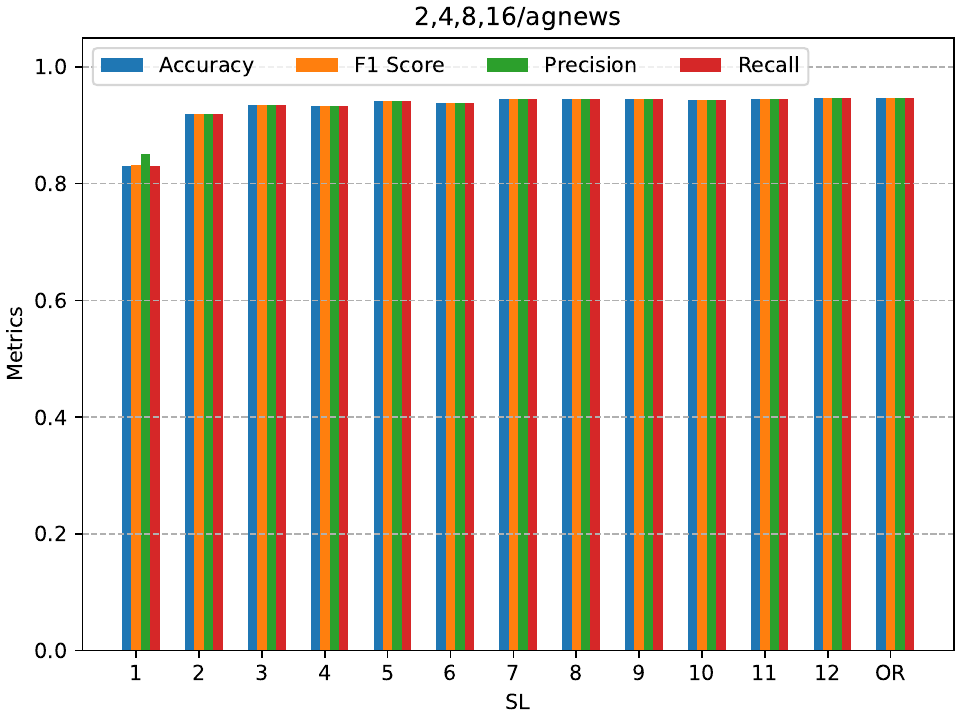}  
  % \caption{Its signal hierarchy representation}
  \label{agnews:sub-second}
\end{subfigure}
\begin{subfigure}{0.45\textwidth}
  \centering
  % include second image
  \includegraphics[width=\linewidth]{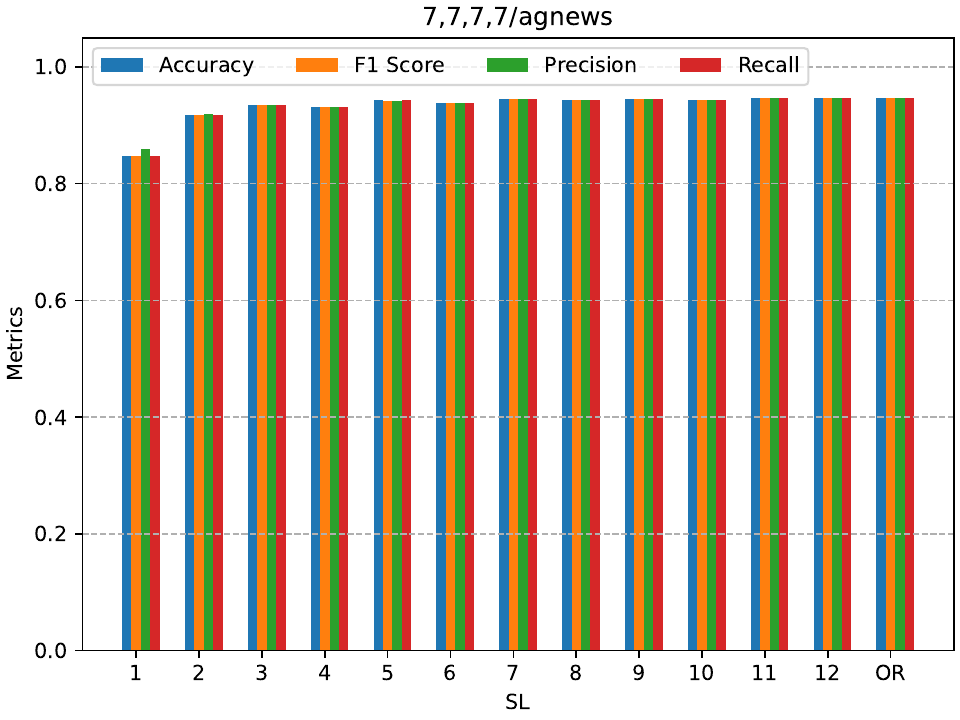}  
  % \caption{Its signal hierarchy representation}
  \label{agnews:sub-third}
\end{subfigure}
\caption{The accuracy metrics of HSA approximation of self-attention layers in RoBERTa for the AGNEWS task.}
\label{fig:agnews}
\end{figure}

\textbf{IMDB Task}: Unlike the previous tasks, for IMDB task, we use RoBERTa-large with 24 layers. As Figure \ref{fig:imdb} shows, as long as SL stays above Layer 15, the accuracy drop is insignificant. Also some layers like Layers 10 and 15 seem to be moresensitive if we start the HSA approximation from them. As for hierarchical structure, among the two candidate we used for this task, the one with high branching factor on the bottom seems to do much better.

\begin{figure}[ht]
\centering
\begin{subfigure}{.45\textwidth}
  \centering
  % include second image
  \includegraphics[width=\linewidth]{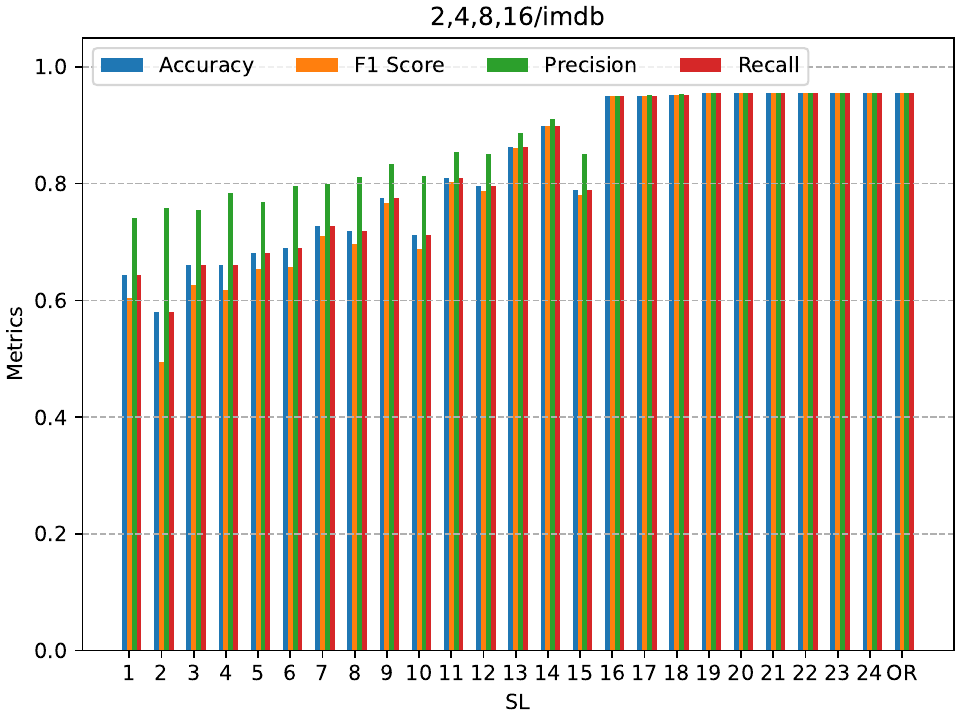}  
  % \caption{Its signal hierarchy representation}
  \label{imdb:sub-second}
\end{subfigure}
\begin{subfigure}{0.45\textwidth}
  \centering
  % include second image
  \includegraphics[width=\linewidth]{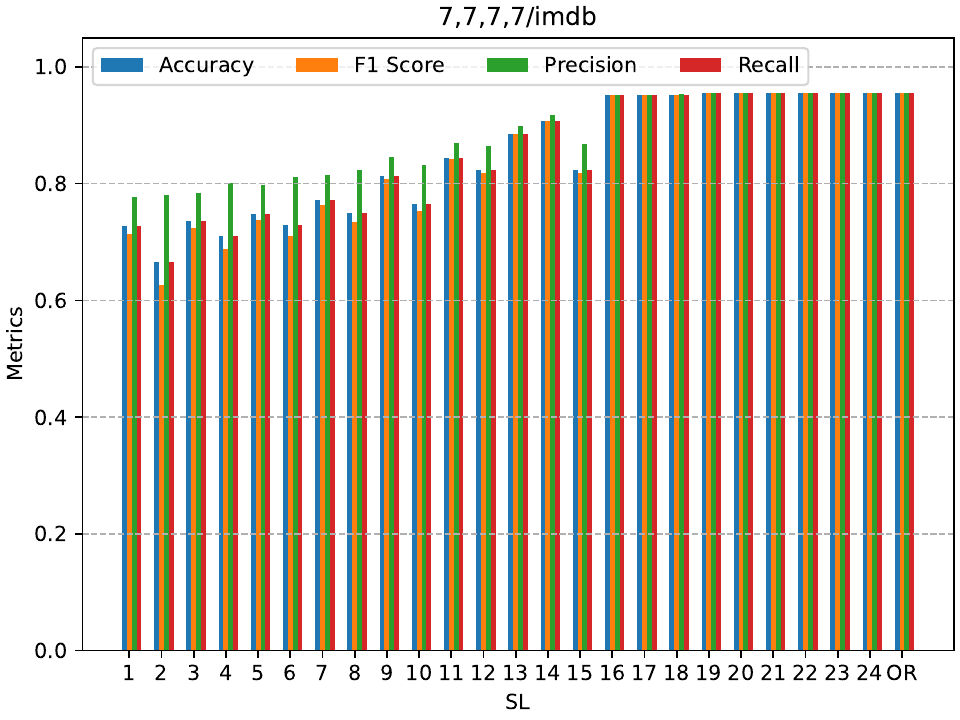}  
  % \caption{Its signal hierarchy representation}
  \label{imdb:sub-third}
\end{subfigure}
\caption{The accuracy metrics of HSA approximation of self-attention layers in RoBERTa for the IMDB task.}
\label{fig:imdb}
\end{figure}

\section{Limitations} \label{appendix:limitations}
In this section, we state some of the main limitations of the current work both in terms of the proposed framework itself as well as our process of evaluating it in this paper.

As for the HSA framework itself, we would like to emphasize that it only operates on \textit{tree-based}, compositional information hierarchies that are already \textit{given} or \textit{extractable} using a preprocessing procedure. In other words, HSA does \textit{not} automatically learn the hierarchical structure embedded in the data. Furthermore, our proposed framework does \textit{not} introduce any additional learnable parameter across the hierarchy on top of the standard self-attention parameters. While this is a useful feature in certain usecases such as zero-shot HSA replacement post-training as described in Section \ref{sec:zero-shot}, in some other scenarios, this would introduce a limitation in terms of the learning capacity of our framework. This also makes it quite challenging to perform a fair comparison of our framework against other hierarchical attention mechanisms as in pretty much all of those frameworks, extra hierarchy-realted, learnable parameters have been incorporated as a fundamental component of the framework.

As for our evaluation process in this paper, we understand that the scope of our empirical study does not include all mainstream applications, especially in the auto-regressive generation domain. However, we would like to note that this work primarily focuses on the theoretical foundations of the proposed nested signal data structure and its hierarchical attention mechanism. We hope that our work can be used as the foundation for many follow-up efforts focusing on various applications with potential technical extensions. Nevertheless, we have still provided the initial theoretical extensions for some of the major potential follow-ups to the present work, such as hierarchical auto-regressive generation (Appendix \ref{appendix:generation}) and hierarchical sparse attention (Appendix \ref{appendix:beyond}).

Finally, we would like to note that the goal of training foundational models using HSA to incorporate the inherent hierarchical inductive biases of our human knowledge base is indeed inspirational, as we do not have the computational resources required for that scale of training. However, from a theoretical point of view, our HSA method provides a plausible framework to achieve this goal in practice. 
\FloatBarrier
%%%%%%%%%%%%%%%%%%%%%%%%%%%%%%%%%%%%%%%%%%%%%%%%%%%%%%%%%%%%

% \input{Sections/neurips-checklist}

\end{document}